\def\publicpreprint{}
\documentclass{article} 
\usepackage[T1]{fontenc}
\usepackage{iclr2026_conference,times}

\usepackage{amsmath,amsfonts,bm}

\newcommand{\captiona}{{\em (a)}}
\newcommand{\captionb}{{\em (b)}}
\newcommand{\captionc}{{\em (c)}}

\def\eqref#1{equation~\ref{#1}}

\def\1{\bm{1}}

\DeclareMathAlphabet{\mathsfit}{\encodingdefault}{\sfdefault}{m}{sl}
\SetMathAlphabet{\mathsfit}{bold}{\encodingdefault}{\sfdefault}{bx}{n}

\newcommand{\E}{\mathbb{E}}

\DeclareMathOperator*{\argmax}{arg\,max}

\usepackage{booktabs}       
\usepackage{longtable}
\usepackage{array}          
\usepackage{amsfonts}       
\usepackage{microtype}      
\usepackage{xcolor}         
\usepackage{mdframed}       
\usepackage{graphicx}
\usepackage{placeins}       
\usepackage{hyperref}
\usepackage{url}
\hypersetup{colorlinks=true,linkcolor=blue,citecolor=blue,urlcolor=blue,
  pdftitle={Reinforcing the World's Edge: A Continual Learning Problem in the Multi-Agent-World Boundary}}

\ifdefined\publicpreprint
  \preprintcopy
  \hypersetup{
    pdfauthor={Dane Malenfant}
  }
\fi

\usepackage{amsmath}   
\usepackage{amsthm}    
\usepackage{amssymb}
\usepackage{enumitem}
\usepackage[capitalize,noabbrev]{cleveref}

\theoremstyle{plain}
\newtheorem{theorem}{Theorem}[section]
\newtheorem{lemma}{Lemma}[section]
\newtheorem{proposition}{Proposition}[section]
\newtheorem{corollary}{Corollary}[section]
\theoremstyle{definition}
\newtheorem{definition}{Definition}[section]
\newtheorem{assumption}{Assumption}[section]
\theoremstyle{remark}
\newtheorem{remark}{Remark}[section]

\newcommand{\Ind}{\mathbf{1}}
\newcommand{\Subseq}{\preccurlyeq} 
\newcommand{\norm}[1]{\left\lVert #1 \right\rVert}
\newcommand{\proofstep}[1]{\par\smallskip\noindent\textbf{#1.}\enspace}
\definecolor{coreaccent}{RGB}{0,114,178}
\definecolor{coreshade}{RGB}{242,248,252}
\definecolor{stabilityaccent}{RGB}{213,94,0}
\definecolor{stabilityshade}{RGB}{253,247,242}
\definecolor{valueaccent}{RGB}{0,135,105}
\definecolor{valueshade}{RGB}{242,250,247}
\newmdenv[backgroundcolor=coreshade,linecolor=coreaccent,
  linewidth=1.4pt,leftline=true,rightline=false,topline=false,bottomline=false,
  innerleftmargin=8pt,innerrightmargin=8pt,innertopmargin=5pt,innerbottommargin=5pt,
  skipabove=5pt,skipbelow=5pt]{corekey}
\newmdenv[backgroundcolor=stabilityshade,linecolor=stabilityaccent,
  linewidth=1.4pt,leftline=true,rightline=false,topline=false,bottomline=false,
  innerleftmargin=8pt,innerrightmargin=8pt,innertopmargin=5pt,innerbottommargin=5pt,
  skipabove=5pt,skipbelow=5pt]{stabilitykey}
\newmdenv[backgroundcolor=valueshade,linecolor=valueaccent,
  linewidth=1.4pt,leftline=true,rightline=false,topline=false,bottomline=false,
  innerleftmargin=8pt,innerrightmargin=8pt,innertopmargin=5pt,innerbottommargin=5pt,
  skipabove=5pt,skipbelow=5pt]{valuekey}

\newcommand{\Sa}{\mathcal{S}}
\newcommand{\Aa}{\mathcal{A}}
\newcommand{\Gg}{\mathcal{G}}
\newcommand{\Mm}{\mathcal{M}}
\newcommand{\MarkovGame}{\mathfrak{G}}
\newcommand{\Sigmaa}{\Sigma}
\newcommand{\Core}{\mathrm{Core}}
\newcommand{\Corerob}{\Core_{\mathrm{rob}}}
\newcommand{\HH}{\mathcal{H}}
\newcommand{\TV}{\mathrm{TV}}
\newcommand{\suc}{\mathsf{S}}
\crefname{assumption}{Assumption}{Assumptions}
\Crefname{assumption}{Assumption}{Assumptions}

\title{Reinforcing the World's Edge:\\
A Continual Learning Problem in the\\
Multi-Agent-World Boundary}

\author{Dane Malenfant \\ School of Computer Science\\McGill University\\Mila - The Qu\'ebec AI Institute\\
  \texttt{dane.malenfant@mail.mcgill.ca} \\
}

\begin{document}

\maketitle

\begin{abstract}
In a stationary decentralized Markov game, learning peers generate an episode-indexed sequence of induced MDPs for any focal agent. The joint game remains stationary while the focal agent's rewards and dynamics drift, forming an agent-centric continual reinforcement-learning problem. If peer policies are fixed within an episode, marginalizing them preserves the focal trajectory law and expected return for every focal policy. The focal agent need not itself be learning: peer updates induce the continual process. Success-conditioned reusable structure may therefore degrade across episodes. An \emph{invariant core} represents such structure through maximal abstract patterns appearing in a high fraction of successful focal trajectories. Here, invariant means shared across successes under a fixed trajectory law. Survival certifies a fixed pattern's coverage rather than an unchanged maximal frontier. We connect this structure's lifetime to peer learning and policy reuse. An established sharp total-variation conditioning bound limits coverage loss to $\frac{\varepsilon}{p_0}$, where $\varepsilon$ is trajectory-law drift, $p_0$ is reference success mass, and drift is smaller than the initial compatible-success mass. Combining this bound with peer-policy movement gives a certified $\Omega(\frac{1}{\eta})$ survival horizon for a candidate with positive coverage margin. Under an explicit effective-conflict condition---satisfied by exact policy gradient in an analytic class---the first-exit time is $\Theta(\frac{1}{\eta})$. With calibrated success mass and executability, the same certificate yields policy-value, library-selection, and transfer-regret guarantees.
An exactly solvable corridor confirms the structural predictions, including the inverse-rate lifetime ($R^2>0.9999$). Two registered 64-stream studies in continual control and cue-MNIST show that core erosion predicts impending failure and enables near-oracle intervention; an exploratory reanalysis of eight learned-partner Level-Based Foraging development pairings suggests the same erosion--failure link under peer learning.
\end{abstract}

\section{Introduction}
Standard stationary-MDP formulations model an agent interacting with a fixed world \citep{SuttonBarto2018}. In ordinary Dec-MARL, the joint Markov game may remain stationary while the transitions and rewards experienced by any one agent drift as its partners update. This peer-induced non-stationarity is a central difficulty of multi-agent learning \citep{Littman1994,claus1998dynamics,bowling2002multiagent}, often addressed through replay correction, opponent modeling, or recursive reasoning.

Consider one \emph{focal} agent in a stationary decentralized Markov game. During episode $e$, use the standard best-response lens \citep{ShohamLeytonBrown2009,Busoniu2008} to marginalize the peers' fixed policies into the transition and reward functions. The resulting single-agent MDP exactly matches the focal trajectory law and expected return (\Cref{prop:reduction}). As the peers update between episodes, these induced MDPs form the focal agent's continual reinforcement-learning problem \citep{khetarpal2020towards}. Peer-policy movement becomes an environment-variation budget, which scales with the learning rate under bounded smooth updates. The object of study is this agent-centric continual problem: the multi-agent system supplies the endogenous change. In the language of the agent--world boundary \citep{abel2025agency}, decentralized peer learning makes the boundary move.

Which reusable structure persists under peer updates? Conditioning on success is consequential: when the reference success mass is $p_0$, unconditional trajectory drift can be amplified by a factor of $\frac{1}{p_0}$. This sharp conditioning relation is established in imprecise-probability theory \citep[Proposition~3.2]{pelessoni2022dilation}. We combine it with peer-learning dynamics to derive finite-time structural survival, first-exit, value, and transfer guarantees.

\begin{figure}[t]
\centering
\includegraphics[width=0.97\linewidth]{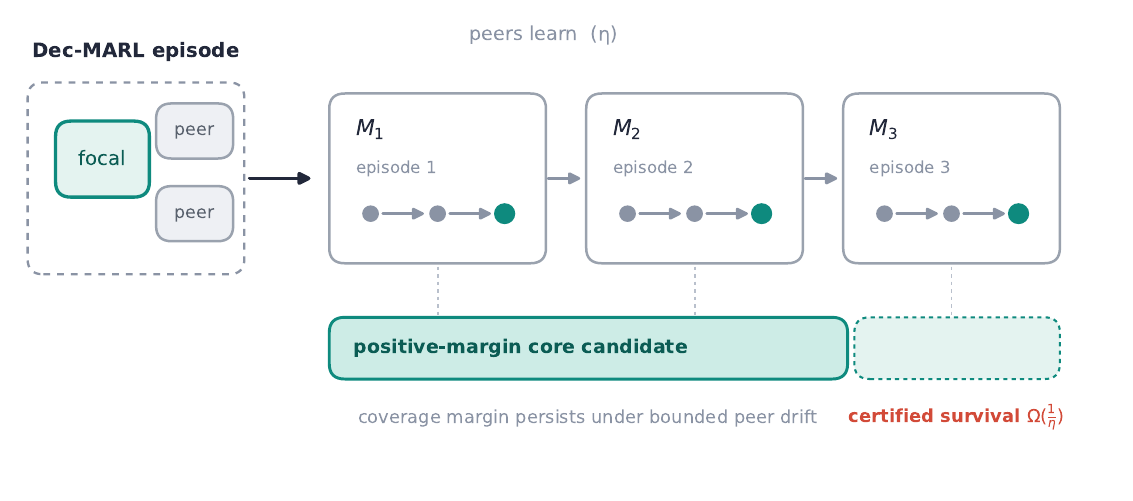}
\caption{\textbf{Structural lifetime under peer learning.} The joint Markov game remains stationary, while peer updates generate an episode-indexed sequence of induced MDPs for the focal agent. The sharp success-conditioning bound then certifies an $\Omega(\frac{1}{\eta})$ survival horizon for a fixed core candidate with positive coverage margin.}
\label{fig:reduction}
\end{figure}
\FloatBarrier

The \emph{invariant core} tracks success-conditioned reusable structure through maximal abstract patterns appearing in a high fraction of successful focal trajectories. \Cref{fig:reduction} summarizes the chain from peer-policy drift to induced-MDP variation, sharp conditional coverage loss, structural lifetime, and policy value. \Cref{thm:stab} gives a certified $\Omega(\frac{1}{\eta})$ lifetime; under the effective-conflict condition of \Cref{ass:progress}, realized analytically in \Cref{prop:p2example}, the first-exit time is $\Theta(\frac{1}{\eta})$. The empirical tests measure the chain's predictive and intervention consequences.

Combining the focal induced-MDP equivalence (\Cref{prop:reduction}) with the stability theorem (\Cref{thm:stab}) yields the central coverage bound. While its positive-margin premise holds, the success-conditioned coverage of a fixed candidate $u$ under a fixed focal probe obeys
\[
\begin{gathered}
c_{\mu_E}(u)\ge c_{\mu_0}(u)-\frac{\varepsilon_E^{\mathrm{peer}}}{p_0},
\qquad \varepsilon_E^{\mathrm{peer}}<p_0c_{\mu_0}(u),\\
\varepsilon_E^{\mathrm{peer}}
:=(H-1)\sum_{e=1}^{E}\sup_s
\TV\!\left(\pi_2^e(\cdot\mid s),\pi_2^{e-1}(\cdot\mid s)\right).
\end{gathered}
\tag{$\star$}
\]
Here $\pi_2^e$ is the one-peer specialization; \Cref{prop:reduction} gives the joint-peer version. Inequality ($\star$) combines established conditional-probability control with cumulative peer-policy movement (\Cref{tab:novelty}). Its structural consequence is a lifetime certificate: if each peer update moves by at most $c\eta$, a candidate with target slack $\bar\delta$ and margin $m$ survives for at least
\[
E_{\mathrm{exit}}^{\bar\delta}(u)
\;\ge\;
\left\lfloor\frac{mp_0}{c\eta(H-1)}\right\rfloor+1
\]
episodes (\Cref{cor:rate}). \Cref{thm:twosided} supplies the matching conditional order, while \Cref{thm:value,cor:value-survival} convert survival into value and transfer guarantees.

\paragraph{Results.}
\begin{enumerate}[topsep=2pt,itemsep=1pt,leftmargin=1.4em]
\item \textbf{Structural lifetimes under peer learning.} The coverage analysis separates positive-margin stability from exact-core discontinuity and yields unconditional $\Omega(\frac{1}{\eta})$ survival and a non-vacuous conditional $\Theta(\frac{1}{\eta})$ first-exit law (\Cref{prop:impossible,thm:stab,cor:rate,thm:twosided}).
\item \textbf{Performance consequences.} Calibrated success mass and executability convert surviving structure into policy-value, library-selection, and transfer-regret guarantees (\Cref{sec:value}).
\item \textbf{Downstream consequence and validation.} Exact population studies verify degradation, lifetime, and sharpness; two registered 64-stream confirmations establish predictive and near-oracle intervention value; an exploratory learned-peer LBF reanalysis links core erosion to old-partner failure (\Cref{sec:exp,app:exp}).
\end{enumerate}
\FloatBarrier

\section{Related Work}
\begin{table}[!h]
\centering
\caption{\textbf{From established tools to structural lifetimes.} The analysis combines conditional-probability control with peer-induced variation to quantify survival and performance consequences.}
\label{tab:novelty}
\small
\setlength{\tabcolsep}{3pt}
\renewcommand{\arraystretch}{1.08}
\begin{tabular}{@{}>{\raggedright\arraybackslash}p{0.18\linewidth} >{\raggedright\arraybackslash}p{0.32\linewidth} >{\raggedright\arraybackslash}p{0.42\linewidth}@{}}
\toprule
Literature & Established object & Role in the structural-lifetime analysis \\
\midrule
Planning landmarks & Necessary facts and orderings within a fixed planning task \citep{hoffmann2004ordered,richter2010lama}. & Quantitative coverage of abstract structure among stochastic successes under a changing trajectory law. \\
Frequent sequence mining & Discovery of constrained subsequences above an empirical support threshold \citep{girgin2005option,beedkar2019desq}. & A transition- and peer-drift bound on conditional support loss, yielding a certified survival time. \\
Total-variation conditioning & Sharp conditional-probability envelopes in imprecise-probability models \citep{pelessoni2022dilation}. & The established factor $\frac{1}{p_0}$ converts trajectory drift into coverage loss; accumulated peer drift then yields a structural lifetime. \\
Non-stationary MDPs & Dynamic-regret guarantees from reward and transition variation budgets \citep{EvenDar2009,pmlr-v119-cheung20a,pmlr-v139-mao21b}. & Variation controls the survival of reusable structure and, with calibration, its policy-value and transfer consequences. \\
MARL non-stationarity & Focal-agent or best-response induced MDPs, plus algorithms compensating for co-learning and replay shift \citep{Littman1994,ShohamLeytonBrown2009,Busoniu2008,foerster2017stabilising,Foerster2018LOLA}. & A peer-update budget combined with sharp conditional coverage yields an explicit structural lifetime and downstream decision guarantees. \\
Continual RL & Formalizations and taxonomies of ongoing adaptation under non-stationarity \citep{khetarpal2020towards,Elelimy2023ContinualRL}. & A success-conditioned structural witness whose coverage, lifetime, and value consequences are controlled under the continual sequence. \\
\bottomrule
\end{tabular}
\end{table}

The conditional inequality in \Cref{lem:condcov} specializes the total-variation model of \citet[Section~2.1 and Proposition~3.2]{pelessoni2022dilation}; its short proof is retained for self-containment. The contribution developed here is the structural-lifetime analysis: cumulative peer updates control coverage erosion, positive margins yield survival horizons, effective conflict gives a matching first-exit order, and calibrated execution connects survival to policy value and transfer. The focal induced-MDP representation identifies the source of variation, while established conditioning control supplies the probabilistic link in this chain.

Planning landmarks, RL bottlenecks, probabilistic landmarks, and sequence search supply useful candidate structures \citep{mcgovern2001automatic,simsek2009skill,chan2025landmark}. State abstraction \citep{10.5555/1867406.1867423,li2006towards,pmlr-v48-abel16} shrinks the alphabet, while options \citep{SuttonPrecupSingh1999,konidaris2009skill} supply executors. Our stability analysis transports these candidates across episodes; executability remains explicit in \Cref{sec:exec}.

Work on co-adapting agents addresses non-stationarity through replay correction, opponent modeling, and recursive reasoning \citep{hernandezleal2017survey,papoudakis2019dealing,He2016Opponent,Raileanu2018Modeling}. Zero-shot coordination and ad hoc teamwork seek partner-robust behavior \citep{hu2020other,lupu2021trajectory,stone2010adhoc}; the robust core (\Cref{def:robust}) identifies structure shared across success-enabling partners. Inequality ($\star$) adds a finite-time survival guarantee for that structure under peer learning.
\FloatBarrier

\section{The Invariant Core in Stationary Single-Agent MDPs}\label{sec:core}
The invariant core is defined first in a stationary MDP; \Cref{sec:marl} then introduces learning peers and the resulting environment drift. The two-route corridor of \Cref{fig:env} provides a running instance whose shared start-to-goal structure is captured by the core.

Let $\Mm=(\Sa,\Aa,P,R,H,\Gg)$ be a finite-horizon goal-conditioned MDP with finite $\Sa,\Aa$, start state $s_0\notin\Gg$, and successful trajectories $\suc$ terminating on first entry into $\Gg$. Here $H$ is the maximum number of labels, including the terminal label, so there are at most $H-1$ stochastic transitions. Write $u\Subseq v$ when string $u$ is a not-necessarily-contiguous subsequence of $v$.

An optional abstraction $\phi:\Sa\times\Aa\to\Sigmaa$ maps each decision pair to an abstract symbol \citep{li2006towards}. For the raw trajectory above, the induced string contains $\phi(s_t,a_t)$ for $0\le t<T$ and, on success, appends a distinguished terminal symbol $g_\Sigma$ for the transition into $\Gg$. Hence every string lies in $\Sigmaa^{\le H}$ (the empty string included), and every $\tau\in\suc$ satisfies $g_\Sigma\Subseq\phi(\tau)$. The identity abstraction recovers the raw decision-pair alphabet $\Sa\times\Aa$ together with $g_\Sigma$.

\begin{corekey}
\begin{definition}[Coverage and the $\delta$-core]\label{def:core}
Let $\mu$ be a reference distribution over trajectories (e.g.\ induced by a fixed \emph{probe} policy in $\Mm$), with $\mu(\suc)>0$. For $u\in\Sigmaa^{\le H}$ define its \emph{coverage}
\[
c_\mu(u)\;=\;\Pr_{\tau\sim\mu}\big[\,u\Subseq\phi(\tau)\,\mid\,\tau\in\suc\,\big].
\]
For $\delta\in[0,1)$ the \emph{high-coverage family} and the \emph{$\delta$-core} are
\[
\HH^\delta_\phi(\mu)\;=\;\big\{\,u\in\Sigmaa^{\le H}\;:\;c_\mu(u)\ge 1-\delta\,\big\},
\qquad
\Core^\delta_\phi(\mu)\;=\;\max_{\Subseq}\HH^\delta_\phi(\mu),
\]
the $\Subseq$-maximal elements of the family. The \emph{support core} is $\delta{=}0$. When $\mu$ has full support on $\suc$, $\HH^0_\phi(\mu)$ is the universal family $\{u:\forall\tau\in\suc,\,u\Subseq\phi(\tau)\}$ of subsequences shared by all successes, and $\Core^0_\phi(\mu)$ is its maximal frontier.
\end{definition}
\end{corekey}

In words: coverage $c_\mu(u)$ is the fraction of \emph{successful} trajectories in which the pattern $u$ appears; the family $\HH^\delta_\phi(\mu)$ collects the patterns present in all but a $\delta$-fraction of successes; and the core is the shortlist of maximal such patterns, from which every family member is recovered as a subsequence.

\begin{lemma}[The core generates its family]\label{lem:generate}
$\HH^\delta_\phi(\mu)$ is hereditary: if $u\Subseq w$ and $w\in\HH^\delta_\phi(\mu)$ then $u\in\HH^\delta_\phi(\mu)$. Consequently $\Core^\delta_\phi(\mu)$ is its finite generating frontier,
\[
\HH^\delta_\phi(\mu)\;=\;\big\{\,u:\exists\,v\in\Core^\delta_\phi(\mu),\ u\Subseq v\,\big\}.
\]
\end{lemma}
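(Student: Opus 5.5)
The plan has two parts: first heredity, then the generating-frontier identity, which follows from finiteness and well-foundedness of $\Subseq$ on $\Sigmaa^{\le H}$.

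\textbf{Heredity.} Suppose $u\Subseq w$ and $w\in\HH^\delta_\phi(\mu)$. The relation $\Subseq$ is transitive: composing the two increasing index embeddings gives an increasing embedding. So the event $\{w\Subseq\phi(\tau)\}$ is contained in $\{u\Subseq\phi(\tau)\}$. Intersecting both events with $\suc$ and dividing by $\mu(\suc)>0$ gives monotonicity of coverage,
\[
c_\mu(u)\ge c_\mu(w)\ge 1-\delta .
\]
Hence $u\in\HH^\delta_\phi(\mu)$. This step is routine.

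\textbf{Frontier identity.} For the inclusion ``$\supseteq$'', take $u\Subseq v$ with $v\in\Core^\delta_\phi(\mu)\subseteq\HH^\delta_\phi(\mu)$; heredity gives $u\in\HH^\delta_\phi(\mu)$.

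For ``$\subseteq$'', fix $u\in\HH^\delta_\phi(\mu)$ and exhibit a maximal element above it. The family is a subset of $\Sigmaa^{\le H}$, which is finite because the abstract alphabet $\Sigmaa$ is finite. I will take $\Sigmaa$ to be the image of the finite set $\Sa\times\Aa$ under $\phi$ together with $g_\Sigma$; if it is not, I restrict to symbols that can actually occur, since any string containing an impossible symbol has coverage $0<1-\delta$. On strings, $\Subseq$ is a partial order: it is antisymmetric because mutual subsequences have equal length and are therefore equal. Any strict chain $u\prec w_1\prec w_2\prec\cdots$ strictly increases length, so it has at most $H+1$ elements. Choose $v\in\HH^\delta_\phi(\mu)$ with $u\Subseq v$ and $|v|$ maximal among such elements; this set is nonempty because it contains $u$. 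If some $v'\in\HH^\delta_\phi(\mu)$ had $v\prec v'$, then $u\Subseq v'$ by transitivity and $|v'|>|v|$, contradicting the choice of $v$. So $v$ is $\Subseq$-maximal, that is, $v\in\Core^\delta_\phi(\mu)$.

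\textbf{Finiteness, and the main obstacle.} Finiteness of the core follows because it is a subset of the finite set $\Sigmaa^{\le H}$; it is also an antichain. The only real obstacle is the finiteness of $\Sigmaa$. If $\Sigmaa$ were infinite, finiteness of the core would instead come from a different argument. Each element of $\HH^\delta_\phi(\mu)$ with $\delta<1$ has positive coverage, so it embeds into some $\phi(\tau)$ with $\tau\in\suc$. The set $\suc$ is finite, since $\Sa$ and $\Aa$ are finite and the horizon is $H$, and each such $\phi(\tau)$ has finitely many subsequences. I would present this bound as the primary argument, since it also keeps maximal chains of bounded length regardless of how $\Sigmaa$ is specified.
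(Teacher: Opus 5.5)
Your proposal is correct and follows the paper's proof. Heredity comes from transitivity of $\Subseq$, which makes coverage monotone; for each $u$ in the family you take a maximum-length high-coverage extension, which is $\Subseq$-maximal, and heredity gives the reverse inclusion. Your explicit finiteness argument (for $\delta<1$, every member embeds in $\phi(\tau)$ for one of the finitely many successful trajectories) spells out the ``finitely many high-coverage extensions'' step that the paper takes for granted.
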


Thus membership means $u\in\HH^\delta_\phi(\mu)$, equivalently $u\Subseq v$ for some frontier element. Proofs are in \Cref{app:proofs,app:monitoring}.

\subsection{Existence and non-triviality}
For the graph characterization below, call a successful trajectory \emph{feasible} when every displayed action has positive transition probability along it.  The probe has \emph{full conditional support} when every feasible success has positive $\mu$-mass.  Define the \emph{action-labelled time-unrolled success graph} $G_{\suc}$ as follows: its vertices are $(s,t)$, and it has an edge
$(s,t)\xrightarrow{a}(s',t+1)$ whenever $P(s'\mid s,a)>0$ and that step lies on a $\mu$-positive success; the edge carries label $\phi(s,a)$.  Because time and action are retained, every $s_0$--$\Gg$ path in $G_{\suc}$ is a feasible successful trajectory, and full conditional support gives it positive $\mu$-mass.  Conversely, every $\mu$-positive success traces such a path.  A set of labelled edges is a \emph{step-cut} when it intersects every one of these paths.

\begin{theorem}[Existence and non-triviality]\label{thm:exist}
Fix $\mu$ with $\mu(\suc)>0$.
\begin{enumerate}[topsep=2pt,itemsep=1pt,label=(\alph*),leftmargin=1.6em]
\item \textbf{(Existence)} For every $\delta\in[0,1)$, some frontier element of $\Core^\delta_\phi(\mu)$ contains $g_\Sigma$; in particular $\Core^\delta_\phi(\mu)\ne\varnothing$.
\item \textbf{(Non-triviality)} A non-terminal symbol $\sigma$ with $c_\mu(\sigma)=1$ is a \emph{$\phi$-bottleneck}; some frontier element contains $(\sigma,g_\Sigma)$ as a subsequence and therefore strictly extends $(g_\Sigma)$. Under full conditional support, $\sigma$ is a bottleneck iff its labelled steps cut every start--goal path in the time-unrolled success graph.
\item A list of non-terminal bottleneck occurrences $\sigma_1,\dots,\sigma_m$ is \emph{order-consistent} if every $\mu$-positive success $\tau$ has positions $i_1<\dots<i_m$ with $\phi(\tau)_{i_j}=\sigma_j$. Symbols may repeat, but their occurrence positions must be distinct. Some element of $\Core^0_\phi(\mu)$ then contains $(\sigma_1,\dots,\sigma_m,g_\Sigma)$ and has length at least $m+1$.
\end{enumerate}
\end{theorem}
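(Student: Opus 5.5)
The plan is to lean on \Cref{lem:generate}: $\HH^\delta_\phi(\mu)$ is hereditary and generated by its frontier. We also use finiteness: strings have length at most $H$ over a finite alphabet, so $\Sigmaa^{\le H}$ is finite. Hence every member of $\HH^\delta_\phi(\mu)$ lies below some $\Subseq$-maximal member. Each part then reduces to exhibiting one string $w\in\HH^\delta_\phi(\mu)$ and taking a maximal element $v\succcurlyeq w$. Any such $v$ contains every subsequence of $w$, because $\Subseq$ is transitive.

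\textbf{(a)} Every $\tau\in\suc$ satisfies $g_\Sigma\Subseq\phi(\tau)$, so $c_\mu(g_\Sigma)=1\ge 1-\delta$. Thus $(g_\Sigma)\in\HH^\delta_\phi(\mu)$. The finite poset $(\HH^\delta_\phi(\mu),\Subseq)$ then yields a maximal $v$ with $g_\Sigma\Subseq v$, which gives nonemptiness.

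\textbf{(c)} We treat (c) before (b), since (b)'s first claim is the case $m=1$ of it. Let $\tau$ be $\mu$-positive and successful, with positions $i_1<\dots<i_m$ as in order-consistency. These positions carry non-terminal symbols, so they precede the final terminal position of $\phi(\tau)$, where $g_\Sigma$ sits. Appending that position shows $w:=(\sigma_1,\dots,\sigma_m,g_\Sigma)\Subseq\phi(\tau)$. Successes of $\mu$-mass zero do not affect the conditional probability, so $c_\mu(w)=1$ and $w\in\HH^0_\phi(\mu)$. A maximal $v\succcurlyeq w$ has length at least $|w|=m+1$.

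\textbf{(b)} For the first claim, $c_\mu(\sigma)=1$ means $\sigma$ occurs in almost every success, so the one-element list is order-consistent. Applying (c) with $m=1$ gives $w=(\sigma,g_\Sigma)\in\HH^0_\phi(\mu)\subseteq\HH^\delta_\phi(\mu)$. A maximal $v\succcurlyeq w$ has length at least 2 and therefore strictly extends $(g_\Sigma)$.

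For the graph equivalence, assume full conditional support and use the stated bijection between $s_0$--$\Gg$ paths in $G_\suc$ and $\mu$-positive feasible successes. The labelled steps of $\sigma$ are the edges $(s,t)\xrightarrow{a}(s',t+1)$ with $\phi(s,a)=\sigma$.
\begin{itemize}
\item ($\Rightarrow$) Every path corresponds to a $\mu$-positive success, and $c_\mu(\sigma)=1$ forces that success to contain $\sigma$. The occurrence is at some step $(s_t,a_t)$ of the path, so every path meets a $\sigma$-labelled edge.
\item ($\Leftarrow$) Suppose the $\sigma$-labelled edges form a step-cut. Every $\mu$-positive success traces a path, which meets such an edge, so $\sigma\Subseq\phi(\tau)$. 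The successes missing $\sigma$ therefore have $\mu$-mass zero, and $c_\mu(\sigma)=1$.
\end{itemize}

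\textbf{Main obstacle.} The delicate step is the measure-zero bookkeeping in the graph equivalence. Coverage ignores null successes, while the graph is built only from $\mu$-positive steps. One must check that every path of $G_\suc$ is itself $\mu$-positive; otherwise a path stitched from edges of different positive successes could avoid the cut. Full conditional support is exactly what rules this out, and it is the one place that hypothesis is needed. I would check carefully that the forward direction does not rely on full support in any other way.
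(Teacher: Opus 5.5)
Your proposal is correct and follows essentially the same proof as the paper. The goal symbol gives existence, and the common string $(\sigma_1,\dots,\sigma_m,g_\Sigma)$ is extended to a maximal element of the finite hereditary family. The cut equivalence is read off from the path--success correspondence in $G_{\suc}$, with full conditional support needed only for the direction from coverage one to the cut. The one difference is cosmetic: you prove (c) first and get the first claim of (b) as its case $m=1$, whereas the paper argues (b) directly and then repeats the same argument for (c).
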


The goal symbol makes existence immediate; the content is the step-cut criterion. In key--door tasks, for example, ``find key'', ``reach door'', and ``open door'' can form ordered bottlenecks \citep{ChevalierBoisvert2018Minigrid,Hung2019TVT,Sun2023Contrastive}.

\subsection{Sound pruning and execution requirements}\label{sec:exec}
\begin{proposition}[Soundness (admissible pruning)]\label{prop:sound}
Suppose $\mu$ has full conditional support on the optimal successes (every optimal $\tau^\star$ has $\mu(\tau^\star)>0$). For any $u\in\HH^0_\phi(\mu)$ and any $\tau\in\suc$ with $\mu(\tau)>0$, $u\Subseq\phi(\tau)$; in particular every optimal successful trajectory contains $u$, so restricting search to $\{\tau:\forall u\in\Core^0_\phi(\mu),\,u\Subseq\phi(\tau)\}$ preserves \emph{every} optimal successful trajectory (in particular an optimum). Without the support hypothesis the guarantee is probe-relative: pruning preserves an optimum only among $\mu$-positive successes.
\end{proposition}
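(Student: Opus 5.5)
The argument unwinds the definition of $\HH^0_\phi(\mu)$ and then the hereditary/generation structure of \Cref{lem:generate}. Nothing deep is needed; the work is in keeping the quantifiers straight.

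First, fix $u\in\HH^0_\phi(\mu)$, so $c_\mu(u)=1$. Conditional on $\suc$, the event $\{u\not\Subseq\phi(\tau)\}$ then has $\mu$-probability zero. Since trajectories of a finite-horizon MDP with finite $\Sa,\Aa$ form a countable (indeed finite) set, $\mu$ is a discrete measure. So any single success $\tau$ with $\mu(\tau)>0$ and $u\not\Subseq\phi(\tau)$ would give the failure event mass at least $\mu(\tau)/\mu(\suc)>0$, which is a contradiction. Hence $u\Subseq\phi(\tau)$ for every $\mu$-positive success. This is the first claim.

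Second, every optimal successful trajectory $\tau^\star$ lies in $\suc$, and by the full-conditional-support hypothesis it has $\mu(\tau^\star)>0$. The first step therefore gives $u\Subseq\phi(\tau^\star)$ for all $u\in\HH^0_\phi(\mu)$. Specializing to the frontier: \Cref{lem:generate} gives $\Core^0_\phi(\mu)\subseteq\HH^0_\phi(\mu)$, since the maximal elements belong to the family. So each $\tau^\star$ satisfies $u\Subseq\phi(\tau^\star)$ for every $u\in\Core^0_\phi(\mu)$, and thus lies in the pruned set. If an optimum exists among successes, it is preserved; existence follows from finiteness of the trajectory set, provided some success is feasible, which $\mu(\suc)>0$ ensures. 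Note also that checking only the frontier loses nothing: by \Cref{lem:generate}, containing every frontier element implies containing every member of $\HH^0_\phi(\mu)$, because subsequence inclusion is transitive.

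Third, without the support hypothesis, the same argument applies verbatim to any $\mu$-positive success. Take an optimum over the finite set of $\mu$-positive successes; it has positive mass, so it contains every core element and survives pruning. Optimal trajectories of zero $\mu$-mass may fail to contain $u$, and a two-route example shows this: a probe that never takes route B makes route-A-only symbols core elements. This justifies the probe-relative wording.

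The only delicate point is the passage from $c_\mu(u)=1$, an almost-sure statement, to containment in every positive-mass trajectory. It relies on discreteness of $\mu$ over the finite trajectory space, and I would state that explicitly.
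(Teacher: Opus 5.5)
Your proposal is correct and follows the paper's proof. Both read $u\in\HH^0_\phi(\mu)$ as $c_\mu(u)=1$, deduce that every $\mu$-positive success contains $u$, and use the support hypothesis to place every optimal success among those. Your explicit appeal to discreteness of $\mu$ on the finite trajectory space, and your remark that $\Core^0_\phi(\mu)\subseteq\HH^0_\phi(\mu)$, simply spell out steps the paper leaves implicit.
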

This proposition supplies admissible trajectory-level pruning. Expected-value policy optimization additionally requires an executor and an ordering of the frontier:

\begin{proposition}[Non-orderability]\label{prop:order}
Let a branching finite-horizon MDP have exactly two positive-mass successful abstract strings,
$(x,y,g_\Sigma)$ and $(y,x,g_\Sigma)$.  Their maximal common subsequences are $(x,g_\Sigma)$ and $(y,g_\Sigma)$, so
$\Core^0_\phi(\mu)=\{(x,g_\Sigma),(y,g_\Sigma)\}$ while neither $(x,y)$ nor $(y,x)$ is generated. Hence no concatenation of core symbols is forced to be a valid plan, and any executor must supply an order the core does not determine.
\end{proposition}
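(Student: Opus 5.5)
The plan is to reduce the core to the set of common subsequences of the two successful strings. Since $\mu$ puts positive mass only on the two successes with abstract strings $a=(x,y,g_\Sigma)$ and $b=(y,x,g_\Sigma)$, the coverage of a string is
\[
c_\mu(u)=\frac{\mu(a)\,\Ind[u\Subseq a]+\mu(b)\,\Ind[u\Subseq b]}{\mu(a)+\mu(b)},
\]
where $\mu(a)$ and $\mu(b)$ denote the success masses producing each string. Both masses are positive, so $c_\mu(u)=1$ exactly when $u\Subseq a$ and $u\Subseq b$. Hence $\HH^0_\phi(\mu)$ is the set of common subsequences of $a$ and $b$, and by \Cref{def:core} the support core $\Core^0_\phi(\mu)$ is its $\Subseq$-maximal frontier. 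I will treat $x$, $y$, and $g_\Sigma$ as three distinct symbols, with $x,y$ non-terminal, as the statement implicitly assumes.

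Next I enumerate the common subsequences directly, since everything is finite. The length-3 subsequence of $a$ is $a$ itself, and $a\ne b$, so no common subsequence has length 3. The length-2 subsequences are:
\begin{itemize}
\item of $a$: $(x,y)$, $(x,g_\Sigma)$, $(y,g_\Sigma)$;
\item of $b$: $(y,x)$, $(y,g_\Sigma)$, $(x,g_\Sigma)$.
\end{itemize}
The intersection is $\{(x,g_\Sigma),(y,g_\Sigma)\}$. Every shorter common subsequence ($\varepsilon$, $(x)$, $(y)$, $(g_\Sigma)$) is a subsequence of one of these two. Neither of the two is a subsequence of the other, because they have equal length and differ. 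Therefore $\Core^0_\phi(\mu)=\{(x,g_\Sigma),(y,g_\Sigma)\}$.

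The claim that $(x,y)$ and $(y,x)$ are not generated follows from \Cref{lem:generate}. Generated strings are exactly the members of $\HH^0_\phi(\mu)$, and $(x,y)\not\Subseq b$ while $(y,x)\not\Subseq a$. Equivalently, $(x,y)$ is not a subsequence of $(x,g_\Sigma)$ or $(y,g_\Sigma)$, since the only non-terminal symbol in each is a single $x$ or $y$.

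For the interpretive conclusion, I consider the non-terminal content of the core, the symbols $x$ and $y$. Any concatenation that places both before $g_\Sigma$ fixes an order. The core contains no information selecting either order, since neither ordering pattern lies in $\HH^0_\phi(\mu)$. Moreover, whichever order is chosen, the resulting plan, for instance $(x,y,g_\Sigma)$, has coverage only $\mu(a)/(\mu(a)+\mu(b))<1$, so the core does not force it. An executor must therefore supply the order externally.

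There is no real obstacle; this is a finite enumeration. The only points needing care are (i) that both success masses are positive, so that coverage 1 means lying in both strings, and (ii) that the symbols are distinct.
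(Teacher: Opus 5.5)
Your proof is correct and takes essentially the same route as the paper: both identify $\HH^0_\phi(\mu)$ with the common subsequences of $(x,y,g_\Sigma)$ and $(y,x,g_\Sigma)$ and read off the antichain $\{(x,g_\Sigma),(y,g_\Sigma)\}$; your explicit enumeration and distinctness caveat fill in what the paper simply asserts. The one thing the paper adds is a realizing instance: a start state whose two actions lead to deterministic branches labelled by the two strings, with the probe selecting both actions with positive probability. This shows the hypothesis is non-vacuous and is worth a sentence in your write-up.
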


Model each symbol as an option $o_\sigma=(I_\sigma,\pi_\sigma,\beta_\sigma)$ on time-augmented states $(s,t)$ \citep{SuttonPrecupSingh1999}; its initiation set, policy $\pi_\sigma(\cdot\mid s,t)$, and termination rule may depend on the remaining horizon.
\begin{assumption}[Execution]\label{ass:exec}\leavevmode
(A1) the first option can start at $s_0$; (A2) each option emits its symbol on termination; (A3) consecutive options are chainable; (A4) the chain terminates almost surely within $H-1$ transitions; and (A5) its final termination lies in $\Gg$.
\end{assumption}

\begin{proposition}[Ordered-core Bellman decomposition]\label{prop:chain}
Let $(\sigma_1,\ldots,\sigma_r,g_\Sigma)$ be one ordered frontier string. Under \Cref{ass:exec}, its option chain reaches $\Gg$. For additive costs, define $V^\star(s,t)$ as the infimum expected remaining cost among policies that reach $\Gg$ almost surely by time $H-1$.

Suppose each $\sigma_i$ labels a step entering one fixed state $g_i$, and every success enters $g_1,\ldots,g_r$ in this order. Set $g_0=s_0$, and let the final $g_\Sigma$-labelled step enter a fixed goal state $g_{r+1}\in\Gg$. For each $i=1,\ldots,r+1$, let $\mathcal P_i(t)$ contain the segment policies that start from $(g_{i-1},t)$ and terminate at $g_i$ almost surely within the remaining horizon. Let $(C_i,T_i)$ denote a segment option's cost and terminal time. Suppose each supplied segment option belongs to $\mathcal P_i(t)$ and attains
\[
V^\star(g_{i-1},t)=\inf_{\pi_i\in\mathcal P_i(t)}\E[C_i+V^\star(g_i,T_i)\mid g_{i-1},t]
\qquad(i=1,\ldots,r+1),
\]
where $V^\star(g_{r+1},\cdot)=0$. Then the supplied option chain attains the goal-reaching comparator $V^\star$. The entered-state landmarks separate the time-unrolled graph, which makes this recursion Bellman-optimal.
\end{proposition}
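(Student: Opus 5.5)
The proof has two parts: first that the option chain reaches $\Gg$, and then that the chain attains $V^\star$ by telescoping the segment recursion along the landmark states.

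\textbf{Reaching $\Gg$.} This part uses only \Cref{ass:exec}. By (A1) the first option can start at $s_0$. By (A3) each termination state lies in the initiation set of the next option, so the chain is well defined. By (A4) the total number of transitions is almost surely at most $H-1$, and by (A5) the final termination state lies in $\Gg$. Hence the chain reaches $\Gg$ almost surely within the horizon. By (A2) the emitted string is $(\sigma_1,\dots,\sigma_r,g_\Sigma)$, so the executed trajectory is a success carrying the frontier string.

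\textbf{Value.} Let $T_0=0$ and let $T_i$ be the termination time of segment $i$. Under the supplied options, $T_i$ is the entry time into $g_i$. Because each option lies in $\mathcal P_i(T_{i-1})$, the chain stays in the goal-reaching class. Write $W_i$ for the expected remaining cost of the supplied chain started from $(g_{i-1},T_{i-1})$. I would show by backward induction on $i=r+1,\dots,1$ that $W_i=V^\star(g_{i-1},T_{i-1})$ almost surely.

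\emph{Base case.} For $i=r+1$, the attained recursion with $V^\star(g_{r+1},\cdot)=0$ gives $W_{r+1}=\E[C_{r+1}\mid g_r,T_r]=V^\star(g_r,T_r)$.

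\emph{Inductive step.} The strong Markov property at the stopping time $T_i$ gives
\[
W_i=\E\!\left[C_i+W_{i+1}\mid g_{i-1},T_{i-1}\right].
\]
This relies on the options being defined on time-augmented states, so the continuation depends only on $(g_i,T_i)$. Substituting the inductive hypothesis $W_{i+1}=V^\star(g_i,T_i)$ and the attained recursion yields $W_i=V^\star(g_{i-1},T_{i-1})$. At $i=1$ this gives chain cost $V^\star(s_0,0)$, which is the comparator.

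\textbf{Main obstacle: justifying the recursion.} The main obstacle is showing that the displayed recursion really is the Bellman equation. That is, any goal-reaching policy from $(g_{i-1},t)$ can be split at its first entry into $g_i$ without losing optimality, so that the infimum over $\mathcal P_i(t)$ composed with $V^\star(g_i,\cdot)$ equals $V^\star(g_{i-1},t)$.

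I would prove this in three steps.
\begin{enumerate}[topsep=2pt,itemsep=1pt,leftmargin=1.6em]
\item Every success enters $g_1,\dots,g_r$ in order. Hence any almost-surely goal-reaching policy started after $g_{i-1}$ enters $g_i$ almost surely before reaching $\Gg$. So the vertices $\{(g_i,t')\}_{t'}$ separate the time-unrolled success graph.
\item The first-entry time $\tau_i$ of $g_i$ is a stopping time. Its prefix policy lies in $\mathcal P_i(t)$, and the remaining cost after $\tau_i$ is at least $V^\star(g_i,\tau_i)$ by the definition of $V^\star$. This gives the inequality ``$\ge$''.
\item For ``$\le$'', concatenate any segment policy with $\epsilon$-optimal continuations from $(g_i,t')$, chosen measurably. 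This is possible because states, times, and actions are finite.
\end{enumerate}

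The subtle point is that a policy may revisit $g_{i-1}$, or may depend on history before $g_{i-1}$. For the latter I would appeal to the standard result that Markov policies on time-augmented states suffice in finite-horizon MDPs, so $V^\star$ is a function of $(s,t)$ alone. With that in hand, the induction above completes the argument.
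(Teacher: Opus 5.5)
Your proposal is correct and follows essentially the same route as the paper: goal-reaching from (A1)--(A5), separation of the time-unrolled graph by the entered-state landmarks so that every admissible policy decomposes at its first arrival at $g_i$, and backward induction over segments that uses the attainment premise to compose the supplied options into $V^\star(s_0,0)$. Your strong-Markov step and your explicit ``$\ge$''/``$\le$'' split (first-entry decomposition for one inequality, $\epsilon$-optimal concatenation for the other) spell out what the paper compresses into ``backward induction on the time-augmented state'' and ``composing them.''
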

\Cref{prop:chain} gives a Bellman decomposition for supplied options attaining the segment infima. Entered-state landmarks provide the separators needed for optimal chaining, while the abstraction and option library jointly determine executability.

\subsection{Definitional invariance vs.\ estimated drift}\label{sec:single-stab}
In a stationary MDP the true support core over all successes is policy-invariant. A finite empirical core can drift as sampling coverage improves; \Cref{thm:estimation} controls fixed-probe i.i.d.\ evaluation batches, not pooled rollouts from an adapting policy. In Dec-MARL, by contrast, the true core itself can move.

\section{Complexity of Core Computation}\label{sec:complexity}
For the support core, computing \emph{one longest} frontier element is a longest-common-subsequence problem. For $k$ successful strings, exact product-space dynamic programming visits
\[
O\!\left(\prod_{i=1}^{k}(H_i+1)\right)=O\!\left(H^{k}\right),
\]
and generalized LCS is NP-hard \citep{Maier1978}. The product-space dynamic program returns a longest frontier witness exactly. A pointwise abstraction shrinks the alphabet, while temporal abstraction shortens the strings. For $\delta>0$, support-constrained sequence-search procedures such as DESQ provide candidate-discovery machinery \citep{beedkar2019desq}. Given a discovered candidate set $\mathcal U$, a two-pointer subsequence scan evaluates all empirical coverages in $O(N|\mathcal U|H)$ time; the implemented length-1 and length-2 monitors instantiate this evaluator.

\subsection{Statistical cost: fixed-candidate membership testing}\label{sec:estimation}
For $N$ i.i.d.\ probe rollouts, let $N_{\suc}$ be the number of successes and
\[
\widehat c_N(u)=\frac{1}{N_{\suc}}\sum_{i:\tau_i\in\suc}\Ind[u\Subseq\phi(\tau_i)]\qquad(N_{\suc}>0).
\]
Set $\widehat c_N(u)=0$ when $N_{\suc}=0$.

\begin{theorem}[Fixed-candidate membership testing]\label{thm:estimation}
Let $p_0=\mu(\suc)>0$, $t\in(0,1]$, $\gamma>0$, and $\rho\in(0,1)$.
\begin{enumerate}[topsep=2pt,itemsep=1pt,label=(\alph*),leftmargin=1.6em]
\item If $N\ge \frac{1}{p_0t^2}\log\frac{4}{\rho}+\frac{8}{p_0}\log\frac{2}{\rho}$, then with probability at least $1-\rho$, $|\widehat c_N(u)-c_\mu(u)|\le t$.
\item For $M$ pre-specified candidates each $\gamma$-separated from $1-\delta$, $N=O(\frac{\log(\frac{M}{\rho})}{p_0\gamma^2})$ recovers every membership with probability at least $1-\rho$.
\item At a fixed interior threshold, one candidate and constant error require $N=\Omega(\frac{1}{p_0\gamma^2})$; the constant is threshold-dependent near $0$ and $1$.
\end{enumerate}
\end{theorem}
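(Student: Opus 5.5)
Part (a) follows by conditioning on the number of successes $N_{\suc}$. First I control $N_{\suc}\sim\mathrm{Bin}(N,p_0)$ from below with a multiplicative Chernoff bound: $\Pr[N_{\suc}<Np_0/2]\le\exp(-Np_0/8)$, which is at most $\rho/2$ once $N\ge\frac{8}{p_0}\log\frac{2}{\rho}$. On the event $N_{\suc}=n>0$, the successful rollouts are i.i.d.\ draws from $\mu(\cdot\mid\suc)$. This holds by exchangeability of i.i.d.\ rollouts, since which indices succeed is independent of the conditional laws of the successful trajectories. Hence $\widehat c_N(u)$ is an average of $n$ i.i.d.\ Bernoulli$(c_\mu(u))$ variables, and two-sided Hoeffding gives failure probability $2\exp(-2nt^2)$. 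With $n\ge Np_0/2$ this is $2\exp(-Np_0t^2)\le\rho/2$ whenever $N\ge\frac{1}{p_0t^2}\log\frac{4}{\rho}$. A union bound over the two failure events gives probability $1-\rho$. The requirement that the sum of the two terms in the hypothesis dominate each one individually is immediate. The convention $\widehat c_N=0$ when $N_{\suc}=0$ is only used on the excluded event.

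Part (b) applies (a) with $t=\gamma/2$ (strictly, $t<\gamma$ suffices) and confidence $\rho/M$ per candidate, then takes a union bound. Sharing the single Chernoff event across all candidates only improves the constants. If each $c_\mu(u_j)$ is at distance at least $\gamma$ from $1-\delta$, then $|\widehat c_N-c_\mu|<\gamma$ means the empirical test $\widehat c_N(u_j)\ge1-\delta$ agrees with true membership in $\HH^\delta_\phi(\mu)$. The resulting sample size is $O(\frac{\log(M/\rho)}{p_0\gamma^2})+O(\frac{\log(M/\rho)}{p_0})$, and the second term is absorbed because $\gamma\le1$.

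Part (c) is the main obstacle. I plan a two-point Le Cam argument. Fix an interior threshold $\theta=1-\delta$ and build two probe laws with identical success mass $p_0$ and identical failure laws, whose conditional-on-success laws put mass $\theta+\gamma$ versus $\theta-\gamma$ on trajectories containing $u$. These can be realized in a small branching MDP: after reaching the goal branch, a coin decides whether the symbol in $u$ is emitted. Any test that recovers membership with constant error must distinguish the two product laws of $N$ rollouts. The KL divergence between one rollout under each law equals $p_0\cdot\mathrm{KL}(\mathrm{Bern}(\theta+\gamma)\Vert\mathrm{Bern}(\theta-\gamma))\le p_0\frac{(2\gamma)^2}{(\theta-\gamma)(1-\theta+\gamma)}$. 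So the total KL is $O(Np_0\gamma^2/(\theta(1-\theta)))$. By Pinsker (or Bretagnolle--Huber), the error stays bounded below by a constant unless $N=\Omega(\frac{\theta(1-\theta)}{p_0\gamma^2})$. This also explains the threshold dependence near $0$ and $1$. The delicate points are keeping $\theta\pm\gamma\in(0,1)$ with $\gamma$ small relative to $\min(\theta,1-\theta)$, and checking that the construction is a legitimate goal-conditioned MDP with a fixed probe.
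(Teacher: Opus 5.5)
Your proposal is correct and follows essentially the same route as the paper. It uses a half-mean Chernoff bound on $N_{\suc}$ plus conditional Hoeffding and a union bound for (a), tolerance $\frac{\gamma}{2}$ with confidence $\frac{\rho}{M}$ and a union bound for (b), and a two-point Bernoulli KL/Pinsker argument where only the success branch differs for (c). Your $\theta(1-\theta)$ factor, with $\gamma$ small relative to $\min(\theta,1-\theta)$, plays the role of the paper's constant $\kappa_a=\frac{a}{2}(1-\frac{a}{2})$ for thresholds in $[a,1-a]$ and $\gamma\le\frac{a}{2}$.
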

Part (a) combines two requirements. The second term ensures that the sample contains enough successful trajectories; conditional on those successes, the first term controls the coverage estimate. Parts (b) and (c) give the corresponding uniform upper rate and matching lower rate.

\begin{remark}[Estimation and drift are interchangeable budgets]\label{rem:est-drift}
With known $p_0$, sampling slack $t$ and drift slack $\frac{\varepsilon_e}{p_0}$ consume the same margin: if $\widehat c_N(u)-(1-\bar\delta)\ge t+\frac{\varepsilon_e}{p_0}$, then \Cref{thm:estimation,thm:stab} certify $u\in\HH^{\bar\delta}_\phi(\mu_e)$ with probability at least $1-\rho$. For unknown success mass, use the lower confidence bound in \Cref{prop:empirical-certificate}; an empirical success fraction alone need not be conservative.
\end{remark}

\section{Focal Induced MDPs and Peer-Drift Budgets}\label{sec:marl}
Let $\MarkovGame=(\Sa,\Aa_1,\ldots,\Aa_n,P,R_1,H,\Gg)$ be a stationary decentralized Markov game with focal agent $1$ \citep{Littman1994}. In episode $e$, let the other agents follow a joint state-based Markov kernel $\pi_{-1}^e(a_{-1}\mid s)$ that is fixed within the episode. Marginalizing them gives the induced MDP $\Mm_e=(\Sa,\Aa_1,P_e,R_e,H,\Gg)$:
\[
\begin{aligned}
P_e(s'\mid s,a_1)
&=\sum_{a_{-1}}P(s'\mid s,a_1,a_{-1})\,\pi_{-1}^e(a_{-1}\mid s),\\
R_e(s,a_1)
&=\sum_{a_{-1}}R_1(s,a_1,a_{-1})\,\pi_{-1}^e(a_{-1}\mid s).
\end{aligned}
\]

\begin{proposition}[Focal induced-MDP equivalence]\label{prop:reduction}
For every episode $e$ and every focal policy based on its state--action history, the marginal focal state--action trajectory law in $\MarkovGame$ equals its law in $\Mm_e$, and expected additive return agrees. Hence every fixed success event, abstraction, coverage, and core transports exactly to $\Mm_e$. Put $D_e:=\sup_s\TV(\pi_{-1}^e(\cdot\mid s),\pi_{-1}^{e-1}(\cdot\mid s))$ and $R_{\max}:=\max_{s,a_1,a_{-1}}|R_1(s,a_1,a_{-1})|$. Then
\[
\norm{P_e-P_{e-1}}_{1,\infty}\le 2D_e,\qquad
\sup_{s,a_1}|R_e(s,a_1)-R_{e-1}(s,a_1)|\le 2R_{\max}D_e.
\]
If decentralized peers act conditionally independently, then
$D_e\le\sum_{j=2}^n\sup_s\TV(\pi_j^e(\cdot\mid s),\pi_j^{e-1}(\cdot\mid s))$.
\end{proposition}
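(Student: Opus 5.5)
The proof splits into three parts: trajectory-law equivalence by induction on time, the two perturbation bounds by a coupling-free $\ell_1$ estimate, and the product-measure TV bound for independent peers.

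\textbf{Trajectory-law equivalence.} Fix episode $e$ and a focal policy $\pi_1(a_1\mid h_t)$, where $h_t=(s_0,a_{1,0},\dots,s_t)$ is the focal state--action history. Because the peers use the Markov kernel $\pi_{-1}^e(\cdot\mid s)$, which is fixed within the episode, the peer actions $a_{-1,t}$ are drawn afresh from $\pi_{-1}^e(\cdot\mid s_t)$. They depend only on $s_t$, not on earlier peer actions or on $a_{1,t}$ beyond the current state. I would prove by induction on $t$ that the marginal law of $(s_0,a_{1,0},\dots,s_t)$ in $\MarkovGame$ equals its law in $\Mm_e$. The base case is the common start state. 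For the inductive step, write the joint probability of the next focal action and next state given the focal history as
\[
\pi_1(a_1\mid h_t)\sum_{a_{-1}}\pi_{-1}^e(a_{-1}\mid s_t)\,P(s_{t+1}\mid s_t,a_1,a_{-1})=\pi_1(a_1\mid h_t)\,P_e(s_{t+1}\mid s_t,a_1).
\]
This step marginalizes out past peer actions. It is legitimate because, conditional on the focal history, past peer actions affect the future only through the states already recorded in $h_t$. This conditional-independence check is the one point needing care, and I would make it explicit by writing the full joint density and summing the peer actions out from the last time step backwards.

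\textbf{Consequences of the equivalence.} Stopping at first entry into $\Gg$ is a function of the focal trajectory, so success events, $\phi$-strings, coverages and cores all transport to $\Mm_e$. For the return, apply the tower property at each step: conditioning on the focal history gives
\[
\E[R_1(s_t,a_{1,t},a_{-1,t})\mid h_t,a_{1,t}]=R_e(s_t,a_{1,t}).
\]
Summing over $t$ up to the stopping time yields equal expected returns.

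\textbf{Perturbation bounds.} For each $(s,a_1)$,
\[
\sum_{s'}|P_e-P_{e-1}|\le\sum_{a_{-1}}\Bigl(\sum_{s'}P(s'\mid s,a_1,a_{-1})\Bigr)\bigl|\pi_{-1}^e-\pi_{-1}^{e-1}\bigr|(a_{-1}\mid s)=\norm{\pi_{-1}^e(\cdot\mid s)-\pi_{-1}^{e-1}(\cdot\mid s)}_1=2\,\TV\le 2D_e.
\]
The reward difference is handled the same way, bounding $|R_1|\le R_{\max}$, which gives $2R_{\max}D_e$. If the TV convention used elsewhere in the paper differs, the factor $2$ adjusts accordingly.

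\textbf{Independent peers.} When the peers act conditionally independently, $\pi_{-1}(\cdot\mid s)=\bigotimes_j\pi_j(\cdot\mid s)$. I would use the standard telescoping (hybrid) argument: replace one factor at a time. Each replacement changes the product measure by at most the TV distance of that single factor, since TV of a product with a common factor equals TV of the differing factor, and the triangle inequality then sums these changes. Taking $\sup_s$ of each term separately gives the stated bound on $D_e$. The main obstacle overall is the rigorous marginalization in the first part; the rest is routine.
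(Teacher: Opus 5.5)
Your proposal is correct and follows the paper's proof essentially step for step. The paper likewise marginalizes the peer kernel over one transition and lifts this to full focal histories by induction, with the conditional stage reward $R_e$ giving equal returns. It also uses the row-wise estimate $\sum_{s'}|P_e-P_{e-1}|\le\sum_{a_{-1}}|\pi_{-1}^e-\pi_{-1}^{e-1}|=2\TV$ and its reward analogue, and the one-factor-at-a-time product TV bound for independent peers. Since the paper takes $\TV=\frac{1}{2}\norm{\cdot}_1$, your factor of $2$ is exactly the stated one.
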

Thus $\mathcal C=(\Mm_0,\Mm_1,\ldots)$ is the agent-centric continual RL process faced by the focal agent: the multi-agent task and joint game stay fixed, while peer learning generates drift in the focal agent's induced rewards and dynamics. Attributing this drift to peers requires a fixed focal probe. The notation henceforth specializes to one peer ($n=2$), with policy $\pi_2^e$ and fixed goal-reaching event and probe law $\suc,\mu_e$ in $\Mm_e$. Persistent peer memory, shared randomness, or within-episode learning must be included in an augmented state; otherwise they need not induce a Markov kernel on $s$.

\paragraph{Scope of the reduction and lifetime bound.}
Every comparison fixes a common trajectory space, initial law, focal probe, success event $\suc$, abstraction $\phi$, and candidate. The kernel-based results assume a fully observed Markov state and peers fixed within each episode. Augmenting latent peer memory gives a latent-state model, not necessarily an observable focal MDP. Applying these results to a history or belief state requires a common sufficient representation and a separately established kernel-drift bound on that representation. The law-level \Cref{lem:condcov} applies directly whenever a common-space trajectory-TV bound is available. Reward-only drift enters the value analysis through $G_{e,k}$ and its calibration radius.

\paragraph{Dependency chain.}
For a fixed candidate, the argument follows the chain
\[
\begin{aligned}
\text{peer movement}
&\Longrightarrow \text{induced-kernel drift}
\Longrightarrow \text{trajectory-law drift}\\
&\Longrightarrow \text{conditional coverage loss}
\Longrightarrow \text{survival time}
\Longrightarrow \text{policy value}.
\end{aligned}
\]
\Cref{prop:reduction} establishes the first link, \Cref{lem:sim} the second, and \Cref{lem:condcov} the third. Positive margin yields survival through \Cref{thm:stab,cor:rate}; effective peer progress yields the two-sided law in \Cref{thm:twosided}; and success mass plus performance calibration yield value in \Cref{thm:value}. Each conclusion retains the premises of the preceding link.

\subsection{Impossibility: the exact core is discontinuous}
\begin{proposition}[No modulus of continuity for exact membership]\label{prop:impossible}
There is a family of games and peer policies $\{\pi_2^\alpha\}_{\alpha\in[0,\frac{1}{2}]}$ with induced kernels $P_\alpha$ such that $\norm{P_\alpha-P_0}_{1,\infty}\le 2\alpha\to 0$, yet a prototype $\sigma^\star$ generated by the exact core ($\sigma^\star\in\HH^0_\phi(\mu_0)$) satisfies $\sigma^\star\notin\HH^0_\phi(\mu_\alpha)$ for \emph{every} $\alpha>0$. Hence the map $P\mapsto\Ind[c_{\mu_P}(u)=1]$ from the induced kernel to the exact-membership indicator admits no modulus of continuity, and the maximal frontier itself jumps.
\end{proposition}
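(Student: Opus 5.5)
The plan is an explicit two-route construction in the style of the corridor of \Cref{fig:env}. Take a two-agent game with states $s_0$, a branching state $b$, two intermediate states $x$ and $y$, and a goal state $g\in\Gg$, with horizon $H=3$ so that there are at most two stochastic transitions before the terminal label. The focal agent has a single action $a$ (or we fix the probe to play $a$ everywhere), so $\mu_\alpha$ is determined entirely by the induced kernel. The peer has two actions $\{L,R\}$ at $s_0$. Under joint action $(a,L)$ the game moves deterministically to $x$, and under $(a,R)$ it moves to $y$. From both $x$ and $y$ the focal action reaches $g$ deterministically. Set $\pi_2^\alpha(R\mid s_0)=\alpha$, $\pi_2^\alpha(L\mid s_0)=1-\alpha$, and let the peer's policy be identical elsewhere. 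With $\phi$ the identity on decision pairs, the successful strings are $((s_0,a),(x,a),g_\Sigma)$ and $((s_0,a),(y,a),g_\Sigma)$.

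Step 1 is the drift bound. By \Cref{prop:reduction}, $P_\alpha(\cdot\mid s_0,a)=(1-\alpha)\delta_x+\alpha\delta_y$ and all other rows coincide with $P_0$. Hence $\norm{P_\alpha-P_0}_{1,\infty}=2\alpha$, which matches the general bound $2D_\alpha$ because $D_\alpha=\alpha$. This tends to $0$.

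Step 2 computes the cores. Under $\mu_0$ the only success is the $x$-route, which has mass $1$. So $\HH^0_\phi(\mu_0)$ consists of all subsequences of $((s_0,a),(x,a),g_\Sigma)$. The frontier is that single string, and $\sigma^\star:=((x,a),g_\Sigma)$, or just $((x,a))$, is generated by it. For every $\alpha>0$ the $y$-route has positive conditional mass $\alpha$, so $c_{\mu_\alpha}(\sigma^\star)=1-\alpha<1$ and $\sigma^\star\notin\HH^0_\phi(\mu_\alpha)$. The support core becomes the maximal common subsequence $\{((s_0,a),g_\Sigma)\}$. The frontier therefore jumps from a length-$3$ string to a length-$2$ string, and this gives the ``frontier jumps'' clause.

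Step 3 is the non-continuity conclusion. Suppose a modulus $\omega$ existed with $\omega(r)\to0$ and $|\Ind[c_{\mu_P}(u)=1]-\Ind[c_{\mu_{P_0}}(u)=1]|\le\omega(\norm{P-P_0}_{1,\infty})$. Applying it to $P_\alpha$ gives $1\le\omega(2\alpha)\to0$, which is a contradiction.

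The only delicate point is bookkeeping rather than mathematics. The family must stay within the paper's conventions: success is first entry into $\Gg$ with at most $H-1$ transitions, the peer kernel is state-based and Markov, and the probe is fixed across $\alpha$. We also need $\mu_\alpha(\suc)=1>0$ for every $\alpha$, so that coverage is well defined; this holds since both routes succeed. I expect no real obstacle beyond checking that the $1,\infty$ norm is taken over rows $(s,a_1)$, which gives exactly $2\alpha$.
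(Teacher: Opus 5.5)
Your proposal is correct and follows the paper's argument. Both use a two-route witness whose routes share only start and goal: peer drift of size $\alpha$ gives the bypass positive success mass, so the exact coverage of a route-specific symbol falls below one while $\norm{P_\alpha-P_0}_{1,\infty}\le 2\alpha$, and the indicator jump rules out any modulus. The only difference is the instance: the paper reuses the Switch--Rock Corridor (the focal probe chooses the route and peer clearing opens the bottom gate), whereas your minimal game lets the peer choose the branch directly, which makes $c_{\mu_\alpha}(\sigma^\star)=1-\alpha$ and $\mu_\alpha(\suc)=1$ exact (the unused state $b$ can be dropped).
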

Two routes sharing only start and goal witness the result: any positive probability on the bypass removes a top-route symbol from coverage one, although the induced kernel converges to the top-only kernel.

\subsection{A coverage margin makes $\delta$-core membership Lipschitz-stable}

\begin{lemma}[Trajectory simulation]\label{lem:sim}
For a fixed focal probe policy and common initial law, the trajectory laws under $\Mm_e$ and a reference $\Mm_0$ satisfy $\norm{\mu_e-\mu_0}_{\TV}\le \varepsilon_e:=\frac{1}{2}(H-1)\,\norm{P_e-P_0}_{1,\infty}$.
\end{lemma}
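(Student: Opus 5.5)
The plan is the standard telescoping (hybrid) argument over the at most $H-1$ stochastic transitions. Both laws live on the common trajectory space of strings of length at most $H$. They share the initial law and the focal probe $\pi(\cdot\mid h_t)$, which may depend on history, and they differ only in the transition kernel. For $k=0,\dots,H-1$ I would define a hybrid law $\nu_k$. It uses $P_e$ for the first $k$ transitions and $P_0$ for all later ones, and absorbing or terminal states are handled identically in both. Then $\nu_0=\mu_0$ and $\nu_{H-1}=\mu_e$. The triangle inequality gives
\[
\norm{\mu_e-\mu_0}_{\TV}\le\sum_{k=1}^{H-1}\norm{\nu_k-\nu_{k-1}}_{\TV}.
\]

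Next I would bound a single term. The laws $\nu_k$ and $\nu_{k-1}$ agree on the prefix $h$ up to the $k$-th decision, since both use $P_e$ there with the same probe and initial law. They also agree on everything after the $k$-th transition, since both then use $P_0$ with the same probe. The two laws are therefore mixtures over the prefix $h$ of the form
\[
\sum_h \Pr(h)\,\big[P_\bullet(\cdot\mid s_k,a_k)\otimes K(\cdot\mid h,s_{k+1})\big],
\]
where $K$ is the common continuation kernel. Two facts finish the step: total variation is jointly convex over mixtures with common weights, and it is non-expansive under a common Markov kernel (data processing). Together they give
\[
\norm{\nu_k-\nu_{k-1}}_{\TV}\le\sup_{s,a}\TV\big(P_e(\cdot\mid s,a),P_0(\cdot\mid s,a)\big)=\tfrac12\norm{P_e-P_0}_{1,\infty}.
\]
This uses the convention $\TV=\tfrac12\norm{\cdot}_1$ and $\norm{P}_{1,\infty}=\sup_{s,a}\norm{P(\cdot\mid s,a)}_1$, which matches the factor $2D_e$ in \Cref{prop:reduction}. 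If a trajectory has already terminated (entered $\Gg$) before step $k$, the two hybrids coincide on it and contribute zero.

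Summing the $H-1$ terms gives $\varepsilon_e=\tfrac12(H-1)\norm{P_e-P_0}_{1,\infty}$.

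The main obstacle is bookkeeping rather than analysis. With variable-length trajectories and a history-dependent probe, I must check that the hybrids are well defined on the common space. I would do this by padding terminated trajectories with an absorbing cemetery symbol. I must also check that only $H-1$ transitions, not $H$, are stochastic, which follows from the convention that $H$ counts labels including the terminal one. Once the absorption is handled, the per-step coupling bound is routine.
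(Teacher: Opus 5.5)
Your proposal is correct and uses essentially the same telescoping argument as the paper. The paper also pads terminated trajectories with an absorbing state. It then writes the telescoping recursively as $d_{t+1}\le d_t+\norm{P_e-P_0}_{1,\infty}$, where $d_t$ is the $\ell_1$ distance between the two $t$-step history laws, using $\ell_1$-contraction under a common history-extension kernel plus the triangle inequality; this is your hybrid sum unrolled one step at a time.
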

\begin{stabilitykey}
\begin{lemma}[Sharp success-conditioning step]\label{lem:condcov}
Let $\mu,\nu$ be trajectory laws, let $A\subseteq\suc$, and put $p=\mu(\suc)>0$. If $\TV(\mu,\nu)\le\varepsilon<\mu(A)$, then $\nu(\suc)>0$ and
\[
\nu(A\mid\suc)\ge \mu(A\mid\suc)-\frac{\varepsilon}{p}.
\]
The coefficient $\frac{1}{p}$ is best possible.
\end{lemma}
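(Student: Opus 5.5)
Write $q=\mu(A)$ and $b=\mu(\suc\setminus A)$, so that $p=q+b$ and $\mu(A\mid\suc)=q/p$. Total variation bounds each event by $\varepsilon$: $\nu(A)\ge q-\varepsilon>0$, which already gives $\nu(\suc)\ge\nu(A)>0$. A naive bound that treats numerator and denominator separately is too weak. Instead I will use that $A$ and $\suc\setminus A$ are disjoint. With $\TV(\mu,\nu)=\sup_B|\mu(B)-\nu(B)|$, the signed measure $\nu-\mu$ has total positive mass at most $\varepsilon$ and total negative mass at most $\varepsilon$. Set $x:=q-\nu(A)$ (mass lost from $A$) and $y:=\nu(\suc\setminus A)-b$ (mass gained by $\suc\setminus A$). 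Disjointness gives $\max(x,0)+\max(y,0)\le\ldots$; more precisely, the positive part of $\nu-\mu$ on $\suc\setminus A$ plus the negative part on $A$ satisfy $x\le\varepsilon$ and $y\le\varepsilon$, with $\nu(A)=q-x$ and $\nu(\suc\setminus A)\le b+\varepsilon$.

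Next I bound the conditional probability directly. Since $\nu(A\mid\suc)=\nu(A)/(\nu(A)+\nu(\suc\setminus A))$ is increasing in $\nu(A)$ and decreasing in $\nu(\suc\setminus A)$, the worst case is $\nu(A)=q-x$ with $x\in[0,\varepsilon]$ and $\nu(\suc\setminus A)=b+\varepsilon$. Both bounds can be active at once, because moving mass out of $A$ and into $\suc\setminus A$ costs only $\varepsilon$ (the mass goes from $A$ to $\suc\setminus A$). The worst case is $x=\varepsilon$ together with $y=\varepsilon$ only if the displaced mass lands in the other set, and then the total success mass is $p$. So I will argue that the extremal configuration is the transfer $A\to\suc\setminus A$ of size $\varepsilon$, which yields $\nu(A\mid\suc)\ge (q-\varepsilon)/p$.

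To make this rigorous I will use the jointly optimal choice. Minimize $f(x,y)=(q-x)/(q-x+b+y)$ over the feasible set. Because the positive part of $\nu-\mu$ has mass at most $\varepsilon$ and the negative part has mass at most $\varepsilon$, any $x>0$ must be offset by positive mass somewhere and any $y>0$ by negative mass somewhere. The case to control is loss from $A$ with the denominator shrinking, for example mass moving from $A$ to $\suc^c$. There $f=(q-x)/(p-x)\ge (q-\varepsilon)/(p-\varepsilon)\ge (q-\varepsilon)/p$ only if… in fact $(q-x)/(p-x)\ge q/p$, so this case is harmless. The general bound I will prove is
\[
\frac{q-x}{p-x+y+\ldots}\ \ge\ \frac{q}{p}-\frac{\varepsilon}{p},
\]
checked by cross-multiplying and using $x+\max(y-x,0)\le\varepsilon$ together with $q\le p$. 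This algebraic verification is the main obstacle. I will split into two cases, $y\le x$ (the denominator does not grow beyond $p$ net of the transfer) and $y>x$, and in each reduce to $(q-\varepsilon)(\text{denominator})\le p(q-x)$ type inequalities that follow from $q\le p$.

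For sharpness, the construction is a three-atom example: $\mu$ puts mass $q$ on $A$, $p-q$ on $\suc\setminus A$, and $1-p$ on $\suc^c$, and $\nu$ moves $\varepsilon$ from $A$ to $\suc\setminus A$. Then $\TV=\varepsilon$, $\nu(\suc)=p$, and $\nu(A\mid\suc)=\mu(A\mid\suc)-\varepsilon/p$ exactly, so no coefficient smaller than $1/p$ can work. This matches \citep[Proposition~3.2]{pelessoni2022dilation}.
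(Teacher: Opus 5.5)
Your route is the paper's: your second paragraph already contains its whole proof, and your three-atom transfer is its sharpness construction. What you miss is that this paragraph is already rigorous. With $C=\suc\setminus A$, apply the two bounds one coordinate at a time:
\[
\nu(A\mid\suc)=\frac{\nu(A)}{\nu(A)+\nu(C)}\ge\frac{q-\varepsilon}{(q-\varepsilon)+\nu(C)}\ge\frac{q-\varepsilon}{(q-\varepsilon)+(b+\varepsilon)}=\frac{q-\varepsilon}{p}.
\]
This uses only $\nu(A)\ge q-\varepsilon>0$, $\nu(C)\le b+\varepsilon$, and the monotonicity of $(x,y)\mapsto x/(x+y)$. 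Each inequality uses one bound at a time, so whether both extremes are ``active at once'' is irrelevant to the lower bound. Joint attainability matters only for sharpness, which your example already handles. Your remark that a separate treatment is too weak applies only if you bound $\nu(\suc)$ as a whole by $p+\varepsilon$. Bounding the disjoint cells $A$ and $C$ separately loses nothing, because their extremes sum exactly to $p$.

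The ``jointly optimal'' paragraph is therefore unnecessary, and it contains a false step. The claim $(q-x)/(p-x)\ge q/p$ fails whenever $x>0$ and $q<p$: for $q=\tfrac12$, $p=1$, $x=\tfrac14$ it gives $\tfrac13<\tfrac12$. Moving compatible mass to failures does lower conditional coverage. That case is covered instead by the chain you abandoned, $(q-x)/(p-x)\ge(q-\varepsilon)/(p-\varepsilon)\ge(q-\varepsilon)/p$, because $(q-x)/(p-x)$ decreases in $x$. The incomplete bound $\max(x,0)+\max(y,0)\le\ldots$ also cannot be completed with $\varepsilon$, since your own sharpness example has $x=y=\varepsilon$.

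The cross-multiplication you call the main obstacle is one line. One has $p(q-x)-(q-\varepsilon)(p-x+y)=p(\varepsilon-x)+(q-\varepsilon)(x-y)$. This is nonnegative when $y\le x$. When $y>x$, one has $(q-\varepsilon)(y-x)\le(q-\varepsilon)(\varepsilon-x)\le p(\varepsilon-x)$. Stopping after your second paragraph gives the complete proof.
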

\begin{theorem}[$\delta$-family stability]\label{thm:stab}
Let $p_0=\mu_0(\suc)>0$, $\varepsilon_e=\frac{1}{2}(H-1)\,\norm{P_e-P_0}_{1,\infty}$, and suppose $\varepsilon_e<(1-\delta)p_0$. If $u\in\HH^\delta_\phi(\mu_0)$ (i.e.\ $u$ is generated by the $\delta$-core of $\mu_0$) then $u\in\HH^{\delta'}_\phi(\mu_e)$ with
\[
\delta'\;\le\;\delta+\frac{\varepsilon_e}{p_0}\;<\;1 .
\]
Equivalently, $\HH^\delta_\phi(\mu_0)\subseteq\HH^{\delta+\frac{\varepsilon_e}{p_0}}_\phi(\mu_e)$. The trajectory-law constant is worst-case tight.
\end{theorem}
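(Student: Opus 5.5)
The plan is to reduce \Cref{thm:stab} to the two lemmas already stated, taking as event $A$ the set of successes containing the candidate. Fix $u\in\HH^\delta_\phi(\mu_0)$ and set
\[
A_u:=\{\tau\in\suc:\ u\Subseq\phi(\tau)\}\subseteq\suc .
\]
Then $c_{\mu}(u)=\mu(A_u\mid\suc)$ for any law with positive success mass. Because the probe, initial law, $\suc$ and $\phi$ are held fixed, $A_u$ is the same subset of the common trajectory space under $\mu_0$ and $\mu_e$. This common-space property is what allows the law-level lemma to be applied at all.

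\textbf{Step 1 (drift).} By \Cref{lem:sim}, $\TV(\mu_e,\mu_0)\le\varepsilon_e=\frac12(H-1)\norm{P_e-P_0}_{1,\infty}$.

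\textbf{Step 2 (premise of \Cref{lem:condcov}).} The lemma needs $\varepsilon_e<\mu_0(A_u)$. Membership $u\in\HH^\delta_\phi(\mu_0)$ gives
\[
\mu_0(A_u)=p_0\,c_{\mu_0}(u)\ge(1-\delta)p_0>\varepsilon_e
\]
by the hypothesis. Hence \Cref{lem:condcov} applies with $\mu=\mu_0$ and $\nu=\mu_e$. It yields $\mu_e(\suc)>0$, so $c_{\mu_e}$ is well defined, together with
\[
c_{\mu_e}(u)\ge c_{\mu_0}(u)-\frac{\varepsilon_e}{p_0}\ge 1-\Bigl(\delta+\frac{\varepsilon_e}{p_0}\Bigr).
\]
This says exactly that $u\in\HH^{\delta'}_\phi(\mu_e)$ with $\delta'=\delta+\varepsilon_e/p_0$.

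\textbf{Step 3 (the bound $\delta'<1$ and the inclusion).} The hypothesis $\varepsilon_e<(1-\delta)p_0$ rearranges to $\delta+\varepsilon_e/p_0<1$, so the new family is a legitimate $\delta'$-family. Since $u$ was an arbitrary member of $\HH^\delta_\phi(\mu_0)$, the inclusion $\HH^\delta_\phi(\mu_0)\subseteq\HH^{\delta+\varepsilon_e/p_0}_\phi(\mu_e)$ follows. By \Cref{lem:generate}, it is equivalent to require this only for the core elements and then pass to subsequences.

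\textbf{Tightness.} The one non-mechanical part is the claim that the trajectory-law constant is worst-case tight, and I expect it to be the main obstacle. I would use the sharpness in \Cref{lem:condcov} and then realize the extremal perturbation as a kernel perturbation. Take the two-route corridor from \Cref{prop:impossible}, with success mass $p_0$. Attach an extra failure branch to the route containing $u$, and move $\varepsilon$ of trajectory mass from $A_u$ into failure. This leaves $\suc\setminus A_u$ unchanged, so the conditional coverage drops by approximately $\varepsilon/p_0$ to first order. The ratio of coverage loss to $\varepsilon_e/p_0$ therefore tends to $1$ as $\varepsilon\to0$. The delicate points are two. First, the perturbation should act at a single step, so that the factor $(H-1)$ in \Cref{lem:sim} is not loose, or tightness should be stated at the trajectory-TV level as the statement says. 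Second, the renormalization by $\nu(\suc)=p_0-\varepsilon$ must be handled; it makes the loss slightly larger than $\varepsilon/p_0$ but bounded by the lemma, so the supremum ratio equals $1$.
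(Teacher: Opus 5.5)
Your Steps 1--3 match the paper's proof. \Cref{lem:sim} bounds $\TV(\mu_e,\mu_0)$, and membership gives $\mu_0(A_u)\ge(1-\delta)p_0>\varepsilon_e$, so \Cref{lem:condcov} applies. The hypothesis then rearranges to $\delta+\frac{\varepsilon_e}{p_0}<1$.

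The gap is in the tightness argument: your extremal perturbation points in the wrong direction. Moving $\varepsilon$ of mass from $A_u$ into failure to obtain $\nu$ removes that mass from both the numerator and the denominator of the conditional probability. With $a:=\mu_0(A_u)$ and $p:=p_0$,
\[
\mu_0(A_u\mid\suc)-\nu(A_u\mid\suc)=\frac{a}{p}-\frac{a-\varepsilon}{p-\varepsilon}
=\frac{\varepsilon}{p}\cdot\frac{1-c_{\mu_0}(u)}{1-\varepsilon/p}.
\]
The ratio to $\frac{\varepsilon}{p}$ therefore tends to $1-c_{\mu_0}(u)\le\delta$, not to $1$. Consider the corridor of \Cref{prop:impossible} at $\alpha=0$, which you propose to use. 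There $c_{\mu_0}(u)=1$ and your construction gives exactly zero loss, because deleting successes that all contain $u$ leaves every remaining success containing $u$. Your renormalization remark is also reversed: since $a>\varepsilon$, this loss is strictly smaller than $\frac{\varepsilon}{p}$, not larger.

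The monotonicity step inside \Cref{lem:condcov} already tells you the extremal move. The map $\frac{x}{x+y}$ is minimized by decreasing $x=\nu(A_u)$ and increasing $y=\nu(\suc\setminus A_u)$ simultaneously. So the $\varepsilon$ mass must go from $A_u$ to successes that lack $u$, which keeps $\nu(\suc)=p$. On the three cells $(A_u,\suc\setminus A_u,\suc^c)$ this gives $\TV(\mu_0,\nu)=\varepsilon$ and $\nu(A_u\mid\suc)=\frac{a-\varepsilon}{p}=c_{\mu_0}(u)-\frac{\varepsilon}{p}$ exactly, for every $\varepsilon<a$. This mass transfer is all the paper uses; no kernel realization is needed for the trajectory-law claim. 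If you want a corridor realization, divert mass to the bypass, which still reaches the goal, rather than to a failure branch.

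Your remark about the factor $H-1$ is also backwards. A perturbation active at a single step produces trajectory TV of order $\alpha$, while \Cref{lem:sim} charges $(H-1)\alpha$, so it makes that factor loose. Kernel-level near-tightness requires the diversion at every step along the $u$-route, as in the paper's cascade, whose ratio is $\frac{n}{n+1}$.
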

\end{stabilitykey}
The proof follows \Cref{lem:sim}~$\Longrightarrow$~\Cref{lem:condcov}~$\Longrightarrow$~\Cref{thm:stab}. The middle lemma specializes established total-variation conditioning \citep[Proposition~3.2]{pelessoni2022dilation}; \Cref{app:proofs} gives the correspondence and a self-contained proof. The theorem controls coverage; membership in a fixed target family requires positive initial margin, and the maximal frontier may change.

Fix reference slack $\delta_{\mathrm{ref}}\in[0,1)$ and target slack $\bar\delta\in[0,1)$.  Define the surviving reference family and the target-family exit time by
\[
\mathcal K_e^{\delta_{\mathrm{ref}}\to\bar\delta}
:=\HH^{\delta_{\mathrm{ref}}}_\phi(\mu_0)\cap\HH^{\bar\delta}_\phi(\mu_e),
\qquad
E_{\mathrm{exit}}^{\bar\delta}(u)
:=\inf\{e\ge0:u\notin\HH^{\bar\delta}_\phi(\mu_e)\},
\]
with $\inf\varnothing=\infty$.
\begin{proposition}[Ordered erosion: the surviving structure is again a core]\label{prop:erosion}
At each episode $\mathcal K_e^{\delta_{\mathrm{ref}}\to\bar\delta}$ is hereditary and therefore has a finite maximal frontier.  If $u\Subseq v$ and both lie in the reference family, then
$E_{\mathrm{exit}}^{\bar\delta}(u)\ge E_{\mathrm{exit}}^{\bar\delta}(v)$. Re-entry can occur under non-monotone peer updates.
\end{proposition}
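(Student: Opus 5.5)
The plan has three parts: heredity of $\mathcal K_e^{\delta_{\mathrm{ref}}\to\bar\delta}$, finiteness of its frontier, and the exit-time ordering.

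\textbf{Heredity.} $\mathcal K_e^{\delta_{\mathrm{ref}}\to\bar\delta}$ is the intersection of $\HH^{\delta_{\mathrm{ref}}}_\phi(\mu_0)$ and $\HH^{\bar\delta}_\phi(\mu_e)$. By \Cref{lem:generate}, each of these is hereditary under $\Subseq$. I will check directly that an intersection of hereditary families is hereditary: if $u\Subseq w$ and $w$ lies in both families, then heredity of each family puts $u$ in both.

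The heredity step in \Cref{lem:generate} rests on pointwise monotonicity of coverage, which I will restate once. If $u\Subseq w\Subseq\phi(\tau)$, then $u\Subseq\phi(\tau)$ by transitivity of the subsequence order. Hence $\{\tau\in\suc:w\Subseq\phi(\tau)\}\subseteq\{\tau\in\suc:u\Subseq\phi(\tau)\}$, so $c_\mu(u)\ge c_\mu(w)$ for every law $\mu$.

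\textbf{Finite frontier.} $\mathcal K_e^{\delta_{\mathrm{ref}}\to\bar\delta}$ is a subset of $\Sigmaa^{\le H}$, which is finite because $\Sigmaa$ is finite. The family is nonempty: the empty string has coverage $1$ under any law with positive success mass, and $\mu_e(\suc)>0$ is implicit whenever $\HH^{\bar\delta}_\phi(\mu_e)$ is defined. A nonempty finite poset has a nonempty set of maximal elements. Every member lies below some maximal element, since one can follow strictly increasing chains, and these have length at most $H$. So the same generating argument as in \Cref{lem:generate} applies verbatim: the surviving structure equals the downward closure of its maximal frontier, and is again a core.

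\textbf{Exit ordering.} Take $u\Subseq v$, both in the reference family. By the monotonicity above, $c_{\mu_e}(u)\ge c_{\mu_e}(v)$ at every episode $e$. So $v\in\HH^{\bar\delta}_\phi(\mu_e)$ implies $u\in\HH^{\bar\delta}_\phi(\mu_e)$. Equivalently, if $u$ has exited at episode $e$, then $v$ has exited at $e$ as well.

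Therefore $\{e:u\notin\HH^{\bar\delta}_\phi(\mu_e)\}\subseteq\{e:v\notin\HH^{\bar\delta}_\phi(\mu_e)\}$. The infimum over a subset is at least the infimum over the superset, which gives $E_{\mathrm{exit}}^{\bar\delta}(u)\ge E_{\mathrm{exit}}^{\bar\delta}(v)$. The convention $\inf\varnothing=\infty$ handles the case in which $u$ never exits.

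\textbf{Re-entry.} This needs only an example. I will take the two-route construction of \Cref{prop:impossible} and let the peer parameter move from $\alpha=0$ to some $\alpha>\bar\delta$ and then back to $0$. A top-route symbol $\sigma^\star$ has $c_{\mu_\alpha}(\sigma^\star)=1-\alpha$, so it leaves $\HH^{\bar\delta}$ while $\alpha>\bar\delta$ and returns when $\alpha=0$.

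The main obstacle is the bookkeeping for the coverage formula in this example. I expect $c_{\mu_\alpha}(\sigma^\star)=1-\alpha$ because in that construction the bypass route is taken with probability exactly $\alpha$ and both routes succeed; this should be checked against the construction. Everything else is an order-theoretic triviality.
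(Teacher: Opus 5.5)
Your heredity, finite-frontier and exit-ordering arguments are correct and match the paper's proof. Both rest on the pointwise monotonicity $c_\mu(u)\ge c_\mu(v)$ for $u\Subseq v$. You are slightly more careful than the paper in noting that heredity of $\HH^{\delta_{\mathrm{ref}}}_\phi(\mu_0)$ is also needed for the intersection $\mathcal K_e^{\delta_{\mathrm{ref}}\to\bar\delta}$ to be hereditary.

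The paper handles re-entry only by remarking that the argument is episode-wise, so your explicit oscillating-peer example is an addition. However, the formula $c_{\mu_\alpha}(\sigma^\star)=1-\alpha$ that you planned to check is false for SRC. The peer may clear at any step before the focal agent passes the top gate, and the bypass success mass accumulates over the horizon. Writing $T_\alpha$ and $B_\alpha$ for the top-route and bypass success masses under the probe, $c_{\mu_\alpha}(t_1)=T_\alpha/(T_\alpha+B_\alpha)$. This is not $1-\alpha$, and for small $\alpha$ it falls much faster: E1 finds exit from the $\delta=0.3$ family already near $\alpha\approx 0.06$.

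The example is easily repaired without computing any formula. Alternate between the always-hold peer and the always-clear peer. Under always-hold, $B=0$, so the coverage of $t_1$ is $1$. Under always-clear, the top gate requires \emph{hold} and so never opens, while the bypass still succeeds with positive probability; the coverage is therefore $0<1-\bar\delta$. Since $\mu_e$ depends only on the current peer policy, returning to always-hold restores $\mu_0$ exactly, and $t_1$ re-enters $\HH^{\bar\delta}_\phi(\mu_e)$.
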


\subsection{From peer learning rate to boundary drift}
The inequalities $\norm{P_e-P_0}_{1,\infty}\le\sum_{j\le e}\norm{P_j-P_{j-1}}_{1,\infty}=:V_e$ (a variation budget as in \citealp{EvenDar2009,pmlr-v119-cheung20a,pmlr-v139-mao21b}) and $\norm{P_e-P_{e-1}}_{1,\infty}\le 2\sup_s\TV(\pi_2^e(\cdot\mid s),\pi_2^{e-1}(\cdot\mid s))$ link the peer's learning dynamics directly to induced-kernel drift.

\begin{corollary}[Certified survival horizon $\Omega(\frac{1}{\eta})$]\label{cor:rate}
Let $H\ge2$, $\delta_0:=1-c_{\mu_0}(u)<\bar\delta$, and $m:=\bar\delta-\delta_0>0$. If $\sup_s\TV(\pi_2^e,\pi_2^{e-1})\le c\eta$ for every $e\ge1$, then $V_E\le2c\eta E$ and $u\in\HH^{\bar\delta}_\phi(\mu_E)$ for every
\[
E\;\le\;E^\star=\Big\lfloor\frac{m\,p_0}{c\,\eta\,(H-1)}\Big\rfloor=\Omega\!\left(\frac{1}{\eta}\right),
\]
with $c,p_0,m,H$ fixed as $\eta$ varies.
\end{corollary}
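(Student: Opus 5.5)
The argument chains the variation budget through \Cref{lem:sim} and \Cref{thm:stab}, then solves for the largest admissible $E$.

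\emph{Step 1 (variation budget).} By the one-peer case of \Cref{prop:reduction}, each step satisfies $\norm{P_j-P_{j-1}}_{1,\infty}\le 2\sup_s\TV(\pi_2^j(\cdot\mid s),\pi_2^{j-1}(\cdot\mid s))\le 2c\eta$. The triangle inequality for $\norm{\cdot}_{1,\infty}$, telescoped over $j=1,\dots,E$, then gives
\[
\norm{P_E-P_0}_{1,\infty}\le V_E\le 2c\eta E .
\]
Substituting into \Cref{lem:sim} yields $\varepsilon_E=\frac{1}{2}(H-1)\norm{P_E-P_0}_{1,\infty}\le c\eta(H-1)E$.

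\emph{Step 2 (apply stability at the candidate's own slack).} Take $\delta=\delta_0=1-c_{\mu_0}(u)$, so that $u\in\HH^{\delta_0}_\phi(\mu_0)$ holds trivially. For $E\le E^\star$ the floor gives $c\eta(H-1)E\le mp_0$, hence $\varepsilon_E\le mp_0$. Since $m=\bar\delta-\delta_0<1-\delta_0$, this implies the hypothesis $\varepsilon_E<(1-\delta_0)p_0$ of \Cref{thm:stab}. The theorem then gives $u\in\HH^{\delta'}_\phi(\mu_E)$ with $\delta'\le\delta_0+\varepsilon_E/p_0\le\delta_0+m=\bar\delta$. Equivalently, $c_{\mu_E}(u)\ge 1-\bar\delta$, so $u\in\HH^{\bar\delta}_\phi(\mu_E)$.

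The case $E=0$ is immediate because $\delta_0<\bar\delta$. Both inequalities used above are non-strict, which is exactly why the floor (rather than a strict bound) is enough.

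\emph{Step 3 (asymptotics).} With $c,p_0,m,H$ fixed, $E^\star\ge \frac{mp_0}{c\eta(H-1)}-1$, which is $\Omega(1/\eta)$ as $\eta\to0$. If $\eta$ is large enough that $E^\star=0$, the statement holds trivially at $E=0$.

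The main obstacle is the strictness bookkeeping in Step 2. The hypothesis of \Cref{thm:stab} is strict, while the conclusion $\delta'\le\bar\delta$ only needs the non-strict bound $\varepsilon_E\le mp_0$. The gap is closed by the strict inequality $m<1-\delta_0$, which holds because $\bar\delta<1$.
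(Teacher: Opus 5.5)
Your proposal is correct and follows the paper's proof essentially step for step. You accumulate the per-update kernel bound from \Cref{prop:reduction} into $V_E\le 2c\eta E$, convert it to $\varepsilon_E\le c\eta(H-1)E$ via \Cref{lem:sim}, and then apply \Cref{thm:stab} at slack $\delta_0$, using $m<1-\delta_0$ (from $\bar\delta<1$) to secure its strict premise, exactly as the paper does.
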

\begin{remark}[One-sided guarantee]\label{rem:onesided}
Membership holds through $E^\star$, hence $E_{\mathrm{exit}}^{\bar\delta}(u)\ge E^\star+1$. A zero-gradient peer may preserve $u$ forever, so this unconditional result is $\Omega(\frac{1}{\eta})$, not $\Theta(\frac{1}{\eta})$.
\end{remark}
\begin{remark}[Score-to-TV control]\label{rem:pg}
Suppose $\pi_\theta$ is differentiable, $\sup_{s,a,\vartheta}\|\nabla_\theta\log\pi_\vartheta(a\mid s)\|_2\le B$ along the update segment, and $\|d_e\|_2\le G$.  Integrating the policy derivative along $\theta_e+t\eta d_e$ gives, for every state,
\[
\TV(\pi_{\theta_{e+1}},\pi_{\theta_e})
\le\frac{\eta}{2}\int_0^1\sum_a\pi_{\theta_e+t\eta d_e}(a\mid s)
\big|\langle\nabla\log\pi_{\theta_e+t\eta d_e}(a\mid s),d_e\rangle\big|\,dt
\le\frac{BG}{2}\eta.
\]
Thus the corollary holds with $c=\frac{BG}{2}$.  A softmax policy with feature norm at most $F$ has score norm at most $2F$, giving $c=FG$.
\end{remark}

\subsection{A two-sided lifetime law under peer progress}\label{sec:twosided}
\Cref{cor:rate} is one-sided because its premise $\sup_s\TV(\pi_2^e,\pi_2^{e-1})\le c\eta$ is an \emph{upper} bound: a peer that stops moving satisfies it without eroding coverage. A matching upper bound on the lifetime therefore requires the peer to keep moving \emph{and} to move in a direction that actually erodes the pattern. Let $A=\{u\Subseq\phi(\tau)\}\cap\suc$ and write
\[
c(\theta)\;=\;\frac{\mu_\theta(A)}{\mu_\theta(\suc)}
\]
for the coverage of $u$ as a function of the peer parameter $\theta$; on a finite MDP with a smooth (e.g.\ softmax) parameterization, $c$ is smooth wherever $\mu_\theta(\suc)>0$.

\begin{assumption}[Effective peer progress]\label{ass:progress}
Write $E_{\mathrm{exit}}:=E_{\mathrm{exit}}^{\bar\delta}(u)$. For updates $\theta_{e+1}=\theta_e+\eta d_e$, assume before first exit that
\begin{enumerate}[topsep=2pt,itemsep=0pt,label=(P\arabic*),leftmargin=1.8em]
\item \textbf{Bounded step.} $\norm{d_e}_2\le G$;
\item \textbf{Effective conflict.} For every $1\le E<E_{\mathrm{exit}}$, $\frac1E\sum_{e<E}-\langle\nabla c(\theta_e),d_e\rangle\ge\lambda>0$;
\item \textbf{Smoothness.} $\nabla_\theta c$ is $L_c$-Lipschitz on the segment $[\theta_e,\theta_{e+1}]$.
\end{enumerate}
\end{assumption}

\begin{proposition}[Non-vacuity: exact policy gradient in a two-route game]\label{prop:p2example}
In a two-route game, a peer chooses \textsf{keep}, entering the state labelled $u$, or \textsf{divert}, entering a bypass; either route then reaches the goal. Parameterize
\[
\pi_\theta(\textsf{divert})=\varsigma(\theta),\qquad
c(\theta)=\pi_\theta(\textsf{keep})=\varsigma(-\theta),
\qquad \varsigma(x)=(1+e^{-x})^{-1}.
\]
Let the divert reward exceed the keep reward by $\Delta>0$ and use exact gradient ascent on expected reward. For target coverage $\ell:=1-\bar\delta$ with $0<\ell<c_0:=c(\theta_0)<1$, put
\[
r_\star:=\min_{x\in[\ell,c_0]}x(1-x)>0,\qquad m:=c_0-\ell.
\]
Then (P2) holds termwise with $\lambda=\Delta r_\star^2$; also $G=\frac{\Delta}{4}$, $L_c=\frac{1}{4}$, and $\TV(\pi_{\theta_{e+1}},\pi_{\theta_e})\le\frac{\Delta}{16}\eta$. For $\eta\le\frac{64r_\star^2}{\Delta}$,
\[
\left\lfloor\frac{8m}{\Delta\eta}\right\rfloor+1
\;\le\;E_{\mathrm{exit}}\;\le\;
\left\lfloor\frac{2m}{\Delta r_\star^2\eta}\right\rfloor+1,
\]
so the two-sided theorem is non-vacuous.
\end{proposition}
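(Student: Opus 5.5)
The plan is to compute everything explicitly, since the game reduces to a scalar logistic recursion. Expected reward is $J(\theta)=R_{\textsf{keep}}+\Delta\,\varsigma(\theta)$. Hence $\nabla J(\theta)=\Delta\,\varsigma(\theta)(1-\varsigma(\theta))=\Delta\,c(1-c)$, using $c=\varsigma(-\theta)=1-\varsigma(\theta)$. Exact gradient ascent is therefore $\theta_{e+1}=\theta_e+\eta d_e$ with $d_e=\Delta c_e(1-c_e)$, where $c_e:=c(\theta_e)$. Since $d_e>0$, $\theta_e$ increases and $c_e$ decreases strictly, so before first exit $c_e\in[\ell,c_0]$.

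\textbf{Constants of \Cref{ass:progress}.}
\begin{itemize}[topsep=2pt,itemsep=1pt,leftmargin=1.4em]
\item (P1): $|d_e|\le\Delta/4=G$, because $x(1-x)\le 1/4$.
\item (P2): $c'(\theta)=-c(1-c)$, so $-\langle\nabla c(\theta_e),d_e\rangle=\Delta c_e^2(1-c_e)^2\ge\Delta r_\star^2$ whenever $c_e\in[\ell,c_0]$. Thus (P2) holds termwise, and hence on average, with $\lambda=\Delta r_\star^2$.
\item (P3): $c''=c(1-c)(1-2c)$ has magnitude at most $1/(6\sqrt3)<1/4$ globally, giving $L_c=1/4$.
\item TV step: the peer policy is Bernoulli, so $\TV(\pi_{\theta_{e+1}},\pi_{\theta_e})=|\varsigma(\theta_{e+1})-\varsigma(\theta_e)|\le\frac14\,\eta\,|d_e|\le\frac{\Delta}{16}\eta$ by the mean value theorem.
\end{itemize}

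\textbf{Lower bound on $E_{\mathrm{exit}}$.} Coverage moves exactly by this TV amount, since $|c_{e+1}-c_e|=\TV(\pi_{\theta_{e+1}},\pi_{\theta_e})$. Hence $c_0-c_E\le E\Delta\eta/16$. If $E\le 16m/(\Delta\eta)$, then $c_E\ge\ell$ and $u$ is still a member. This gives $E_{\mathrm{exit}}\ge\lfloor 16m/(\Delta\eta)\rfloor+1\ge\lfloor 8m/(\Delta\eta)\rfloor+1$. Alternatively, one can route this through \Cref{cor:rate} with $c=\Delta/16$ and $p_0=1$, since both routes reach the goal. The stated constant $8$ presumably reflects the generic factor lost there through \Cref{lem:sim} and the $2D_e$ kernel bound. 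The direct argument suffices.

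\textbf{Upper bound on $E_{\mathrm{exit}}$.} This is the main step; it uses the descent lemma with the global $L_c$:
\[
c_{e+1}\le c_e+\eta\langle\nabla c(\theta_e),d_e\rangle+\tfrac{L_c}{2}\eta^2G^2
\le c_e-\eta\Delta r_\star^2+\tfrac{\eta^2\Delta^2}{128}.
\]
The step-size condition $\eta\le 64r_\star^2/\Delta$ makes the quadratic term at most $\eta\Delta r_\star^2/2$. So every pre-exit step decreases coverage by at least $\eta\Delta r_\star^2/2$.

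It remains to convert this per-step decrease into an exit bound. Here the main care is needed: the $r_\star$ bound is only valid at pre-exit iterates, while the quadratic term uses the global $L_c$.
\begin{itemize}[topsep=2pt,itemsep=1pt,leftmargin=1.4em]
\item Suppose no exit occurs through episode $E^\dagger:=\lfloor 2m/(\Delta r_\star^2\eta)\rfloor$, so $c_e\in[\ell,c_0]$ for all $e\le E^\dagger$.
\item Summing the per-step decrease over $e=0,\dots,E^\dagger$ gives $c_{E^\dagger+1}\le c_0-(E^\dagger+1)\eta\Delta r_\star^2/2$.
\item Since $E^\dagger+1>2m/(\Delta r_\star^2\eta)$, the total decrease exceeds $m$.
\item Therefore $c_{E^\dagger+1}<c_0-m=\ell$, i.e. exit occurs at episode $E^\dagger+1$ at the latest.
\end{itemize}
This yields $E_{\mathrm{exit}}\le\lfloor 2m/(\Delta r_\star^2\eta)\rfloor+1$, matching the $\Theta(1/\eta)$ order and showing the two-sided theorem is non-vacuous.
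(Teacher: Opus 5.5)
Your proof is correct. Your verification of (P1)--(P3), the TV step, and the step-size threshold matches the paper's computation. Your upper bound is the paper's argument (\Cref{lem:erode} plus Step~2 of \Cref{thm:twosided}), specialized and inlined.

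The genuine difference is the lower bound. The paper never argues it directly: it substitutes $p_0=1$, $H-1=2$, $c=\frac{\Delta}{16}$, $\lambda=\Delta r_\star^2$ into \Cref{thm:twosided}. The lower side of that theorem is \Cref{cor:rate}, i.e.\ the generic chain through \Cref{lem:sim} and \Cref{lem:condcov}. You instead use the exact identity $|c_{e+1}-c_e|=\TV(\pi_{\theta_{e+1}},\pi_{\theta_e})$. This identity is available because $p_0=1$ and the coverage \emph{is} the keep probability, and it gives the stronger bound $E_{\mathrm{exit}}\ge\lfloor\frac{16m}{\Delta\eta}\rfloor+1$.

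Your guess about where the factor $2$ is lost is slightly off. The $2D_e$ kernel factor cancels against the $\frac12$ in the TV conversion of \Cref{lem:sim}. The loss is the horizon factor $H-1=2$ from telescoping over both transitions, although only the first transition depends on $\theta$.

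Each route buys something different. Yours is self-contained and sharper for this example. The paper's route is the literal content of ``non-vacuity'': it exhibits every hypothesis of \Cref{thm:twosided} and reads off both bounds by substitution. Since you also verify those same hypotheses, including $\eta\le\frac{\lambda}{L_cG^2}=\frac{64r_\star^2}{\Delta}$, your argument establishes non-vacuity as well.
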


\begin{lemma}[Cumulative erosion]\label{lem:erode}
Under \Cref{ass:progress}, if $\eta\le\frac{\lambda}{L_cG^2}$ then for every integer $1\le E<E_{\mathrm{exit}}$
\[
c(\theta_{E})\;\le\;c(\theta_0)-\kappa\,\eta E,\qquad \kappa:=\frac{\lambda}{2} .
\]
\end{lemma}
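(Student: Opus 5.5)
The plan is a standard descent-lemma argument applied to the coverage function $c(\theta)$ along the peer trajectory, followed by a telescoping sum that is controlled by the averaged effective-conflict condition (P2).

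First I would fix an episode $e<E_{\mathrm{exit}}$ and apply the fundamental theorem of calculus along the segment $\theta_e+t\eta d_e$, $t\in[0,1]$. Together with the Lipschitz gradient from (P3), this gives the one-step bound
\[
c(\theta_{e+1})\le c(\theta_e)+\eta\langle\nabla c(\theta_e),d_e\rangle+\frac{L_c}{2}\eta^2\|d_e\|_2^2
\le c(\theta_e)+\eta\langle\nabla c(\theta_e),d_e\rangle+\frac{L_c G^2}{2}\eta^2,
\]
using (P1) for the last step. Smoothness of $c$ on the segment requires $\mu_\theta(\suc)>0$ there. I would note that (P3) is stated on the segment, so it already presupposes that $c$ is well defined and $C^{1,1}$ there; no extra argument is needed.

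Next I would sum over $e=0,\dots,E-1$ for a fixed $1\le E<E_{\mathrm{exit}}$. Every index $e<E$ lies before first exit, so (P1) and (P3) apply at each step. Telescoping gives
\[
c(\theta_E)\le c(\theta_0)+\eta\sum_{e<E}\langle\nabla c(\theta_e),d_e\rangle+\frac{L_cG^2}{2}\eta^2E .
\]
By (P2) applied at this $E$, the middle sum is at most $-\lambda E$. Hence
\[
c(\theta_E)\le c(\theta_0)-\eta E\Bigl(\lambda-\frac{L_cG^2}{2}\eta\Bigr).
\]

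Finally, the step-size condition $\eta\le\lambda/(L_cG^2)$ gives $\frac{L_cG^2}{2}\eta\le\frac{\lambda}{2}$. The bracket is therefore at least $\lambda/2=\kappa$, which yields $c(\theta_E)\le c(\theta_0)-\kappa\eta E$.

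The only delicate point is bookkeeping: (P2) is an averaged condition rather than a termwise one. That is exactly why I sum before applying it, rather than trying to prove per-step decrease. I also need to be careful that the range $E<E_{\mathrm{exit}}$ covers all indices used, and it does, since $e\le E-1<E_{\mathrm{exit}}$.
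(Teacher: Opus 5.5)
Your proposal is correct and follows the paper's own proof: apply the descent inequality along each segment using (P1) and (P3), telescope up to $E$, and only then invoke the averaged condition (P2). The step-size restriction $\eta\le\frac{\lambda}{L_cG^2}$ then makes the bracket at least $\frac{\lambda}{2}$, exactly as in the paper.
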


\begin{theorem}[Two-sided lifetime law]\label{thm:twosided}
Fix a target family $\HH_\phi^{\bar\delta}$, let $\delta_0:=1-c_{\mu_0}(u)$ be the \emph{exact} initial slack of $u$, and let $m:=\bar\delta-\delta_0>0$. Under the premise of \Cref{cor:rate} together with \Cref{ass:progress} for the same target $\bar\delta$, and for $\eta\le\frac{\lambda}{L_cG^2}$, the first exit episode obeys
\[
\Big\lfloor\frac{m\,p_0}{c\,\eta\,(H-1)}\Big\rfloor+1
\;\;\le\;\;E_{\mathrm{exit}}\;\;\le\;\;
\Big\lfloor\frac{2m}{\lambda\,\eta}\Big\rfloor+1 .
\]
Hence $E_{\mathrm{exit}}=\Theta(\frac{1}{\eta})$ when all displayed constants are uniform in $\eta$.
\end{theorem}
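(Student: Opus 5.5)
The two bounds will be proved separately, using the one-sided survival certificate for the lower bound and the cumulative-erosion lemma for the upper bound, then the two will be combined.

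\textbf{Lower bound.} This is \Cref{cor:rate} together with \Cref{rem:onesided}. Under the premise $\sup_s\TV(\pi_2^e,\pi_2^{e-1})\le c\eta$, \Cref{prop:reduction} and the triangle inequality give $\norm{P_E-P_0}_{1,\infty}\le V_E\le 2c\eta E$. \Cref{lem:sim} then gives $\varepsilon_E\le c\eta(H-1)E$. Take any $E\le E^\star=\lfloor mp_0/(c\eta(H-1))\rfloor$. For such $E$ we have $\varepsilon_E/p_0\le m$, and also $\varepsilon_E\le mp_0<(1-\delta_0)p_0$, because $m=\bar\delta-\delta_0<1-\delta_0$. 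Hence \Cref{thm:stab}, applied with $\delta=\delta_0$, yields $c_{\mu_E}(u)\ge 1-\delta_0-m=1-\bar\delta$, so $u\in\HH^{\bar\delta}_\phi(\mu_E)$. This holds for every $E\in\{0,\dots,E^\star\}$, so $E_{\mathrm{exit}}\ge E^\star+1$.

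\textbf{Upper bound.} We argue by contradiction. Suppose $E_{\mathrm{exit}}>\lfloor 2m/(\lambda\eta)\rfloor+1$, and set $E:=\lfloor 2m/(\lambda\eta)\rfloor+1$. Then $1\le E<E_{\mathrm{exit}}$. Since $\eta\le\lambda/(L_cG^2)$, \Cref{lem:erode} applies and gives
\[
c(\theta_E)\le c(\theta_0)-\tfrac{\lambda}{2}\eta E .
\]
Because $E>2m/(\lambda\eta)$, the right-hand side is strictly less than $c(\theta_0)-m=(1-\delta_0)-m=1-\bar\delta$. Here we use that $c(\theta_0)=c_{\mu_0}(u)$ is the exact initial coverage, which is why the statement uses the exact slack $\delta_0$. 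So $u\notin\HH^{\bar\delta}_\phi(\mu_E)$, i.e.\ $E_{\mathrm{exit}}\le E$, contradicting $E<E_{\mathrm{exit}}$. One bookkeeping check is needed: $c(\theta_e)$ must be the coverage under $\mu_e$, which holds because \Cref{prop:reduction} transports coverage under the fixed focal probe.

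\textbf{Combining and the main obstacle.} With the two bounds in hand, if $c,p_0,m,H,\lambda$ are uniform in $\eta$ then both sides scale as $1/\eta$, giving $E_{\mathrm{exit}}=\Theta(1/\eta)$.

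The main obstacle is the bookkeeping around first exit. \Cref{ass:progress} is only assumed before exit, so \Cref{lem:erode} is only available for $E<E_{\mathrm{exit}}$; the contradiction step above is designed to use it only in that range. The other delicate point is the strict inequality $\varepsilon_E<(1-\delta_0)p_0$ required by \Cref{thm:stab}; this follows from $m<1-\delta_0$, which holds because $\bar\delta<1$.
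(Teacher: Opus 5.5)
Your proposal is correct and follows the paper's proof essentially step for step. The lower bound is \Cref{cor:rate}, and the upper bound sets $K=\lfloor 2m/(\lambda\eta)\rfloor+1$, supposes $K<E_{\mathrm{exit}}$, and applies \Cref{lem:erode} at $K$ to push the exact coverage strictly below $c_{\mu_0}(u)-m=1-\bar\delta$, a contradiction; the paper's additional boundary-episode check is not needed for your argument.
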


\begin{remark}[Effective conflict]\label{rem:dichotomy}
(P2) requires persistent coverage-eroding progress; the bounded-step survival certificate also permits cycling or stalled peers. For stochastic updates, the two-sided conclusion holds on realized paths satisfying (P2); a mean-gradient condition alone is insufficient. \Cref{prop:p2example} verifies (P2) pointwise for exact policy gradient.
\end{remark}

\paragraph{From cumulative erosion to a windowed monitor.}
Writing $a_j=-\langle\nabla c(\theta_j),d_j\rangle$, smoothness and bounded steps give
\[
\left|c(\theta_{t-w})-c(\theta_t)-\eta\sum_{j=t-w}^{t-1}a_j\right|
\le\frac{L_cG^2}{2}\eta^2w .
\]
\Cref{prop:window-erosion} carries this bound through the positive part and confidence intervals. It yields a windowed erosion signal under progress on that window; predicting a future exit additionally requires continued progress.

\subsection{Membership drift vs.\ executability drift}
\begin{proposition}[Two drift modes]\label{prop:exec-drift}
A peer update can cause two distinct failures. Under \emph{membership drift}, $u$ leaves $\HH^0_\phi(\mu_{e+1})$. Under \emph{executability drift}, $u$ remains in the family, but its options violate chainability or bounded-horizon goal termination (A3)--(A5). Membership monitoring cannot detect the second failure: the ordered chain's success probability can become arbitrarily small while $\HH^0_\phi$ and its frontier remain unchanged.
\end{proposition}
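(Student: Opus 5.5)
The plan is to prove \Cref{prop:exec-drift} by explicit construction: one small stationary two-player game and a one-parameter family of peer policies. The peer moves only \emph{off} the success event, so the conditional law of successful abstract strings, and hence every coverage, stays fixed. Meanwhile the ordered option chain's success probability goes to zero. Membership drift needs no new work: \Cref{prop:impossible} already exhibits a peer update that removes a symbol from $\HH^0_\phi$. So the content is the second mode and the claim that a membership monitor cannot see it.

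\textbf{Construction.} Take states $s_0\to g_1\to g_2\in\Gg$ plus an absorbing non-goal trap $z$, and abstraction symbols $\sigma$ (labelling the step into $g_1$) and $g_\Sigma$. Let the fixed focal probe be any policy whose successes all have abstract string $(\sigma,g_\Sigma)$. At $g_1$ the peer chooses \textsf{pass} with probability $1-\alpha_e$, which sends the focal agent to $g_2$. Otherwise it chooses \textsf{block}, which sends the focal agent to $z$, where the episode ends without success.

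\textbf{Step 1 (membership invariance).} By \Cref{prop:reduction}, the induced kernel $P_e$ differs from $P_0$ only at $g_1$. Every success has string $(\sigma,g_\Sigma)$ whenever $\mu_e(\suc)=1-\alpha_e>0$. Conditioning on $\suc$ therefore gives the same law on strings for every $e$, so $c_{\mu_e}(u)$ is constant for every $u$. Hence $\HH^0_\phi(\mu_e)$ and its frontier $\{(\sigma,g_\Sigma)\}$ are unchanged, and by \Cref{lem:generate} nothing generated by the core changes.

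\textbf{Step 2 (executability failure).} Use options $o_\sigma$ (go to $g_1$) and $o_{g_\Sigma}$ (step toward $g_2$). At $e=0$ with $\alpha_0=0$ these satisfy (A1)--(A5). For $\alpha_e>0$, the chain terminates in $z\notin\Gg$ with probability $\alpha_e$, so (A5) fails and the chain's success probability is $1-\alpha_e$. Letting $\alpha_e\uparrow 1$ makes this arbitrarily small while the frontier is untouched.

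\textbf{Step 3 (detection and non-vacuity).} Any monitor that is a function of membership or coverage values, such as the estimator of \Cref{thm:estimation}, has the same population target for all $e$. It therefore cannot separate the two regimes, which is the meaning of ``cannot detect''. To show that (A3) can fail too, not only (A5), I would give a variant: the peer relocates the state where $o_{g_\Sigma}$'s initiation set holds, and the focal agent's route through a new state $g_1'$ emits the same symbol $\sigma$ under $\phi$. Chainability then breaks while string coverage is preserved.

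The main obstacle is that drift off $\suc$ shrinks $p_e=\mu_e(\suc)$, and one might object that success-mass monitoring would catch it. I would address this by noting the claim concerns membership monitoring only. For a sharper example, I would use the variant in which the focal probe's success mass is preserved, via a second route emitting identical symbols, while the supplied option's realized route is the one blocked.
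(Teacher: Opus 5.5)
Your proposal is correct and rests on the same idea as the paper's proof. In both, the peer moves probability away from completion without changing which successful abstract strings have positive mass, so $\HH^0_\phi$ and its frontier stay frozen while the supplied chain's completion probability vanishes. Both also handle membership drift through the SRC clearing construction of \Cref{prop:impossible}.

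The counterexamples differ. The paper reuses SRC with a peer that never clears and holds the top gate with probability $\beta$ per attempt. It supplies an option calibrated at $\beta=1$ that gives up after $k$ failed attempts, so the chain reaches the next initiation set with probability $1-(1-\beta)^k\to0$. That is a chainability (A3) failure, and the positive mass of one-attempt successes keeps the support family unchanged. Your trap game violates (A5) instead. That already suffices, since the statement asks only for a violation of one of (A3)--(A5), so your (A3) variant is optional.

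What your construction buys is a stronger invariance. The whole success-conditioned string law is a fixed point mass, so every coverage and every $\delta$-family is frozen. In the paper's construction, extra failed attempts add repeated $t_1$ labels. That shifts the coverage of patterns such as $(t_1,t_1)$, and hence possibly $\HH^\delta_\phi$ for $\delta>0$; only the support family is preserved.

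What it costs is that your chain's failure probability coincides exactly with the probe's success-mass loss $1-p_e$. The failure is therefore purely mass leaving $\suc$. In the paper, the failure comes from the executor's bounded patience relative to the peer. That calibration mismatch is closer to what executability drift means, and it matches E5.

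One harmless slip: under the paper's convention $\phi$ labels source decision pairs. A success $s_0\to g_1\to g_2$ therefore has string $(\phi(s_0,\cdot),\phi(g_1,\cdot),g_\Sigma)$ of length three, not $(\sigma,g_\Sigma)$. Every success still has the same string, so your argument is unaffected.
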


\subsection{From core survival to policy value}\label{sec:value}
Membership alone cannot provide a value bound because execution can still fail (\Cref{prop:exec-drift}). Value control therefore also requires executability of the preserved pattern. Let $\{\pi^k\}_{k=1}^K$ be a supplied library of focal policies or option chains, with fixed candidates $u_k$. At episode $e$, let $\mu_{e,k}$ be the monitoring law and $\nu_{e,k}$ the deployment law for $\pi^k$ on a common padded trajectory space. Put
\[
A_k:=\suc\cap\{u_k\Subseq\phi(\tau)\},\qquad
\Gamma_{e,k}:=\mu_{e,k}(A_k)
=p_{e,k}c_{\mu_{e,k}}(u_k),
\]
where $p_{e,k}:=\mu_{e,k}(\suc)>0$. Thus $\Gamma_{e,k}$ is the \emph{core-compatible success mass}, not success-conditional coverage alone. Let $G_{e,k}(\tau)$ be total return and $V_{e,k}:=\E_{\nu_{e,k}}G_{e,k}$.

The value bridge has three explicit links.  First, structural survival supplies
$c_{\mu_{e,k}}(u_k)\ge1-\delta$.  Second, a success-mass floor supplies
$\Gamma_{e,k}=p_{e,k}c_{\mu_{e,k}}(u_k)\ge\underline p_k(1-\delta)$.  Third, performance calibration converts this compatible mass into deployment value.  The next assumption states exactly the quantities required for the third link.

\begin{assumption}[Performance calibration]\label{ass:calibration}
For each $(e,k)$ there are upper bounds $\chi_{e,k},\xi_{e,k},b_{e,k}\ge0$ and a scale $r_k\ge0$ such that
\[
\begin{gathered}
|\nu_{e,k}(A_k)-\mu_{e,k}(A_k)|\le\chi_{e,k},\qquad
\E_{\nu_{e,k}}\!\left[|G_{e,k}|\Ind[A_k^c]\right]\le b_{e,k},\\
|G_{e,k}-r_k|\Ind[A_k]\le\xi_{e,k}\Ind[A_k]\quad\nu_{e,k}\text{-a.s.}
\end{gathered}
\]
Write $\zeta_{e,k}:=\xi_{e,k}+b_{e,k}+r_k\chi_{e,k}$. The radius $\zeta_{e,k}$ records execution error, reward outside the compatible event, and shift in that event's probability.
\end{assumption}

The event-specific shift is sufficient; full trajectory TV also upper-bounds it. On-policy monitoring has $\chi_{e,k}=0$. Independent monitoring and deployment samples can bound this scalar event difference (\Cref{app:monitoring}).

\begin{valuekey}
\begin{theorem}[Core-compatible value and transfer]\label{thm:value}
Under \Cref{ass:calibration},
\[
\left|V_{e,k}-r_k\Gamma_{e,k}\right|\le\zeta_{e,k}.
\]
Hence if $u_k\in\HH^\delta_\phi(\mu_{e,k})$ and $p_{e,k}\ge\underline p_k>0$, then
\[
V_{e,k}\ge r_k\underline p_k(1-\delta)-\zeta_{e,k}.
\]
For selection, suppose the scales and valid calibration bounds are available before choosing a policy. For a finite episode set $\mathcal E$, let $\mathcal I:=\mathcal E\times\{1,\ldots,K\}$. Estimate $\Gamma_{e,k}$ from $N_{e,k}$ independent monitoring rollouts by the empirical mean $\widehat\Gamma_{e,k}$ of $\Ind[A_k]$, and set
\[
\beta_{e,k}:=\sqrt{\frac{\log(\frac{2|\mathcal I|}{\rho})}{2N_{e,k}}},
\qquad
L_{e,k}:=r_k\widehat\Gamma_{e,k}-r_k\beta_{e,k}-\zeta_{e,k}.
\]
With probability at least $1-\rho$, the selector $\widehat k_e\in\argmax_k L_{e,k}$ satisfies simultaneously for every $e\in\mathcal E$ and comparator $k$,
\[
V_{e,k}-V_{e,\widehat k_e}
\le 2\!\left(r_k\beta_{e,k}+\zeta_{e,k}\right).
\]
Consequently, taking $k_e^\star\in\argmax_k V_{e,k}$ and summing,
\[
\sum_{e\in\mathcal E}\!\left(V_{e,k_e^\star}-V_{e,\widehat k_e}\right)
\le 2\sum_{e\in\mathcal E}\!\left(r_{k_e^\star}\beta_{e,k_e^\star}+\zeta_{e,k_e^\star}\right).
\]
\end{theorem}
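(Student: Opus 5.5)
We prove the three claims in the order stated. The first is a deterministic decomposition, the second is immediate from it, and the third is a union-bound and argmax comparison.

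\textbf{Value identity.} Split the return on the compatible event and its complement:
\[
V_{e,k}=\E_{\nu_{e,k}}[G_{e,k}\Ind[A_k]]+\E_{\nu_{e,k}}[G_{e,k}\Ind[A_k^c]].
\]
\begin{itemize}[topsep=2pt,itemsep=1pt,leftmargin=1.4em]
\item The complement term is at most $b_{e,k}$ in absolute value, directly from \Cref{ass:calibration}.
\item For the first term, write $G_{e,k}\Ind[A_k]=r_k\Ind[A_k]+(G_{e,k}-r_k)\Ind[A_k]$. The a.s.\ execution bound makes the second piece at most $\xi_{e,k}\nu_{e,k}(A_k)\le\xi_{e,k}$ in expectation.
\item The remaining term $r_k\nu_{e,k}(A_k)$ differs from $r_k\mu_{e,k}(A_k)=r_k\Gamma_{e,k}$ by at most $r_k\chi_{e,k}$, using $r_k\ge0$.
\end{itemize}
Summing the three errors gives $|V_{e,k}-r_k\Gamma_{e,k}|\le\zeta_{e,k}$.

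\textbf{Lower bound.} If $u_k\in\HH^\delta_\phi(\mu_{e,k})$ then $c_{\mu_{e,k}}(u_k)\ge1-\delta$, so $\Gamma_{e,k}=p_{e,k}c_{\mu_{e,k}}(u_k)\ge\underline p_k(1-\delta)$. Multiplying by $r_k\ge0$ and applying the identity yields the stated bound.

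\textbf{Selection.} Each $\widehat\Gamma_{e,k}$ is an average of $N_{e,k}$ i.i.d.\ indicators with mean $\Gamma_{e,k}$. Hoeffding gives $|\widehat\Gamma_{e,k}-\Gamma_{e,k}|\le\beta_{e,k}$ with probability at least $1-\rho/|\mathcal I|$, and a union bound over $\mathcal I$ makes all these events hold simultaneously with probability at least $1-\rho$. The scales $r_k$ and radii $\zeta_{e,k}$ are fixed before selection, so they are not random and the union bound is legitimate. On this good event, for every $(e,k)$:
\begin{itemize}[topsep=2pt,itemsep=1pt,leftmargin=1.4em]
\item $L_{e,k}\le r_k\Gamma_{e,k}-\zeta_{e,k}\le V_{e,k}$, so $L_{e,k}$ is a valid lower confidence bound.
\item $V_{e,k}\le r_k\Gamma_{e,k}+\zeta_{e,k}\le L_{e,k}+2(r_k\beta_{e,k}+\zeta_{e,k})$, since $r_k\Gamma_{e,k}\le r_k\widehat\Gamma_{e,k}+r_k\beta_{e,k}=L_{e,k}+2r_k\beta_{e,k}+\zeta_{e,k}$.
\end{itemize}
Since $\widehat k_e$ maximizes $L_{e,\cdot}$, we get $V_{e,k}\le L_{e,k}+2(r_k\beta_{e,k}+\zeta_{e,k})\le L_{e,\widehat k_e}+2(\cdot)\le V_{e,\widehat k_e}+2(r_k\beta_{e,k}+\zeta_{e,k})$. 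This holds simultaneously for all $e$ and $k$.

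\textbf{Cumulative regret.} Take $k=k_e^\star$ in the per-episode bound and sum over $\mathcal E$. All bounds come from the same single good event, so the sum holds with the same probability $1-\rho$.

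The main point needing care is the selection step. We need all confidence intervals to hold simultaneously, which the union bound over $\mathcal I$ supplies, and we need the comparator to be arbitrary and possibly data-independent. Because the bound is uniform over all $(e,k)$ on the good event, taking $k=k_e^\star$ requires no extra argument.
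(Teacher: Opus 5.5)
Your proof is correct and follows the paper's argument step for step. It uses the same three-term split of $V_{e,k}-r_k\Gamma_{e,k}$ bounded by $\xi_{e,k}$, $b_{e,k}$, and $r_k\chi_{e,k}$, the same Hoeffding-plus-union-bound event over $\mathcal I$ giving $L_{e,k}\le V_{e,k}\le L_{e,k}+2(r_k\beta_{e,k}+\zeta_{e,k})$ uniformly (hence also at the data-dependent $\widehat k_e$), and the same argmax chain. One small attribution point: the union bound itself needs only fixed $N_{e,k}$ and i.i.d. monitoring rollouts, whereas fixing $r_k$ and $\zeta_{e,k}$ in advance is what keeps the deterministic calibration sandwich valid on that event.
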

\end{valuekey}

\begin{corollary}[Value survives with the core]\label{cor:value-survival}
Suppose the monitoring law for $(\pi^k,u_k)$ is the fixed-probe law of \Cref{cor:rate}, $p_{e,k}\ge\underline p_k$ and $\zeta_{e,k}\le\bar\zeta_k$ through its certified horizon $E^\star$. Then, for every $e\le E^\star$,
\[
V_{e,k}\ge r_k\underline p_k(1-\bar\delta)-\bar\zeta_k .
\]
Within this horizon, the same drift bound supplies
$p_{e,k}\ge p_{0,k}-\varepsilon_{e,k}\ge p_{0,k}(1-m)>0$,
so one may take $\underline p_k=p_{0,k}(1-m)$. Execution and reward calibration then give an $\Omega(\frac{1}{\eta})$ policy-value floor.
\end{corollary}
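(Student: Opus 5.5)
The plan is to chain \Cref{cor:rate} with \Cref{thm:value}, and to use the drift bound once more to control the success mass. I fix $k$ and work with the fixed-probe monitoring law $\mu_{e,k}$. Here $\mu_{0,k}$ is the reference law with success mass $p_{0,k}=\mu_{0,k}(\suc)$. The candidate $u_k$ carries margin $m=\bar\delta-\delta_0>0$.

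\textbf{Step 1 (survival).} Take any $e\le E^\star$. \Cref{cor:rate} gives $u_k\in\HH^{\bar\delta}_\phi(\mu_{e,k})$, so $c_{\mu_{e,k}}(u_k)\ge1-\bar\delta$. Internally this comes from \Cref{thm:stab}, via $\varepsilon_{e,k}\le\frac12(H-1)V_e\le c\eta(H-1)e\le mp_{0,k}$. This holds because $e\le E^\star=\lfloor mp_{0,k}/(c\eta(H-1))\rfloor$.

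\textbf{Step 2 (success mass).} \Cref{lem:sim} bounds the trajectory-law drift by $\TV(\mu_{e,k},\mu_{0,k})\le\varepsilon_{e,k}$. Applied to the event $\suc$, this gives $p_{e,k}\ge p_{0,k}-\varepsilon_{e,k}$. Step 1 gives $\varepsilon_{e,k}\le mp_{0,k}$, so $p_{e,k}\ge p_{0,k}(1-m)$. Since $m=\bar\delta-\delta_0\le\bar\delta<1$, this is strictly positive. This justifies the choice $\underline p_k=p_{0,k}(1-m)$. The supplied $\underline p_k$ may also be used directly if it is given as a hypothesis.

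\textbf{Step 3 (value).} Apply the second display of \Cref{thm:value} with $\delta=\bar\delta$. Step 1 supplies the membership hypothesis and Step 2 supplies the success-mass floor. This gives $V_{e,k}\ge r_k\underline p_k(1-\bar\delta)-\zeta_{e,k}\ge r_k\underline p_k(1-\bar\delta)-\bar\zeta_k$. The floor holds uniformly for every $e\le E^\star$, and $E^\star=\Omega(1/\eta)$, which gives the stated $\Omega(\frac1\eta)$ value floor.

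\textbf{Main obstacle.} The delicate point is bookkeeping in Steps 1 and 2. The drift constant $\varepsilon_{e,k}$ used for the success-mass bound must be the same one certified small by the horizon computation. This means checking that $\frac12(H-1)\cdot2c\eta e\le mp_{0,k}$, with the factors of $2$ from $V_e\le2c\eta e$ and from \Cref{lem:sim} cancelling.

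We also need $\varepsilon_{e,k}<\mu_{0,k}(A_k)$ so that \Cref{lem:condcov} applies. Since $\mu_{0,k}(A_k)=p_{0,k}(1-\delta_0)$, this reduces to $m<1-\delta_0$, which holds because $\bar\delta<1$. Everything else is a direct substitution into \Cref{thm:value}.
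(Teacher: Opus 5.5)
Your proposal is correct and follows essentially the same route as the paper: invoke \Cref{cor:rate} for membership through $E^\star$, apply the value floor of \Cref{thm:value} with $\delta=\bar\delta$, and obtain the endogenous floor from $p_{e,k}\ge p_{0,k}-\varepsilon_{e,k}$ together with $\varepsilon_{e,k}\le mp_{0,k}$ and $0<m<1$. Your extra bookkeeping, namely the cancelling factors of $2$ and the check $\varepsilon_{e,k}<p_{0,k}(1-\delta_0)$, makes explicit what the paper leaves to the proof of \Cref{cor:rate}.
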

A supplied policy or option library can be evaluated by monitoring core-compatible success mass together with its execution radius. \Cref{thm:value} converts these quantities into selection and transfer-regret guarantees, while \Cref{prop:exec-drift} requires monitoring both.

\subsection{The robust core}
\begin{definition}[Robust (partner-universal) core]\label{def:robust}
For success-enabling peer policies $\Pi_2^+$, define
$\HH_{\mathrm{rob}}=\bigcap_{\pi_2\in\Pi_2^+}\HH^0_\phi(\mu(\pi_2))$ and $\Corerob=\max_\Subseq\HH_{\mathrm{rob}}$. The intersection is taken over families, not frontiers.
\end{definition}
For any peer policy $\pi_2$, write
$\operatorname{Supp}_{\suc}(\pi_2):=\{\tau\in\suc:\mu(\pi_2)(\tau)>0\}$.
\begin{proposition}[Overlap collapses to the robust core]\label{prop:overlap}
The intersection of any two success-enabling episode families contains $\HH_{\mathrm{rob}}$. For a visited peer sequence $\{\pi_2^e\}_{e\ge0}\subseteq\Pi_2^+$, if
\[
\bigcup_{e\ge0}\operatorname{Supp}_{\suc}(\pi_2^e)
=\bigcup_{\pi_2\in\Pi_2^+}\operatorname{Supp}_{\suc}(\pi_2),
\]
then $\bigcap_{e\ge0}\HH^0_\phi(\mu_e)=\HH_{\mathrm{rob}}$. In SRC, top and bottom routes share only $(s_0,g)$, so $\Corerob=\{(s_0,g)\}$.
\end{proposition}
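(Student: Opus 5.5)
The plan is to reduce everything to one pointwise characterization of the support family, and then get the three claims by elementary set manipulations.

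\textbf{Key reformulation.} For a fixed focal probe, common trajectory space, success event $\suc$ and abstraction $\phi$, I will first record that
\[
u\in\HH^0_\phi(\mu(\pi_2))\iff u\Subseq\phi(\tau)\ \text{for every }\tau\in\operatorname{Supp}_{\suc}(\pi_2).
\]
The argument is short. The trajectory space is finite, since $\Sa,\Aa$ are finite and lengths are at most $H$, so $\mu(\pi_2)$ is a discrete law. Because $\pi_2\in\Pi_2^+$, we have $\mu(\pi_2)(\suc)>0$ and the coverage is defined. Coverage equals $1$ exactly when the $\mu(\pi_2)$-mass of successes not containing $u$ vanishes, that is, when no positive-mass success omits $u$. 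Writing $F(T):=\{u:\forall\tau\in T,\ u\Subseq\phi(\tau)\}$ for a set $T$ of successes, this says $\HH^0_\phi(\mu(\pi_2))=F(\operatorname{Supp}_{\suc}(\pi_2))$. The map $F$ is antitone and turns unions into intersections: $F(\bigcup_iT_i)=\bigcap_iF(T_i)$.

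\textbf{First two claims.} The first claim is immediate from the definition. $\HH_{\mathrm{rob}}$ is an intersection over all of $\Pi_2^+$, so it is contained in each member family, and hence in the intersection of any two of them. For the second claim, the union rule gives
\[
\bigcap_{e\ge0}\HH^0_\phi(\mu_e)=F\Big(\bigcup_e\operatorname{Supp}_{\suc}(\pi_2^e)\Big)
\quad\text{and}\quad
\HH_{\mathrm{rob}}=F\Big(\bigcup_{\pi_2\in\Pi_2^+}\operatorname{Supp}_{\suc}(\pi_2)\Big).
\]
The hypothesis says the two unions coincide, so the two families are equal. Note that only the inclusion $\supseteq$ of the hypothesis is substantive; the reverse inclusion holds automatically because the visited sequence lies in $\Pi_2^+$.

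\textbf{SRC claim.} In the two-route corridor, a peer that always keeps and a peer that always diverts are both success-enabling. Their success supports are, respectively, the top-route success strings and the bottom-route success strings. Hence
\[
\HH_{\mathrm{rob}}\subseteq\HH^0_\phi(\mu_{\mathrm{top}})\cap\HH^0_\phi(\mu_{\mathrm{bot}}),
\]
and every element of the right-hand side is a common subsequence of a top string and a bottom string. The routes share only the start label and the goal label. So any common subsequence uses only these symbols, in this order, and is therefore a subsequence of $(s_0,g)$.

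Conversely, every success under every peer begins with the $s_0$ label and ends with $g$. So $(s_0,g)\Subseq\phi(\tau)$ for all successes, which places $(s_0,g)\in\HH_{\mathrm{rob}}$. Together these give $\HH_{\mathrm{rob}}=\{u:u\Subseq(s_0,g)\}$, whose unique maximal element is $(s_0,g)$, so $\Corerob=\{(s_0,g)\}$.

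\textbf{Main obstacle.} I expect the only delicate point to be this last step: making precise that the two routes share only start and goal labels under the chosen $\phi$. In particular, with the identity abstraction, the first decision pair at $s_0$ must carry the same label on both routes. This is a focal action whose label is unaffected by the peer's choice, since it is the peer that diverts. I would make this explicit from the SRC construction before concluding.
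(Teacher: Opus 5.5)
Your argument for the first two claims is the paper's own. The paper also reduces coverage one under $\mu(\pi_2)$ to occurrence in every trajectory of $\operatorname{Supp}_{\suc}(\pi_2)$, so intersecting families over peers means requiring occurrence on the union of supports, which the displayed hypothesis identifies. Your map $F$ just names this step, and your remark that only the inclusion $\supseteq$ of the hypothesis carries content is correct.

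The paper gives no separate proof of the SRC sentence; it reads it off the construction in \Cref{app:exp}. Your two-witness argument is therefore a real addition, but two details need fixing.

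First, SRC is not the \textsf{keep}/\textsf{divert} game of \Cref{prop:p2example}. In SRC the focal agent chooses the route (up or down at $s_0$), and the peer's \textsf{hold}/\textsf{clear} action decides which gate can open. The success-enabling witnesses are:
\begin{itemize}
\item the always-hold peer: the rock stays intact, so only top-route successes have mass;
\item the always-clear peer: the rock is cleared on the first transition, so only bottom-route successes have mass.
\end{itemize}
Your common-subsequence step also needs one top and one bottom success in which $s_0$ and $g$ each occur exactly once. These exist, since the probe leaves $s_0$ on its first move and $g$ is terminal.

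Second, your ``main obstacle'' is resolved by the SRC abstraction $\phi(z,\cdot)=f$, not by the reason you give. This abstraction records only focal position, so the first label is $s_0$ on both routes whichever action the focal agent takes. Your proposed justification is false. Under the identity abstraction the first labels are $((s_0,0),\text{up})$ and $((s_0,0),\text{down})$, which differ, so the two witnesses share only $g_\Sigma$. The robust core would then be $\{(g_\Sigma)\}$, not $\{(s_0,g)\}$. The claim must therefore be proved for the position abstraction, and you should not try to justify the identity-abstraction version.
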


\section{Empirical Validation}\label{sec:exp}
\begin{figure}[t]
\centering
\includegraphics[width=0.98\linewidth]{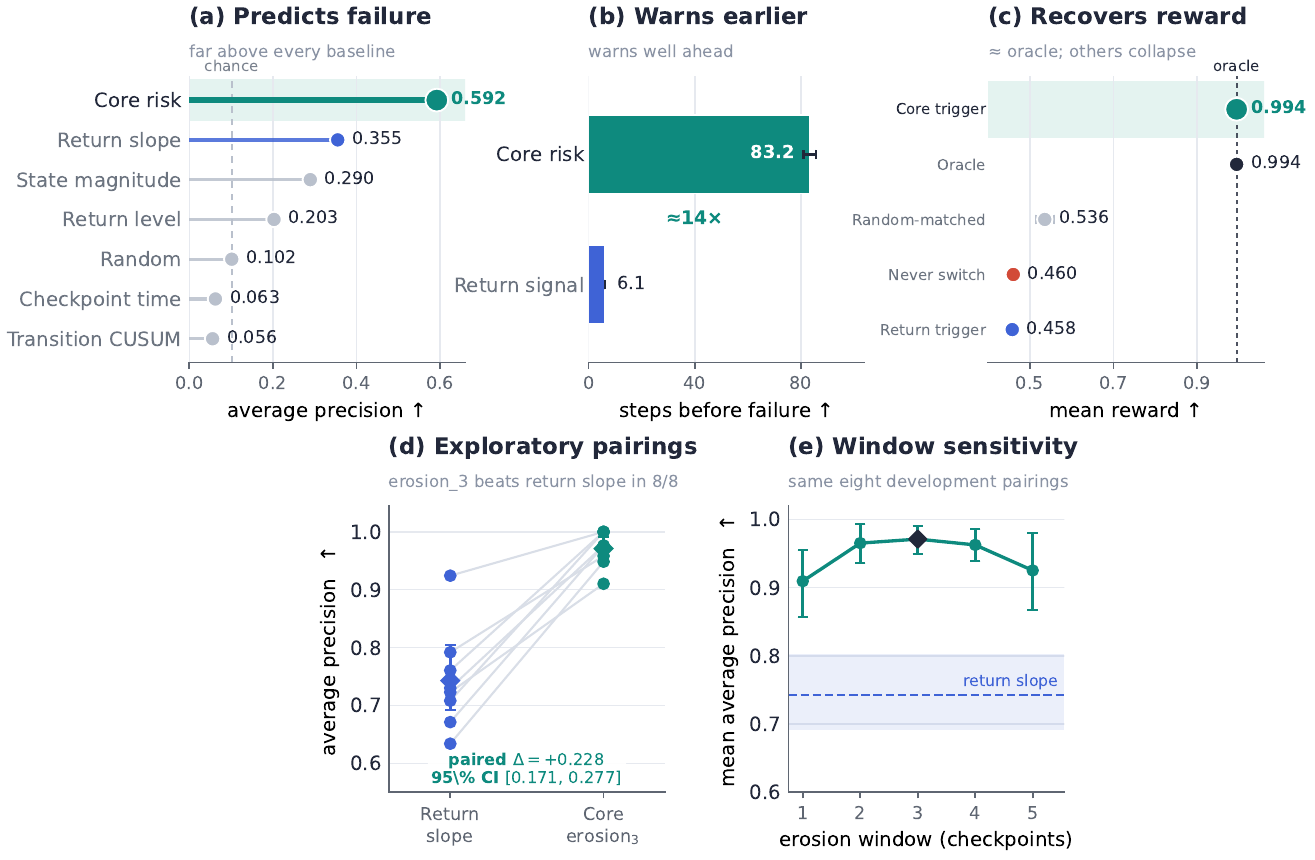}
\caption{\small\textbf{Core erosion and performance failure.} \emph{(a)--(c), E10 confirmation:} Across 64 streams, core risk predicts failure, warns earlier, and triggers near-oracle control. \emph{(d)--(e), E11 exploratory reanalysis:} Under learned LBF convention shift, core erosion beats return slope in all eight development pairings; panel (e) shows sensitivity across erosion windows. Bars, bands, and error bars are paired 95\% intervals resampling complete streams or pairings.}
\label{fig:main-experiments}
\end{figure}
The experiments serve three roles. E1--E9 test the structural laws in the two-player Switch--Rock Corridor (SRC), including discontinuity, stability, inverse-rate lifetime, sharpness, estimation, and executability. Exact population coverage gives first-exit time linear in $\frac{1}{\eta}$ ($R^2>0.9999$), and all 50 finite-rollout grid exits match their exact-grid counterparts. E10 and E12 prospectively test whether coverage erosion predicts failure and supports intervention in continual control and cue-MNIST. E11 explores the complete peer-update--coverage--failure chain in LBF. In these experiments, \emph{core erosion} means a decline in the coverage, or its lower confidence bound, of a fixed core candidate rather than a distance between two maximal frontiers.

Average precision (AP) summarizes the precision--recall ranking of prospective failure labels; higher AP means better separation of impending failures from eligible non-failures. Control and cue-MNIST are frozen-design confirmations: their protocols and analyses were fixed on development seeds before evaluation on disjoint confirmation seeds.

\paragraph{Formal correspondence of the measured objects.}
The four suites use the same conditional-event quantity at different levels.
\begin{enumerate}[topsep=2pt,itemsep=1pt,leftmargin=1.5em,label=(\roman*)]
\item \textbf{SRC (E1--E9):} $\mu_e$ is the fixed focal-probe trajectory law under peer checkpoint $e$, $\suc$ is goal reachability, and $A=\suc\cap\{u\Subseq\phi(\tau)\}$.  Thus the plotted population quantity is exactly $c_{\mu_e}(u)=\mu_e(A\mid\suc)$ from \Cref{def:core}.
\item \textbf{Continual control (E10):} draw $X_0$ uniformly from the fixed reference interval $X$ and evaluate the next state under the two extreme disturbances. The candidate event $A_{\mathrm{ctrl}}$ occurs when both next states remain in $X$. Because the entire reference population is designated successful, its conditional coverage is exactly the normalized size of the estimated viability set:
$\mu_t(A_{\mathrm{ctrl}}\mid\suc)=\frac{\operatorname{Leb}(\widehat K_t)}{\operatorname{Leb}(X)}=\widehat c_t$. Emitting one symbol on $A_{\mathrm{ctrl}}$ makes this a length-one coverage monitor with $p_t=1$.
\item \textbf{Cue-MNIST (E12):} draw index $I$ uniformly from the fixed probe set and let $\mu_t$ be the empirical law of
$Z_t=(y_I,f_t(x_I,q_{\mathrm{ref}}(y_I)),(f_t(x_I,c))_{c\in U_3})$, where $q_{\mathrm{ref}}(y)=(y+1)\bmod10$, $U_3=\{0,\ldots,9\}$, and $f_t(x,c)$ predicts from the image rendered with cue $c$. On this common coordinate space define the fixed events $\suc_{\mathrm{cue}}=\{Z_{\mathrm{ref}}=Z_y\}$ and
$A_{\mathrm{cue}}=\suc_{\mathrm{cue}}\cap\{Z_c=Z_y\ \forall c\in U_3\}$.  Then $\widehat c_t=\mu_t(A_{\mathrm{cue}}\mid\suc_{\mathrm{cue}})$ exactly; the reported $C_t$ is its Wilson lower bound.
\item \textbf{LBF (E11):} $\mu_t$ is again a fixed focal-probe trajectory law under a learned peer checkpoint, $\suc$ is successful coordination, $u$ is the discovered left-route sequence, and $L_t$ lower-bounds $c_{\mu_t}(u)$. This peer-induced instantiation uses exploratory erosion in place of the registered absolute-risk predictor; the three-checkpoint window matches the already-frozen failure horizon.
\end{enumerate}
Consequently E1--E9 test the structural laws directly, E10 and E12 test the downstream prediction-and-intervention use of the same coverage functional, and E11 traces the full peer-update--coverage--failure chain under learned partners.  Detailed constructions appear in \Cref{app:theorytests,app:continual-control,app:cue-mnist,app:lbf-exploratory}.

The registered studies compare specified monitoring signals and intervention triggers within each task. In continual control, 64 confirmation streams give core-risk AP $0.592$ versus $0.355$ for return slope. Core-triggered reward is $0.9944$ versus $0.4578$ for the return trigger, $0.4604$ without intervention, and $0.5356$ for random-matched switching, close to the $0.9945$ oracle (\Cref{fig:main-experiments}).

In continual cue-MNIST, 64 untouched supervised streams give erosion AP $0.384$ versus $0.345$ for cue association (paired $+0.039$, 95\% CI $[0.022,0.055]$). Core-triggered replay obtains $0.9500$ reward versus $0.9382$ for an accuracy trigger (paired $+0.0117$, $[0.0113,0.0121]$), matching the $0.9502$ oracle (\Cref{fig:mnist-confirmation}).

In a controlled Level-Based Foraging coordination fork, peers use a feed-forward logistic route policy updated by sampled REINFORCE. An exploratory reanalysis of eight development pairings suggests that fixed-probe, three-checkpoint core erosion predicts subsequent old-partner failure, reaching AP $0.971$ versus $0.743$ for return slope (paired difference $0.228$, 95\% CI $[0.171,0.277]$; \Cref{fig:main-experiments}). The registered absolute-risk score attained AP $0.526$, below return slope; the post-hoc switch to erosion used a window matching the already-frozen three-checkpoint failure horizon. Evaluation on the same development pairings remains exploratory. The coordination-fork schematic is shown at right in \Cref{fig:task-schematics}.

\FloatBarrier
\section{Conclusion}
Ordinary Dec-MARL presents each focal agent with a continual RL problem as its peers update, even though the joint Markov game remains stationary. Combining established sharp conditioning bounds with peer-induced trajectory drift gives a structural-lifetime analysis: positive coverage margins certify unconditional $\Omega(\frac{1}{\eta})$ survival, effective conflict yields a non-vacuous conditional $\Theta(\frac{1}{\eta})$ first-exit law, and calibrated executability gives policy-value and transfer guarantees. This chain connects peer learning to the persistence and usefulness of reusable structure. Registered E10 and E12 show that core erosion predicts failure and enables near-oracle intervention; exploratory E11 suggests that erosion also anticipates peer-induced old-partner failure.

\paragraph{Reproducibility statement.}
Complete proofs, protocols, per-seed statistics, and fixed-seed figure-regeneration code are supplied in \Cref{app:proofs,app:exp}.

\bibliographystyle{iclr2026_conference}
\bibliography{iclr2026_conference}

\appendix
\section{Notation}\label{app:notation}
\begin{table}[ht]
\centering\scriptsize
\renewcommand{\arraystretch}{1.15}
\begin{tabular}{@{}p{0.19\linewidth}@{\hspace{6pt}}p{0.74\linewidth}@{}}
\toprule
\multicolumn{2}{@{}l}{\textbf{Task and trajectories}}\\
\midrule
$\Mm=(\Sa,\Aa,P,R,H,\Gg)$ & finite-horizon goal-conditioned MDP: states, actions, kernel, reward, horizon, goal set; fixed start $s_0$ \\
$\Gg\subseteq\Sa$ & goal set; episodes terminate on first visit \\
$H$ & label horizon: at most $H$ abstract labels and $H-1$ stochastic transitions \\
$\tau=(s_0,a_0,\dots,s_T)$ & a raw trajectory with $T\le H-1$ transitions; $\suc$ is the set of trajectories whose terminal state lies in $\Gg$ \\
$\phi:\Sa\times\Aa\to\Sigmaa$ & optional state--action abstraction; $\phi(\tau)$ is the induced string over $\Sigmaa$ \\
$g_\Sigma\in\Sigmaa$ & distinguished terminal symbol appended on first entry into $\Gg$, so $|\phi(\tau)|\le H$ \\
$u\Subseq v$ & $u$ is a (not necessarily contiguous) subsequence of $v$ \\
$\mu$ has full conditional support & every feasible successful trajectory has positive $\mu$-mass \\
\midrule
\multicolumn{2}{@{}l}{\textbf{Coverage, core, and family} (\Cref{def:core}, \Cref{lem:generate})}\\
\midrule
$\mu$ & reference (probe-induced) trajectory law; $p_0=\mu_0(\suc)>0$ \\
$c_\mu(u)$ & \emph{coverage}: $\Pr_{\tau\sim\mu}[u\Subseq\phi(\tau)\mid\tau\in\suc]$ \\
$\delta\in[0,1)$ & coverage slack: $u\in\HH^\delta$ iff $c_\mu(u)\ge 1-\delta$. \emph{Margin} is reserved for $m=\bar\delta-\delta_0$, the slack to a fixed target family (\Cref{cor:rate}) \\
$\HH^\delta_\phi(\mu)$ & \emph{high-coverage family}: $\{u:c_\mu(u)\ge 1-\delta\}$ (hereditary under $\Subseq$) \\
$\Core^\delta_\phi(\mu)$ & \emph{$\delta$-core}: the $\Subseq$-maximal \emph{frontier} $\max_\Subseq\HH^\delta_\phi(\mu)$, which generates the family \\
``generated by the core'' & $u\in\HH^\delta_\phi(\mu)$, i.e.\ $u\Subseq v$ for some frontier $v\in\Core^\delta_\phi(\mu)$ \\
$\phi$-bottleneck & symbol $\sigma$ with $c_\mu(\sigma)=1$; equivalently a step-cut of $G_{\suc}$ (\Cref{thm:exist}) \\
$G_{\suc}$ & action-labelled time-unrolled success graph: vertices $(s,t)$ and edges $(s,t)\xrightarrow{a}(s',t+1)$ from $\mu$-positive successes; each edge carries $\phi(s,a)$ \\
$o_\sigma$ & time-aware option $(I_\sigma,\pi_\sigma,\beta_\sigma)$ realizing $\sigma$: initiation set, policy, termination (\Cref{ass:exec}) \\
$V^\star,\,\mathcal P_i(t)$ & optimal almost-sure goal-reaching cost and admissible policies for landmark segment $i$ (\Cref{prop:chain}) \\
\midrule
\multicolumn{2}{@{}l}{\textbf{Estimation} (\Cref{sec:estimation})}\\
\midrule
$\widehat c_N(u)$ & empirical coverage from $N$ probe rollouts; $N_{\suc}$ successful among them \\
$t,\,\gamma,\,\rho$ & estimation tolerance, gap to the threshold $1-\delta$, and failure probability \\
\midrule
\multicolumn{2}{@{}l}{\textbf{Decentralized game and drift} (\Cref{sec:marl})}\\
\midrule
$\pi_{-1}^e,\,\pi_2^e$ & joint peer policy and its one-peer specialization at episode $e$; state-based and fixed within the episode \\
$\MarkovGame$ & stationary $n$-player Markov game; distinct from the goal set $\Gg$ \\
$\Mm_e,\,P_e,\,\mu_e$ & focal-induced MDP, kernel, and probe law at episode $e$ \\
$\mathcal C=(\Mm_e)_{e\ge0}$ & focal continual RL process induced by the peers (\Cref{prop:reduction}) \\
$\norm{P-Q}_{1,\infty}$ & rowwise difference $\sup_{s,a_1}\sum_{s'}|P(s'\mid s,a_1)-Q(s'\mid s,a_1)|$ \\
$\TV(\cdot,\cdot)$ & total variation distance \\
$\varepsilon_e$ & trajectory-law drift bound $\frac{1}{2}(H-1)\norm{P_e-P_0}_{1,\infty}$ (\Cref{lem:sim}) \\
$V_e$ & variation budget $\sum_{j=1}^{e}\norm{P_j-P_{j-1}}_{1,\infty}$ \\
$\delta'$ & drifted slack; $\delta'\le\delta+\frac{\varepsilon_e}{p_0}$ (\Cref{thm:stab}) \\
$\eta,\,c$ & peer step size and score-to-TV constant (\Cref{rem:pg}) \\
$\mathcal K_e^{\delta_{\mathrm{ref}}\to\bar\delta}$ & reference patterns from $\HH^{\delta_{\mathrm{ref}}}_\phi(\mu_0)$ that remain in the target family $\HH^{\bar\delta}_\phi(\mu_e)$ \\
$E^\star,\,E_{\mathrm{exit}}^{\bar\delta}(u)$ & certified survival horizon and first episode outside the target family (\Cref{prop:erosion,cor:rate}); membership holds through $E^\star$, so exit is at least $E^\star+1$ \\
$\Pi_2^+$ & peer policies with positive probe success, $\{\pi_2:\mu(\pi_2)(\suc)>0\}$ \\
$\operatorname{Supp}_{\suc}(\pi_2)$ & positive-mass successful trajectories under peer policy $\pi_2$ \\
$\HH_{\mathrm{rob}},\,\Corerob$ & robust (partner-universal) family $\bigcap_{\pi_2\in\Pi_2^+}\HH^0_\phi(\mu(\pi_2))$ and its frontier (\Cref{def:robust}) \\
\midrule
\multicolumn{2}{@{}l}{\textbf{Policy-value transfer} (\Cref{sec:value})}\\
\midrule
$\mu_{e,k},\,\nu_{e,k}$ & monitoring and deployment trajectory laws for supplied policy or option chain $\pi^k$ \\
$A_k,\,\Gamma_{e,k}$ & core-compatible success event and mass: $A_k=\suc\cap\{u_k\Subseq\phi(\tau)\}$, $\Gamma_{e,k}=\mu_{e,k}(A_k)=p_{e,k}c_{\mu_{e,k}}(u_k)$ \\
$G_{e,k},\,V_{e,k}$ & total trajectory return and deployment value $\E_{\nu_{e,k}}G_{e,k}$ \\
$\chi_{e,k},\,\xi_{e,k},\,b_{e,k}$ & shift in the probability of $A_k$, compatible-return error, and absolute return mass outside $A_k$ (\Cref{ass:calibration}) \\
$\zeta_{e,k}$ & performance-calibration radius $\xi_{e,k}+b_{e,k}+r_k\chi_{e,k}$ \\
$\widehat\Gamma_{e,k},\,L_{e,k}$ & empirical compatible mass and its calibrated lower confidence value (\Cref{thm:value}) \\
\midrule
\multicolumn{2}{@{}l}{\textbf{Domain (SRC-$L$-$N$, \Cref{app:exp})}}\\
\midrule
$L,\,N$ & corridor length and number of peers \\
$\alpha,\,q$ & per-peer clear probability; induced scalar $q=1-(1-\alpha)^N$ \\
$s_0,\,g$ & start and goal; $t_i,b_i$ top- and bottom-route positions \\
$\rho_{\mathrm{rock}}$ & binary rock flag; $\beta$ peer hold-probability (E5) \\
\bottomrule
\end{tabular}
\caption{Notation.}
\label{tab:notation}
\end{table}

\clearpage
\section{Full Proofs}\label{app:proofs}
\subsection{Standard tools and conventions}\label{app:tools}
These facts are standard background tools rather than assumptions on the Markov game.

\paragraph{Total variation and Markov kernels.}
For probability measures $p,q$ on a finite or countable space,
\[
\TV(p,q):=\sup_A|p(A)-q(A)|=\frac{1}{2}\norm{p-q}_1.
\]
Thus $|p(A)-q(A)|\le\TV(p,q)$ for every event $A$. A Markov kernel is an $\ell_1$ contraction:
\[
\norm{pK-qK}_1\le\norm{p-q}_1.
\]
Indeed, summing the absolute signed mass after applying $K$ and then using that each row of $K$ sums to one gives the right-hand side. Combining contraction with the triangle inequality yields the one-step comparison
\begin{equation}\label{eq:kernel-comparison}
\norm{pK-qL}_1
\le \norm{p-q}_1+\sup_x\norm{K(x,\cdot)-L(x,\cdot)}_1.
\end{equation}
For controlled kernels we use the rowwise norm
\[
\norm{P-Q}_{1,\infty}
:=\sup_{s,a}\sum_{s'}|P(s'\mid s,a)-Q(s'\mid s,a)|.
\]
\Cref{eq:kernel-comparison} is the elementary telescoping step behind the factor $H-1$ in trajectory simulation.

\paragraph{Independent products.}
Tensoring with a common probability factor leaves total variation unchanged, so
\[
\TV(p\otimes r,q\otimes r)=\TV(p,q).
\]
Replacing independent factors one at a time and applying the triangle inequality therefore gives
\begin{equation}\label{eq:product-tv}
\TV\!\left(\bigotimes_{j=1}^{m}p_j,\bigotimes_{j=1}^{m}q_j\right)
\le\sum_{j=1}^{m}\TV(p_j,q_j).
\end{equation}

\paragraph{Concentration and simultaneous events.}
If $N_{\suc}\sim\mathrm{Bin}(N,p_0)$, the multiplicative Chernoff bound at half the mean is
\begin{equation}\label{eq:chernoff-tool}
\Pr\!\left[N_{\suc}<\frac{Np_0}{2}\right]\le\exp\!\left(-\frac{Np_0}{8}\right).
\end{equation}
If $\widehat c$ is the mean of $n$ independent Bernoulli variables with mean $c$, Hoeffding's inequality gives
\begin{equation}\label{eq:hoeffding-tool}
\Pr\!\left[|\widehat c-c|>t\right]\le2\exp(-2nt^2).
\end{equation}
For events $E_1,\ldots,E_M$, the union bound $\Pr(\bigcup_iE_i)\le\sum_i\Pr(E_i)$ converts these pointwise statements into the simultaneous guarantees used for fixed candidate families and policy libraries. No independence across candidates or checkpoints is required for this last step.

\paragraph{Two-point testing.}
For $x,y\in(0,1)$, Bernoulli relative entropy satisfies
\begin{equation}\label{eq:bernoulli-kl-tool}
\mathrm{KL}\!\left(\mathrm{Bern}(x)\Vert\mathrm{Bern}(y)\right)
\le\frac{(x-y)^2}{y(1-y)}.
\end{equation}
For product observations, relative entropy tensorizes and Pinsker's inequality converts it to total variation:
\begin{equation}\label{eq:testing-tools}
\begin{aligned}
\mathrm{KL}(P^{\otimes N}\Vert Q^{\otimes N})
&=N\,\mathrm{KL}(P\Vert Q),\\
\TV(P,Q)&\le\sqrt{\frac{1}{2}\mathrm{KL}(P\Vert Q)}.
\end{aligned}
\end{equation}
Finally, every binary test $\psi$ obeys the two-point bound
\begin{equation}\label{eq:two-point-tool}
\max\!\left\{P(\psi=1),Q(\psi=0)\right\}
\ge\frac{1-\TV(P,Q)}{2}.
\end{equation}
These facts turn small information distance between two threshold-separated models into a necessary sample size.

\paragraph{Smoothness and logistic policies.}
If $f$ has an $L$-Lipschitz gradient, integration along the line segment from $x$ to $x+h$ gives the descent lemma
\begin{equation}\label{eq:descent-tool}
f(x+h)\le f(x)+\langle\nabla f(x),h\rangle+\frac{L}{2}\norm{h}_2^2.
\end{equation}
For the logistic map $\varsigma(x)=(1+e^{-x})^{-1}$,
\begin{equation}\label{eq:logistic-tool}
\varsigma'(x)=\varsigma(x)(1-\varsigma(x)),
\qquad |\varsigma'(x)|\le\frac{1}{4},
\end{equation}
so $\varsigma$ is $\frac{1}{4}$-Lipschitz. \Cref{eq:descent-tool} controls the second-order cost of a policy update, while \Cref{eq:logistic-tool} converts parameter movement into policy total variation in the analytic example.

The sharp conditioning relation used in \Cref{lem:condcov} is also established background: it specializes the total-variation model of \citet[Section~2.1 and Proposition~3.2]{pelessoni2022dilation}. The proof below makes the specialization and its equality case explicit.

\subsection{Proofs}

\paragraph{Proof of \Cref{lem:generate}.}
If $u\preceq w\preceq\phi(\tau)$ then $u\preceq\phi(\tau)$, so the occurrence event for $w$ is contained in that for $u$ and $c_\mu(u)\ge c_\mu(w)$. Thus $\HH^\delta_\phi(\mu)$ is hereditary. For any $u$ in this family, choose among its finitely many high-coverage extensions one of maximum length. It is maximal, and heredity places everything below every such maximal element back in the family. \hfill$\square$

\paragraph{Proof of \Cref{thm:exist}.}
\proofstep{Step 1: establish non-emptiness} By the goal-symbol convention every $\tau\in\suc$ satisfies $g_\Sigma\preceq\phi(\tau)$, so $c_\mu(g_\Sigma)=1\ge 1-\delta$. Thus $(g_\Sigma)$ belongs to the finite high-coverage family, which consequently has a $\preceq$-maximal element. This proves part (a).

\proofstep{Step 2: extend a bottleneck to the frontier} Let $\sigma$ be a $\phi$-bottleneck. Since $\sigma\ne g_\Sigma$ and $g_\Sigma$ is the unique terminal label, every positive-mass success contains $\sigma$ before $g_\Sigma$. Hence $(\sigma,g_\Sigma)$ is common. Extending this common sequence within the finite hereditary family yields a maximal element $u^\star$ with $(\sigma,g_\Sigma)\preceq u^\star$. In particular, $u^\star$ strictly extends $(g_\Sigma)$.

\proofstep{Step 3: identify the cut condition} Work in the action-labelled time-unrolled success graph $G_{\suc}$ defined before the theorem.  A path records its state, time, and action at every edge.  Each edge has positive transition probability, so the Markov property makes the product of its transition probabilities positive; an $s_0$--$\Gg$ path is therefore a feasible success.  Full conditional support assigns every such success positive $\mu$-mass.  Conversely, every positive-mass success traces one of these paths. If the $\sigma$-labelled edges are not a step-cut, a path avoids them and gives a positive-mass success without $\sigma$, hence $c_\mu(\sigma)<1$. Conversely, a step-cut forces every success to take a $\sigma$-labelled step. This proves the equivalence in part (b).

\proofstep{Step 4: assemble ordered bottlenecks} If $\sigma_1,\ldots,\sigma_m$ are order-consistent, each positive-mass success has indices $i_1<\cdots<i_m$ carrying them. All these labels precede the unique terminal label, so $(\sigma_1,\ldots,\sigma_m,g_\Sigma)$ is common. Any maximal common extension has length at least $m+1$. This proves part (c). \hfill$\square$

\paragraph{Why time indexing is necessary.}
The un-indexed union of success edges can splice transitions from different successes into an infeasible walk. With $H=4$ and successes $s_0\!\to\!a\!\to\!z\!\to\!g$, $s_0\!\to\!a\!\to\!b\!\to\!g$, $s_0\!\to\!b\!\to\!c\!\to\!g$, and $s_0\!\to\!c\!\to\!g$, label every success so that it contains $\sigma$. The spliced walk $s_0\!\to\!a\!\to\!b\!\to\!c\!\to\!g$ can avoid $\sigma$, but it uses four transitions and exceeds the budget $H-1=3$. Such walks are absent from $G_{\suc}$ because time is part of each vertex.

\paragraph{Proof of \Cref{prop:sound}.}
By \Cref{lem:generate}, $u\in\HH^0_\phi(\mu)$ means $c_\mu(u)=1$, i.e.\ $\mu(\{u\preceq\phi(\tau)\}\mid\suc)=1$, so every $\mu$-positive success contains $u$. Under the full-support hypothesis every optimal $\tau^\star$ has $\mu(\tau^\star)>0$, hence $u\preceq\phi(\tau^\star)$. Restricting the feasible set to trajectories containing all core elements therefore removes no optimal successful trajectory. Thus the core restriction is admissible for optimal successful-trajectory search; expected-return guarantees follow after executability calibration, as formalized in \Cref{thm:value}. \hfill$\square$

\paragraph{Proof of \Cref{prop:chain}.}
\proofstep{Step 1: establish goal-reaching} Assumption (A1) starts the first option, (A2) realizes every $\sigma_i$, (A3) chains consecutive options, and (A4)--(A5) terminate the chain in $\Gg$ within $H-1$ transitions. Thus the chain reaches the goal almost surely.

\proofstep{Step 2: expose the separating states} Use the comparator $V^\star$ and admissible segment classes $\mathcal P_i(t)$ defined in the proposition. Index each landmark by the state its labelled step \emph{enters}: let $g_i$ be the unique entered state for $\sigma_i$, put $g_0=s_0$, and let $g_{r+1}\in\Gg$ be entered by the final $g_\Sigma$-labelled step. By hypothesis, every success enters $g_1,\ldots,g_r$ in order, so each $(g_i,\cdot)$ separates $s_0$ from $\Gg$ in the time-unrolled success graph.  Consequently every policy admissible for $V^\star(g_{i-1},t)$ has a first arrival at $g_i$ and decomposes into an admissible segment in $\mathcal P_i(t)$ followed by a continuation from $(g_i,T_i)$.

\proofstep{Step 3: write the Bellman decomposition} Let $T_i$ be the arrival time at $g_i$ and $C_i$ the cost of segment $i$, including the final goal-labelled transition in segment $r+1$. Backward induction on the time-augmented state gives
\[
V^\star(g_{r+1},t)=0,\qquad V^\star(g_{i-1},t)=\inf_{\pi_i\in\mathcal P_i(t)}\E\big[\,C_i+V^\star(g_i,T_i)\,\big|\,g_{i-1},t\,\big]\quad (i=r{+}1,\dots,1).
\]
The separators prevent probability mass from bypassing a segment, so this is exactly the Bellman equation on the ordered-landmark structure. By the proposition's attainment premise, each supplied option realizes its segment infimum; composing them therefore realizes $V^\star(s_0,0)$. The downstream value appears once, inside the recursion. For deterministic segment durations the recursion reduces to $\sum_i d^\star(g_{i-1},g_i)$. \hfill$\square$

\paragraph{Why entered-state singletons are needed.}
Indexing by the state a labelled step leaves would place the terminal symbol outside $\Gg$ and omit the final transition cost. Likewise, if $\phi^{-1}(\sigma_i)$ can enter two states $g_i,g_i'$ with different continuation values, a locally cheapest preceding option can terminate at the state with the larger downstream cost. The resulting greedy chain is strictly suboptimal, with a gap that can be $\Omega(H)$.

\paragraph{Proof of \Cref{prop:order}.}
Construct a start state with two actions leading to two deterministic branches.  Label the first branch in order by $(x,y,g_\Sigma)$ and the second by $(y,x,g_\Sigma)$, and let the probe select both start actions with positive probability.  These are exactly the positive-mass successes.  Their maximal common subsequences are $(x,g_\Sigma)$ and $(y,g_\Sigma)$, which form an antichain; neither linear ordering of $x,y$ is common. \hfill$\square$

\paragraph{Proof of \Cref{prop:reduction}.}
\proofstep{Step 1: marginalize one transition} Fix episode $e$ and a focal history ending at $(s_t,a_{1,t})$. The peers sample their joint action from the stated Markov kernel; persistent randomness is part of the state. Hence
\[
\Pr(s_{t+1}=s'\mid s_t,a_{1,t})
=\sum_{a_{-1}}P(s'\mid s_t,a_{1,t},a_{-1})\pi_{-1}^e(a_{-1}\mid s_t)
=P_e(s'\mid s_t,a_{1,t}).
\]

\proofstep{Step 2: lift the identity to complete trajectories} The initial laws agree. If the two models agree on every length-$t$ focal history, then the focal policy gives the same next-action law and Step 1 gives the same next-state law; therefore they agree on every length-$(t+1)$ history. Induction proves equality of the complete focal trajectory laws. The same marginalization gives conditional stage reward $R_e(s_t,a_{1,t})$, so expected additive returns agree. Success, abstraction, subsequence occurrence, coverage, and the core are functions of this law and therefore transport exactly.

\proofstep{Step 3: translate peer drift into MDP drift} For every $(s,a_1)$,
\[
\begin{aligned}
\sum_{s'}|P_e(s'\mid s,a_1)-P_{e-1}(s'\mid s,a_1)|
&\le \sum_{a_{-1}}|\pi_{-1}^e(a_{-1}\mid s)-\pi_{-1}^{e-1}(a_{-1}\mid s)|\\
&=2\TV(\pi_{-1}^e(\cdot\mid s),\pi_{-1}^{e-1}(\cdot\mid s)).
\end{aligned}
\]
Taking the supremum proves the kernel bound. Similarly,
\[
|R_e(s,a_1)-R_{e-1}(s,a_1)|
\le R_{\max}\sum_{a_{-1}}|\pi_{-1}^e(a_{-1}\mid s)-\pi_{-1}^{e-1}(a_{-1}\mid s)|
\le 2R_{\max}D_e.
\]

\proofstep{Step 4: separate independent peers} If the peers act conditionally independently, their joint law is a product. Replacing one factor at a time and using
the product bound \eqref{eq:product-tv} gives
$\TV(\bigotimes_{j=2}^n\pi_j^e,\bigotimes_{j=2}^n\pi_j^{e-1})
\le\sum_{j=2}^n\TV(\pi_j^e,\pi_j^{e-1})$ pointwise in $s$, which proves the final claim. \hfill$\square$

\paragraph{Proof of \Cref{prop:impossible}.}
The Switch--Rock family (\Cref{app:exp}) has, at $\alpha=0$, exactly the top-route successes, whose focal-position string is $(s_0,t_1,t_2,g)$; so $\Core^0_\phi(\mu_0)=\{(s_0,t_1,t_2,g)\}$ and $t_1$ is generated ($t_1\preceq(s_0,t_1,t_2,g)$, a $\phi$-bottleneck by \Cref{thm:exist}(b)). For $0<\alpha\le\frac{1}{2}$ the peer clears the rock with positive probability, opening the bottom route, while holding also has positive probability, preserving top-route successes. Thus both route strings occur with positive mass, $c_{\mu_\alpha}(t_1)<1$, and the frontier collapses to $\{(s_0,g)\}$. The kernel bound $\|P_\alpha-P_0\|_{1,\infty}\le 2\sup_s\TV(\pi_2^\alpha,\pi_2^0)=2\alpha$ follows from the definition of the induced kernel and the identity $\|\sum_{a_2}P(\cdot\mid s,a_1,a_2)\Delta\pi_2(a_2)\|_1\le\sum_{a_2}|\Delta\pi_2(a_2)|=2\TV$. Since the indicator jumps from $1$ to $0$ at $\alpha=0^+$ while the drift $\to 0$, no modulus of continuity exists. \hfill$\square$

\paragraph{Proof of \Cref{lem:sim}.}
\proofstep{Step 1: place both processes on one history space} Pad terminated trajectories with an absorbing state. Let $\nu_t^e,\nu_t^0$ be the resulting history laws after $t$ transitions, put $d_t:=\|\nu_t^e-\nu_t^0\|_1$, and write $\Delta:=\|P_e-P_0\|_{1,\infty}$. Because the focal probe is fixed, the two history-extension kernels $K_e,K_0$ differ by at most $\Delta$ in rowwise $\ell_1$.

\proofstep{Step 2: telescope the one-step discrepancy} Applying the one-step kernel comparison \eqref{eq:kernel-comparison} gives
\[
\begin{aligned}
d_{t+1}
&=\|\nu_t^eK_e-\nu_t^0K_0\|_1\\
&\le \|\nu_t^eK_e-\nu_t^0K_e\|_1
   +\|\nu_t^0K_e-\nu_t^0K_0\|_1
\le d_t+\Delta .
\end{aligned}
\]
Since $d_0=0$, induction yields $d_{H-1}\le(H-1)\Delta$.

\proofstep{Step 3: convert $\ell_1$ distance to total variation} Using the total-variation identity stated in \Cref{app:tools} proves
$\|\mu_e-\mu_0\|_{\TV}\le\frac{1}{2}(H-1)\Delta=\varepsilon_e$. \hfill$\square$

\paragraph{Proof of \Cref{lem:condcov}.}
\emph{Connection to conditioning literature.} In the finite trajectory space, the probability laws in the total-variation model with reference $P_0=\mu$ and parameters $b=1$, $a=-\varepsilon$ are exactly those with $\TV(\nu,\mu)\le\varepsilon$. For $0<\varepsilon<\mu(A)$ and $\suc\setminus A\ne\varnothing$, Proposition~3.2 of \citet{pelessoni2022dilation}, with conditioning event $B=\suc$, gives the lower conditional envelope
\[
\inf_{\nu:\,\TV(\nu,\mu)\le\varepsilon}\nu(A\mid\suc)
=\frac{\mu(A)-\varepsilon}{\mu(\suc)}
=\mu(A\mid\suc)-\frac{\varepsilon}{p}.
\]
When $A=\suc$, the conditional probability is one; when $\varepsilon=0$, the laws coincide. For self-containment, the following direct proof covers all cases of the stated inequality.

Put $C:=\suc\setminus A$, $a:=\mu(A)$, and $c:=\mu(C)$, so $a+c=p$.

\proofstep{Step 1: bound the two success cells} Total variation gives
\[
\nu(A)\ge a-\varepsilon,\qquad \nu(C)\le c+\varepsilon.
\]
The assumption $\varepsilon<a$ implies $\nu(A)>0$, and hence $\nu(\suc)>0$.

\proofstep{Step 2: normalize only after the extremal move} The map $(x,y)\mapsto\frac{x}{x+y}$ is increasing in $x$ and decreasing in $y$. Therefore
\[
\nu(A\mid\suc)
=\frac{\nu(A)}{\nu(A)+\nu(C)}
\ge\frac{a-\varepsilon}{(a-\varepsilon)+(c+\varepsilon)}
=\frac{a}{p}-\frac{\varepsilon}{p}
=\mu(A\mid\suc)-\frac{\varepsilon}{p}.
\]

\proofstep{Step 3: show sharpness} On a three-cell space $(A,C,\suc^c)$, choose $\mu(A)>\varepsilon$ and obtain $\nu$ by moving exactly $\varepsilon$ mass from $A$ to $C$. The total-variation distance is $\varepsilon$, the success mass remains $p$, and equality holds above. Thus no smaller universal coefficient than $\frac{1}{p}$ is possible. \hfill$\square$

\paragraph{Proof of \Cref{thm:stab}.}
\proofstep{Step 1: move from kernels to trajectories} By \Cref{lem:sim},
$\TV(\mu_e,\mu_0)\le\varepsilon_e$.

\proofstep{Step 2: apply the sharp conditioning lemma} Let
$A:=\{u\preceq\phi(\tau)\}\cap\suc$. Then
$\mu_0(A)=p_0c_{\mu_0}(u)\ge(1-\delta)p_0>\varepsilon_e$.
Applying \Cref{lem:condcov} with $(\mu,\nu)=(\mu_0,\mu_e)$ gives
\[
c_{\mu_e}(u)\ge c_{\mu_0}(u)-\frac{\varepsilon_e}{p_0}
\ge(1-\delta)-\frac{\varepsilon_e}{p_0}.
\]

\proofstep{Step 3: return to the core family} Put $\delta':=1-c_{\mu_e}(u)$. The last display gives
$\delta'\le\delta+\frac{\varepsilon_e}{p_0}<1$, so
$u\in\HH^{\delta+\frac{\varepsilon_e}{p_0}}_\phi(\mu_e)$; \Cref{lem:generate} then places $u$ below a frontier element of the same family. Tightness follows from the mass transfer in Step 3 of \Cref{lem:condcov}. \hfill$\square$

\paragraph{Remark on the hypothesis.}
The condition $\varepsilon_e<(1-\delta)p_0$ is exact for the displayed non-vacuous conclusion. The weaker requirement $\varepsilon_e<p_0$ does not suffice: with $p_0=1$, $\delta=0.9$, and $\varepsilon_e=0.1$, coverage can fall to zero.

\paragraph{Proof of \Cref{prop:erosion}.}
Let $\mathcal K_e=\mathcal K_e^{\delta_{\mathrm{ref}}\to\bar\delta}$ as defined before the proposition. If $u\preceq v$, then the occurrence event for $v$ is contained in that for $u$, hence $c_{\mu_e}(u)\ge c_{\mu_e}(v)$ at every episode. Therefore $v\in\mathcal K_e$ and $u\preceq v$ imply $u\in\mathcal K_e$, so $\mathcal K_e$ is hereditary and has a finite maximal frontier. The same inequality implies that whenever $v$ remains above the target threshold, so does $u$; hence $E_{\mathrm{exit}}^{\bar\delta}(u)\ge E_{\mathrm{exit}}^{\bar\delta}(v)$. This argument is episode-wise and permits later re-entry. \hfill$\square$

\paragraph{Proof of \Cref{cor:rate}.}
\proofstep{Step 1: accumulate peer movement} The kernel triangle inequality and \Cref{prop:reduction} give
\[
\|P_e-P_0\|_{1,\infty}\le V_e
\le 2\sum_{j=1}^{e}\sup_s\TV(\pi_2^j,\pi_2^{j-1})
\le2c\eta e.
\]
Hence $\varepsilon_e\le c\eta(H-1)e$.

\proofstep{Step 2: spend the coverage margin} The initial slack is $\delta_0$ and the target slack is $\bar\delta$, so the available margin is $m=\bar\delta-\delta_0$. By \Cref{thm:stab}, membership in $\HH^{\bar\delta}$ is certified whenever
\[
\frac{\varepsilon_e}{p_0}\le m,\qquad\text{i.e.}\qquad
e\le\frac{mp_0}{c\eta(H-1)}.
\]
Because $m<1-\delta_0$, the premise $\varepsilon_e<(1-\delta_0)p_0$ also holds throughout this range.

\proofstep{Step 3: respect episode integrality} Taking the integer part gives the displayed $E^\star=\Omega(\frac{1}{\eta})$. Membership holds through episode $E^\star$, so $E_{\mathrm{exit}}^{\bar\delta}(u)\ge E^\star+1$. \hfill$\square$

\paragraph{Proof of \Cref{prop:p2example}.}
\proofstep{Step 1: compute the exact update direction} Since $c'(\theta)=-c(\theta)(1-c(\theta))$ and the divert reward exceeds the keep reward by $\Delta$,
\[
d_e=J'(\theta_e)=-\Delta c'(\theta_e)=\Delta c(\theta_e)(1-c(\theta_e))>0.
\]

\proofstep{Step 2: verify effective conflict} Before exit, $c(\theta_e)\in[\ell,c_0]$, and therefore
\[
-c'(\theta_e)d_e=\Delta[c(\theta_e)(1-c(\theta_e))]^2\ge\Delta r_\star^2.
\]
Thus (P2) holds termwise with $\lambda=\Delta r_\star^2$.

\proofstep{Step 3: verify boundedness and smoothness} We have $|d_e|\le\frac{\Delta}{4}$ and
$|c''(\theta)|=c(\theta)(1-c(\theta))|1-2c(\theta)|\le\frac{1}{4}$, establishing (P1) and (P3). By \eqref{eq:logistic-tool}, the logistic map is $\frac{1}{4}$-Lipschitz, so
$\TV(\pi_{\theta_{e+1}},\pi_{\theta_e})\le\frac{\eta|d_e|}{4}\le\frac{\Delta\eta}{16}$.

\proofstep{Step 4: invoke the lifetime theorem} Apply \Cref{thm:twosided} with $p_0=1$, $H-1=2$, $c=\frac{\Delta}{16}$, and $\lambda=\Delta r_\star^2$. Its step-size condition becomes $\eta\le\frac{64r_\star^2}{\Delta}$, and substitution gives the displayed bounds. \hfill$\square$

\paragraph{Proof of \Cref{lem:erode}.}
\proofstep{Step 1: control one update} The smoothness inequality \eqref{eq:descent-tool} with $h=\eta d_e$ gives
$c(\theta_{e+1})\le c(\theta_e)+\eta\langle\nabla c(\theta_e),d_e\rangle+\frac{L_c}{2}\eta^2\|d_e\|_2^2$.

\proofstep{Step 2: sum before first exit} Summing to $E$ and applying (P1)--(P2) yields
\[
c(\theta_E)\le c(\theta_0)-\eta E\left(\lambda-\frac{L_cG^2}{2}\eta\right)
\le c(\theta_0)-\frac{\lambda}{2}\eta E .
\]
\hfill$\square$

\paragraph{Proof of \Cref{thm:twosided}.}
\proofstep{Step 1: certify the lower lifetime} \Cref{cor:rate} gives
$E_{\mathrm{exit}}\ge\lfloor\frac{mp_0}{c\eta(H-1)}\rfloor+1$. Also $c_{\mu_0}(u)=1-\bar\delta+m>1-\bar\delta$, so the pattern is present initially.

\proofstep{Step 2: force an exit by cumulative erosion} Put $K=\lfloor\frac{2m}{\lambda\eta}\rfloor+1$. If $K<E_{\mathrm{exit}}$, then \Cref{lem:erode} gives
$c_{\mu_K}(u)\le c_{\mu_0}(u)-\frac{\lambda}{2}\eta K<c_{\mu_0}(u)-m=1-\bar\delta$.
This contradicts membership before the first exit. Hence $E_{\mathrm{exit}}\le K$ and the exit is finite.

\proofstep{Step 3: check the boundary episode} If $E_{\mathrm{exit}}>1$, the last pre-exit episode satisfies
\[
1-\bar\delta\le c_{\mu_{E_{\mathrm{exit}}-1}}(u)
\le c_{\mu_0}(u)-\frac{\lambda}{2}\eta(E_{\mathrm{exit}}-1),
\]
so $E_{\mathrm{exit}}-1\le\frac{2m}{\lambda\eta}$. The floor-plus-one form records integrality and retention at threshold equality. \hfill$\square$

\paragraph{Proof of \Cref{prop:exec-drift}.}
Membership drift is realized by the SRC clearing construction: any positive clearing probability gives the bottom route positive successful mass, so the top-only pattern leaves the support family. For executability drift, let the peer clear with probability zero and hold the top gate independently with probability $\beta>0$ at each attempt. Every successful trajectory still follows the top convention and contains the same minimal abstract string $(s_0,t_1,t_2,g_\Sigma)$; because a one-attempt success has positive probability for every $\beta>0$, the full support core and its frontier are unchanged. Now fix an option calibrated at $\beta=1$ that repeatedly attempts the gate until it opens, terminating after $k$ failed attempts. Independence gives its probability of reaching the next initiation set as
\[
1-(1-\beta)^k,
\]
which tends to zero as $\beta\downarrow0$. Thus the supplied chain can fail (A3)--(A5) with probability arbitrarily close to one while exact core membership and the frontier remain unchanged. \hfill$\square$

\paragraph{Proof of \Cref{thm:value}.}
Fix $(e,k)$ and abbreviate $A=A_k$, $\mu=\mu_{e,k}$, $\nu=\nu_{e,k}$, $G=G_{e,k}$, $r=r_k$, and $\Gamma=\Gamma_{e,k}$.

\proofstep{Step 1: decompose value around compatible success mass} Splitting on $A$ gives the exact identity
\[
V_{e,k}-r\Gamma
=\E_\nu[(G-r)\Ind[A]]
+\E_\nu[G\Ind[A^c]]
+r\big(\nu(A)-\mu(A)\big).
\]
The three terms are bounded respectively by $\xi_{e,k}$, $b_{e,k}$, and $r_k\chi_{e,k}$. The triangle inequality yields
$|V_{e,k}-r_k\Gamma_{e,k}|\le\zeta_{e,k}$.

\proofstep{Step 2: obtain the value floor} If $u_k\in\HH^\delta_\phi(\mu_{e,k})$ and $p_{e,k}\ge\underline p_k$, then
$\Gamma_{e,k}=p_{e,k}c_{\mu_{e,k}}(u_k)\ge\underline p_k(1-\delta)$. The lower side of Step 1 gives the displayed floor.

\proofstep{Step 3: construct simultaneous value intervals} Since $\Ind[A_k]$ is Bernoulli with mean $\Gamma_{e,k}$, \eqref{eq:hoeffding-tool} and a union bound over $\mathcal I$ imply, with probability at least $1-\rho$, that
$|\widehat\Gamma_{e,k}-\Gamma_{e,k}|\le\beta_{e,k}$ for every $(e,k)\in\mathcal I$. Work on this event and put
$q_{e,k}:=r_k\beta_{e,k}+\zeta_{e,k}$. The value approximation implies
\[
L_{e,k}=r_k\widehat\Gamma_{e,k}-q_{e,k}
\le V_{e,k}
\le r_k\widehat\Gamma_{e,k}+q_{e,k}
=L_{e,k}+2q_{e,k}.
\]

\proofstep{Step 4: compare the selected library element} Since $\widehat k_e$ maximizes $L_{e,k}$, every comparator $k$ satisfies
\[
V_{e,k}
\le L_{e,k}+2q_{e,k}
\le L_{e,\widehat k_e}+2q_{e,k}
\le V_{e,\widehat k_e}+2q_{e,k}.
\]
Rearranging proves the comparator-wise guarantee. Taking $k=k_e^\star$ and summing proves cumulative transfer regret. The simultaneous event, rather than checkpoint independence, drives this argument; only the monitoring rollouts within each fixed $(e,k)$ evaluation need the stated law. \hfill$\square$

\paragraph{Proof of \Cref{cor:value-survival}.}
\Cref{cor:rate} gives $u_k\in\HH^{\bar\delta}_\phi(\mu_{e,k})$ for every $e\le E^\star$. Apply the value floor in \Cref{thm:value} with $\delta=\bar\delta$, $p_{e,k}\ge\underline p_k$, and $\zeta_{e,k}\le\bar\zeta_k$. For the endogenous floor, total variation gives $p_{e,k}\ge p_{0,k}-\varepsilon_{e,k}$; the certified horizon ensures $\varepsilon_{e,k}\le mp_{0,k}$, and $0<m<1$. \hfill$\square$

\paragraph{Proof of \Cref{prop:overlap}.}
If $u\in\HH_{\mathrm{rob}}$ then $c_{\mu(\pi_2)}(u)=1$ for every $\pi_2\in\Pi_2^+$, in particular for success-enabling $\pi_2^e,\pi_2^{e+1}$, so $u\in\HH^0_\phi(\mu_e)\cap\HH^0_\phi(\mu_{e+1})$; this gives the stated inclusion.  A pattern has coverage one under every visited peer exactly when it occurs in every trajectory in
$\bigcup_e\operatorname{Supp}_{\suc}(\pi_2^e)$.  Under the displayed union-of-supports equality in the proposition, this is equivalent to occurring in every trajectory in
$\bigcup_{\pi_2\in\Pi_2^+}\operatorname{Supp}_{\suc}(\pi_2)$, which is equivalent to coverage one under every $\pi_2\in\Pi_2^+$.  Therefore
$\bigcap_e\HH^0_\phi(\mu_e)=\HH_{\mathrm{rob}}$. Taking $\max_\preceq$ of both sides gives $\Corerob$. \hfill$\square$

\paragraph{Proof of \Cref{thm:estimation}(a)--(b) (upper bounds).}
\proofstep{Step 1: guarantee enough successful probes} Since $N_{\suc}\sim\mathrm{Bin}(N,p_0)$, \eqref{eq:chernoff-tool} gives
$\Pr[N_{\suc}<\frac{Np_0}{2}]\le e^{-\frac{Np_0}{8}}\le\frac{\rho}{2}$ under the theorem's second summand. Conditional on $N_{\suc}=n\ge \frac{Np_0}{2}$, the occurrence indicators on successful trajectories are i.i.d.\ $\mathrm{Bernoulli}(c_\mu(u))$. Hoeffding therefore gives
\[
\Pr\!\left[|\widehat c_N(u)-c_\mu(u)|>t\mid N_{\suc}=n\right]
\le 2e^{-2nt^2}\le2e^{-Np_0t^2}\le\frac{\rho}{2}.
\]

\proofstep{Step 2: combine the two deviations} A union bound proves (a).

\proofstep{Step 3: lift to $M$ fixed candidates} Apply part (a) with tolerance $\frac{\gamma}{2}$ and failure probability $\frac{\rho}{M}$ to each candidate, then union-bound over candidates. Every estimate is then on the same side of the threshold as its population coverage, proving (b). \hfill$\square$

\paragraph{Proof of \Cref{thm:estimation}(c) (lower bound).}
\proofstep{Step 1: construct two threshold-separated laws} Fix $a\in(0,\frac{1}{2}]$, assume $1-\delta\in[a,1-a]$ and $0<\gamma\le \frac{a}{2}$, and put $c_-=1-\delta-\gamma$, $c_+=1-\delta+\gamma$. Under single-probe laws $P,Q$, let a trajectory be unsuccessful with probability $1-p_0$ and otherwise let $\Ind[u\preceq\phi(\tau)]\sim\mathrm{Bernoulli}(c_\pm)$. These laws require opposite membership decisions.

\proofstep{Step 2: bound their information distance} Only the successful branch differs, so
\[
\mathrm{KL}(P\Vert Q)=p_0\,\mathrm{KL}\big(\mathrm{Bern}(c_-)\Vert\mathrm{Bern}(c_+)\big)\le p_0\,\frac{(c_--c_+)^2}{c_+(1-c_+)}=\frac{4p_0\gamma^2}{(1-\delta+\gamma)(\delta-\gamma)},
\]
using \eqref{eq:bernoulli-kl-tool}. The interior conditions give $c_+(1-c_+)\ge\kappa_a:=\frac{a}{2}(1-\frac{a}{2})$, hence $\mathrm{KL}(P\Vert Q)\le \frac{4p_0\gamma^2}{\kappa_a}$.

\proofstep{Step 3: reduce estimation to testing} The tools in \eqref{eq:testing-tools} yield
$\mathrm{TV}(P^{\otimes N},Q^{\otimes N})\le\sqrt{\frac{2Np_0\gamma^2}{\kappa_a}}$. By \eqref{eq:two-point-tool}, error below $\frac{1}{3}$ requires $\mathrm{TV}\ge\frac{1}{3}$. Therefore $N\ge\frac{\kappa_a}{18p_0\gamma^2}=\Omega_a(\frac{1}{p_0\gamma^2})$. \hfill$\square$

\section{Monitoring Certificates and Windowed Erosion}\label{app:monitoring}
\subsection{A certificate with estimated success mass}
A small reference success probability amplifies drift, so replacing $p_0$ by an upward-biased estimate can invalidate a certificate. The following construction budgets success-mass and conditional-coverage uncertainty together.

\begin{proposition}[Finite-sample survival certificate]\label{prop:empirical-certificate}
Fix a candidate $u$, reference law $\mu_0$ with $p_0=\mu_0(\suc)>0$, and $N\ge1$ independent reference rollouts. Let $n$ be their number of successes, $\widehat p=\frac{n}{N}$, and $\widehat c$ their conditional coverage estimate. Choose $\rho_p,\rho_c>0$ with $\rho_p+\rho_c<1$ and define
\[
\underline p=\max\!\left\{0,\widehat p-\sqrt{\frac{\log(\frac{1}{\rho_p})}{2N}}\right\},
\qquad
\underline c=
\begin{cases}
\max\!\left\{0,\widehat c-\sqrt{\frac{\log(\frac{2}{\rho_c})}{2n}}\right\},&n>0,\\
0,&n=0.
\end{cases}
\]
Let $\TV(\mu_e,\mu_0)\le\varepsilon_e$ be a valid drift bound on a common trajectory space with fixed success event and abstraction. With probability at least $1-\rho_p-\rho_c$, simultaneously at every episode with such a bound,
\[
\underline p>0,\qquad
\underline c-\frac{\varepsilon_e}{\underline p}\ge1-\bar\delta
\quad\Longrightarrow\quad
\mu_e(\suc)>0,\quad u\in\HH_\phi^{\bar\delta}(\mu_e),
\qquad \bar\delta<1 .
\]
If $\underline p=0$, the rule issues no certificate.
\end{proposition}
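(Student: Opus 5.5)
The plan is to build one good event on which both confidence bounds hold, and then reduce everything else to \Cref{lem:condcov} plus elementary monotonicity. Let $\mathcal E_p:=\{\underline p\le p_0\}$ and $\mathcal E_c:=\{\underline c\le c_{\mu_0}(u)\}$. Both events concern only the reference rollouts, so neither depends on $e$. Simultaneity over episodes is therefore automatic: once the event holds, the implication is deterministic for every drift bound $\varepsilon_e$.

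\textbf{Probability of $\mathcal E_p$.} Since $n\sim\mathrm{Bin}(N,p_0)$, the one-sided Hoeffding bound gives $\Pr[\widehat p-p_0>\sqrt{\log(1/\rho_p)/(2N)}]\le\rho_p$. Truncating at $0$ cannot break the inequality, so $\Pr(\mathcal E_p^c)\le\rho_p$.

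\textbf{Probability of $\mathcal E_c$.} Condition on $n$. On $n=0$ the bound is trivial because $\underline c=0$. On $n>0$, the occurrence indicators of the successful rollouts are i.i.d.\ $\mathrm{Bernoulli}(c_{\mu_0}(u))$ given $n$, exactly as in Step 1 of the proof of \Cref{thm:estimation}. The two-sided Hoeffding bound \eqref{eq:hoeffding-tool}, at the radius $\sqrt{\log(2/\rho_c)/(2n)}$, gives failure probability at most $\rho_c$ conditionally. Averaging over $n$ keeps it unconditional. A union bound then yields $\Pr(\mathcal E_p\cap\mathcal E_c)\ge1-\rho_p-\rho_c$.

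\textbf{The deterministic implication.} On $\mathcal E_p\cap\mathcal E_c$, suppose $\underline p>0$ and $\underline c-\varepsilon_e/\underline p\ge1-\bar\delta$. Then $0<\underline p\le p_0$, so $\varepsilon_e/p_0\le\varepsilon_e/\underline p$, and also $c_{\mu_0}(u)\ge\underline c$. Hence
\[
c_{\mu_0}(u)-\frac{\varepsilon_e}{p_0}\;\ge\;\underline c-\frac{\varepsilon_e}{\underline p}\;\ge\;1-\bar\delta\;>\;0 .
\]
Multiplying by $p_0$ gives $\varepsilon_e<p_0c_{\mu_0}(u)=\mu_0(A)$ with $A=\suc\cap\{u\Subseq\phi(\tau)\}$.

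This is exactly the premise of \Cref{lem:condcov}. That lemma gives $\mu_e(\suc)>0$, and its bound combined with the display gives $c_{\mu_e}(u)\ge c_{\mu_0}(u)-\varepsilon_e/p_0\ge1-\bar\delta$, i.e.\ $u\in\HH^{\bar\delta}_\phi(\mu_e)$. Strictness $1-\bar\delta>0$ uses $\bar\delta<1$.

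\textbf{Main obstacle.} The delicate step is the conditional-coverage confidence bound when $n$ is random. I would first check that, given $n$, the successful rollouts are i.i.d.\ draws from $\mu_0(\cdot\mid\suc)$. This holds because the rollouts are independent and success is a per-rollout event. With that in hand, Hoeffding applies conditionally and the $n=0$ case is vacuous.

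A second point is that $\underline p$ must be a genuine lower bound. This is why the proof uses the one-sided bound on $\widehat p$ rather than the plug-in estimate, in line with \Cref{rem:est-drift}.
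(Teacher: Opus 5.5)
Your proposal is correct and follows the paper's proof essentially step for step: it takes the union of a one-sided Hoeffding event for $\underline p\le p_0$ with a Hoeffding event, conditional on $n$, for $\underline c\le c_{\mu_0}(u)$, then verifies the domain condition $\varepsilon_e<p_0c_{\mu_0}(u)$ of \Cref{lem:condcov} and chains its bound through $\underline c-\varepsilon_e/\underline p$. The only cosmetic difference is that you derive the domain condition from the final coverage chain, whereas the paper uses the direct chain $\varepsilon_e\le\underline p(\underline c-1+\bar\delta)<\underline p\,\underline c\le p_0c_{\mu_0}(u)$.
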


\begin{proof}
\proofstep{Step 1: control the two estimates}
One-sided Hoeffding gives $\underline p\le p_0$ except on an event of probability $\rho_p$. Conditional on $n>0$, the successful-rollout indicators are independent Bernoulli variables with mean $c_{\mu_0}(u)$; Hoeffding gives $\underline c\le c_{\mu_0}(u)$ except with probability at most $\rho_c$. For $n=0$ this inequality is automatic. A union bound controls both estimates, without assuming that they are independent.

\proofstep{Step 2: verify the conditioning lemma's domain}
On this event, whenever the rule certifies,
\[
\varepsilon_e\le\underline p(\underline c-1+\bar\delta)
<\underline p\,\underline c\le p_0c_{\mu_0}(u).
\]
Thus \Cref{lem:condcov} applies and gives
$c_{\mu_e}(u)\ge c_{\mu_0}(u)-\frac{\varepsilon_e}{p_0}
\ge\underline c-\frac{\varepsilon_e}{\underline p}\ge1-\bar\delta$.
The same estimation event works for every episode whose drift bound is valid.
\end{proof}

For $M$ candidates fixed before evaluation, allocate $\rho_c$ across the $M$ conditional-coverage bounds; the success-mass bound is shared. Data-dependent candidate discovery requires separate evaluation data or a uniform bound over its search class. An estimated drift bound requires its own confidence allocation; a union bound then adds its failure probability.

\subsection{Windowed erosion and forward prediction}
Write $[x]_+:=\max\{0,x\}$. The experimental erosion statistic is a positive coverage drop across a window. Its relation to cumulative directional progress is explicit.

\begin{proposition}[Windowed erosion]\label{prop:window-erosion}
Let $\theta_{j+1}=\theta_j+\eta d_j$, $\|d_j\|_2\le G$, and let $c$ have $L_c$-Lipschitz gradient on each update segment for $q\le j<t$, where $q=t-w$. Put $a_j=-\langle\nabla c(\theta_j),d_j\rangle$ and $B_w=\frac{L_cG^2}{2}\eta^2w$. Then
\[
\left|c(\theta_q)-c(\theta_t)-\eta\sum_{j=q}^{t-1}a_j\right|\le B_w,
\qquad
\left|[c(\theta_q)-c(\theta_t)]_+
-\left[\eta\sum_{j=q}^{t-1}a_j\right]_+\right|\le B_w .
\]
If this window has $\frac{1}{w}\sum_{j=q}^{t-1}a_j\ge\lambda>0$ and $\eta\le\frac{\lambda}{L_cG^2}$, its erosion is at least $\frac{\lambda}{2}\eta w$.
For monitored values $C_q,C_t$ satisfying $|C_j-c(\theta_j)|\le r_j$ at the two endpoints,
\[
\left|[C_q-C_t]_+-[c(\theta_q)-c(\theta_t)]_+\right|
\le r_q+r_t.
\]
In particular, on these events the observed score is at least
$\frac{\lambda}{2}\eta w-r_q-r_t$ under the window-progress premise.
\end{proposition}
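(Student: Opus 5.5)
The first display is a Taylor-type comparison summed over the window, and I would prove it exactly as in \Cref{lem:erode}, but keeping both sides of the inequality. For each update $j$ with $q\le j<t$, set $h=\eta d_j$. Integrating $\nabla c$ along the segment $[\theta_j,\theta_{j+1}]$ and using the $L_c$-Lipschitz gradient gives the two-sided form of \eqref{eq:descent-tool}:
\[
\left|c(\theta_{j+1})-c(\theta_j)-\eta\langle\nabla c(\theta_j),d_j\rangle\right|\le\frac{L_c}{2}\eta^2\|d_j\|_2^2\le\frac{L_cG^2}{2}\eta^2 .
\]
Rewriting $\eta\langle\nabla c(\theta_j),d_j\rangle=-\eta a_j$ and telescoping over the $w$ updates gives $|c(\theta_q)-c(\theta_t)-\eta\sum_j a_j|\le B_w$.

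For the positive-part statement, I would use only that $x\mapsto[x]_+$ is $1$-Lipschitz, so $|[x]_+-[y]_+|\le|x-y|$. Applying this with $x=c(\theta_q)-c(\theta_t)$ and $y=\eta\sum_j a_j$ gives the second inequality immediately.

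Under window progress, the first display yields
\[
c(\theta_q)-c(\theta_t)\ge\eta\sum_j a_j-B_w\ge\eta w\lambda-\frac{L_cG^2}{2}\eta^2w .
\]
The step-size condition $\eta\le\lambda/(L_cG^2)$ bounds the last term by $\frac{\lambda}{2}\eta w$, so the drop is at least $\frac{\lambda}{2}\eta w\ge0$. It therefore equals its positive part, which gives the stated erosion bound.

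For the monitored values, I would again use the $1$-Lipschitz property, now with the triangle inequality:
\[
|(C_q-C_t)-(c(\theta_q)-c(\theta_t))|\le|C_q-c(\theta_q)|+|C_t-c(\theta_t)|\le r_q+r_t .
\]
Combining this with the erosion lower bound gives $[C_q-C_t]_+\ge\frac{\lambda}{2}\eta w-r_q-r_t$.

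No step is a real obstacle. The only point needing care is that the two-sided descent bound requires Lipschitz continuity of $\nabla c$ on each segment, not just a one-sided curvature bound. That hypothesis is assumed per segment, so the integral form of the remainder applies directly.
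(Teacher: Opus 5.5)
Your proposal is correct and matches the paper's proof essentially step for step. Both bound the per-update remainder by the two-sided smoothness inequality and telescope it over the $w$ updates. Both then use that $x\mapsto[x]_+$ is $1$-Lipschitz for the second display and for the monitored-value bound, and use the step-size condition to absorb $B_w$ into half of the progress term $\lambda\eta w$.
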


\begin{proof}
\proofstep{Step 1: isolate each update's remainder}
The two-sided smoothness bound gives
$c(\theta_{j+1})=c(\theta_j)-\eta a_j+R_j$ with
$|R_j|\le\frac{L_c}{2}\eta^2G^2$.
Telescoping from $q$ to $t$ proves the first inequality.

\proofstep{Step 2: pass to the monitored statistic}
The positive-part map is $1$-Lipschitz. Applying it to the first inequality proves the second; applying it to the two endpoint errors proves the observation bound. Window progress and the step-size restriction imply
$\eta\sum a_j-B_w\ge\lambda\eta w-\frac{\lambda}{2}\eta w$, which proves the lower bound.
\end{proof}

For confidence intervals $[L_j,U_j]$ containing $c(\theta_j)$, one can take $C_j=L_j$ and $r_j=U_j-L_j$. A lower bound alone, without control of its width, does not control changes in that bound. This applies to confidence intervals with their stated joint coverage; the Wilson intervals used in the experiments are nominal rather than a replacement for the finite-sample construction above.

The prefix condition (P2) starts at episode zero and does not imply positive progress on every later window. Moreover, an observed past drop by itself does not guarantee a future drop. If the progress condition instead holds on every future prefix starting at $t$ until exit, the cumulative-erosion proof restarts at $t$: with current margin $m_t=c(\theta_t)-(1-\bar\delta)>0$, first exit occurs by
$t+\left\lfloor\frac{2m_t}{\lambda\eta}\right\rfloor+1$.
Thus the empirical prediction tests examine whether past erosion is informative about subsequent performance; the deterministic exit bound additionally requires future progress.

\subsection{Event-specific calibration and performance loss}
The value decomposition uses only $\nu_{e,k}(A_k)-\mu_{e,k}(A_k)$, rather than the discrepancy on every trajectory event. If independent monitoring and deployment samples of sizes $N_\mu,N_\nu$ give event frequencies $\widehat a_\mu,\widehat a_\nu$, then with probability at least $1-\rho_\mu-\rho_\nu$,
\[
|\nu(A_k)-\mu(A_k)|
\le
|\widehat a_\nu-\widehat a_\mu|
+\sqrt{\frac{\log(\frac{2}{\rho_\nu})}{2N_\nu}}
+\sqrt{\frac{\log(\frac{2}{\rho_\mu})}{2N_\mu}} .
\]
This is two applications of Hoeffding and the triangle inequality. The right-hand side, clipped at one, supplies $\chi_{e,k}$. To use estimated calibration bounds in \Cref{thm:value}, budget their simultaneous failure probability in addition to the theorem's monitoring error. The execution and off-event reward bounds $\xi_{e,k},b_{e,k}$ remain separate requirements.

\begin{corollary}[Coverage loss and value loss]\label{cor:erosion-value}
Fix one policy, candidate, and reward scale $r\ge0$ at two episodes $q,t$. Under \Cref{ass:calibration}, abbreviate their coverages by $c_q,c_t$, success masses by $p_q,p_t$, and radii by $\zeta_q,\zeta_t$. Then
\[
\left|(V_q-V_t)-r(p_qc_q-p_tc_t)\right|\le\zeta_q+\zeta_t.
\]
In particular,
\[
V_q-V_t\ge r p_q(c_q-c_t)-r|p_q-p_t|-\zeta_q-\zeta_t.
\]
\end{corollary}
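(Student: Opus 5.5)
The plan is to reuse the exact three-term decomposition from Step~1 of the proof of \Cref{thm:value}, applied separately at episodes $q$ and $t$ with the same policy, candidate event $A$, and reward scale $r$. At each episode $j\in\{q,t\}$, that identity together with \Cref{ass:calibration} gives $|V_j-r\Gamma_j|\le\zeta_j$, where $\Gamma_j=p_jc_j$ is the core-compatible success mass. Subtracting the two relations and applying the triangle inequality yields
\[
\left|(V_q-V_t)-r(\Gamma_q-\Gamma_t)\right|\le|V_q-r\Gamma_q|+|V_t-r\Gamma_t|\le\zeta_q+\zeta_t,
\]
which is the first display once we substitute $\Gamma_j=p_jc_j$.

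For the second display, we take the lower side of the first inequality, $V_q-V_t\ge r(p_qc_q-p_tc_t)-\zeta_q-\zeta_t$. Then we split the mass difference around a common factor:
\[
p_qc_q-p_tc_t=p_q(c_q-c_t)+c_t(p_q-p_t).
\]
Since $0\le c_t\le1$ and $r\ge0$, we have $rc_t(p_q-p_t)\ge-r|p_q-p_t|$, and substituting this gives the claim.

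Almost nothing here is an obstacle. The one point to check is the choice of which factor to freeze in the product split. The stated bound carries $p_q$ in front of the coverage drop, so the remaining term must be $c_t(p_q-p_t)$, not $c_q(p_q-p_t)$; the bound $c_t\le1$ then absorbs it. We also note that the coverages are well defined only when $p_q,p_t>0$, which \Cref{sec:value} already presumes. In addition, $r\ge0$ is needed so that multiplying the inequality by $r$ preserves its direction.
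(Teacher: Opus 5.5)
Your proposal is correct and follows the paper's proof exactly. Both subtract the two per-episode approximations $|V_j-rp_jc_j|\le\zeta_j$ from \Cref{thm:value} and apply the triangle inequality, then use the split $p_qc_q-p_tc_t=p_q(c_q-c_t)+c_t(p_q-p_t)$ together with $0\le c_t\le1$ and $r\ge0$.
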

\begin{proof}
Subtract the two value approximations in \Cref{thm:value} and apply the triangle inequality. Next use
$p_qc_q-p_tc_t=p_q(c_q-c_t)+(p_q-p_t)c_t$
and $0\le c_t\le1$.
\end{proof}
A coverage drop therefore certifies a value drop when its compatible-return contribution exceeds the success-mass change and calibration errors. This identifies the additional measurements needed to turn an erosion signal into a policy-performance certificate.

\section{Experimental Details}\label{app:exp}
\paragraph{Switch--Rock Corridor (SRC).} The joint state is $z=(f,\rho_{\mathrm{r}})$ with focal position $f\in\{s_0,t_1,t_2,b_1,b_2,g\}$ and a binary rock flag $\rho_{\mathrm{r}}\in\{0,1\}$ ($12$ states). Focal actions are $\{\text{up},\text{down},\text{fwd},\text{wait}\}$; peer actions are $\{\text{hold},\text{clear},\text{idle}\}$. Let $\text{hold}=\1[a_2{=}\text{hold}]$, $\text{clear}=\1[a_2{=}\text{clear}]$, and $\rho_{\mathrm{r}}'=\rho_{\mathrm{r}}\vee\text{clear}$. Transitions are deterministic given $(a_1,a_2)$:
\begin{itemize}[topsep=2pt,itemsep=0pt]
\item $f{=}s_0$: up$\to t_1$, down$\to b_1$, else stay.
\item $f{=}t_1$: fwd $\wedge$ hold $\wedge\ \rho_{\mathrm{r}}{=}0\to t_2$, else stay. \emph{(top gate; needs the rock intact)}
\item $f{=}t_2$: fwd$\to g$.
\item $f{=}b_1$: fwd $\wedge\ \rho_{\mathrm{r}}{=}1\to b_2$, else stay. \emph{(bottom gate; needs the rock cleared)}
\item $f{=}b_2$: fwd$\to g$; \quad $f{=}g$: absorbing.
\end{itemize}
Because reaching $t_2$ requires $\rho_{\mathrm{r}}{=}0$ while reaching $b_2$ requires $\rho_{\mathrm{r}}{=}1$, and $\rho_{\mathrm{r}}$ is monotone, the two routes are mutually exclusive: a single episode yields a top \emph{or} a bottom success, never both. The abstraction is $\phi(z,\cdot)=f$, so $\Sigmaa=\{s_0,t_1,t_2,b_1,b_2,g\}$; the robust core (\Cref{def:robust}) is generated by $(s_0,g)$, i.e.\ only the start and goal lie on every success under every success-enabling peer.

\paragraph{Generalized domain SRC-$L$-$N$.} The corridor length generalizes to $L$ (top positions $t_1,\dots,t_L$, bottom positions $b_1,\dots,b_L$), and the single peer generalizes to $N$ i.i.d.\ peers each clearing with probability $\alpha$. The $N$ peers enter the focal-induced kernel only through the scalar $q=1-(1-\alpha)^N$ (the probability that at least one peer clears), so the induced kernel remains exact and of size independent of $N$. This yields $\norm{P_q-P_{q'}}_{1,\infty}=2|q-q'|$ and, at $q{=}0$, a top core generated by the single frontier string $(s_0,t_1,\dots,t_L,g)$, whose high-coverage family contains $\Theta(L)$ length-1 patterns. The base SRC is recovered at $L{=}2,N{=}1$, and the reported $\frac{1}{N}$ law follows analytically from this scalar coupling.

\paragraph{Probe policy and core estimation.} The base studies use $H=12$ labels (at most $11$ transitions); scaling uses $H=3L+4$. Unless stated otherwise the focal \emph{probe} splits equally at $s_0$ and advances with probability $0.9$ elsewhere; this fixed measurement policy makes observed drift attributable to the peer (\Cref{sec:single-stab}). Coverage of a length-1 pattern $\sigma$ is the fraction of successful probe rollouts whose focal-position string contains $\sigma$; the estimated high-coverage family is $\{\sigma:c\ge 1-\delta\}$, with length-2 patterns evaluated analogously for \Cref{prop:order}. This is the bounded-length evaluator for the hereditary family $\HH^\delta$ generated by the core (\Cref{lem:generate}), and \Cref{thm:estimation} gives its sample complexity. Estimates use $N\ge 5{,}000$ rollouts (E1: $4\times10^4$; E2: $4\times10^4$; E3: $7\times10^3$ per measurement; E4: $5\times10^3$).

E4 additionally computes population coverage \emph{exactly}, conditional on each realized peer-parameter path. Let $x_t(z,h)$ be the probability of SRC state $z$ after $t$ transitions with hit bit $h\in\{0,1\}$ indicating whether $t_1$ has appeared as a source-state label on a completed decision step. The update marks the state being departed, matching $\phi(z,a_1)=f(z)$. Starting from $x_0((s_0,0),0)=1$, the forward recursion is
\[
x_{t+1}(z',h')
=\sum_{z,h,a_1,a_2}x_t(z,h)\pi_1(a_1\mid z)\pi_2^e(a_2\mid z)
P(z'\mid z,a_1,a_2)\,
\1\!\left[h'=h\vee\1[f(z){=}t_1]\right].
\]
After $H-1$ transitions,
\[
c_{\mu_e}(t_1)=
\frac{\sum_{z:f(z)=g}x_{H-1}(z,1)}
{\sum_{z:f(z)=g}\sum_{h=0}^1x_{H-1}(z,h)}.
\]
Population coverage is evaluated by this exact finite-state recursion.

\paragraph{Peer families.}
E1--E2 use $\pi_2^\alpha=\alpha\cdot\text{clear}+(1-\alpha)\cdot\text{hold}$ at every state, so $\alpha$ is a scalar drift knob and $\|P_\alpha-P_0\|_{1,\infty}=2\alpha$. E3 trains two independent tabular $Q$-learners ($\varepsilon{=}0.2$, step $0.2$, $\gamma{=}1$, terminal reward $1$, focal step penalty $0.02$, \emph{no} private peer reward) for $8{,}000$ episodes; the peer's convention is thus determined only by symmetry breaking. E4 uses a softmax policy-gradient peer (REINFORCE with a mean baseline, batch $192$) initialized to hold, with step sizes $\eta\in\{0.25,0.5,1,2,4\}$. Finite horizon, bounded return, and bounded softmax scores give a deterministic bound on each batch gradient, hence the per-update condition $\sup_s\TV(\pi_2^e,\pi_2^{e-1})\le c\eta$ (\Cref{rem:pg}). Population exit is evaluated exactly after every update; the finite-rollout exit is checked every $15$ updates using an independent evaluation RNG, so evaluation does not alter the subsequent training path. E5 uses a hold-probability $\beta$ peer that never clears (both routes' membership therefore unchanged) and a fixed option chain calibrated at $\beta{=}1$. E6 (single-agent control) fixes the peer to hold and $Q$-learns only the focal agent. The co-adaptation experiments (E3, E4) and all scaling experiments average over $10$ seeds.

\clearpage
\paragraph{Experimental suites.}
\begin{center}
\small
\begin{tabular}{@{}>{\raggedright\arraybackslash}p{0.28\linewidth} >{\raggedright\arraybackslash}p{0.12\linewidth} >{\raggedright\arraybackslash}p{0.52\linewidth}@{}}
\toprule
Suite & Exp. & Evidence \\
\midrule
Structural degradation and lifetime & E1--E6 & Exact membership discontinuity, zero stability-bound violations across 61 drift levels, persistent robust structure under convention churn, $R^2=0.999986$ inverse-rate population lifetime, executability separation, and stable single-agent control. \\
Rate and sharpness & E7--E9 & $N^\star p_0\gamma^2$ stays within $1.21\times$ across 15 cells; the trajectory-law ratio reaches $0.996$, and the kernel-family limit reaches $\frac{255}{256}=0.9961$. \\
Registered downstream confirmation & E10, E12 & Continual-control AP advantage $0.237$ and reward advantage $0.537$; cue-MNIST AP and replay-reward advantages $0.039$ and $0.0117$. \\
Exploratory peer-induced analysis & E11 & Post-hoc learned-peer LBF erosion AP advantage $0.228$ across eight development pairings. \\
\bottomrule
\end{tabular}
\end{center}

\paragraph{Reproducibility.} The supplement includes \texttt{src.py} (base domain and exact population evaluator), \texttt{src\_scaling.py} (generalized SRC-$L$-$N$), and the experiment scripts. The command \texttt{python3 code/regenerate\_all\_figs.py} reruns the SRC evaluations and regenerates downstream statistics plots from the supplied E10--E12 records. Fixed task illustrations are supplied as assets. Per-seed records and threshold conventions accompany the figures; \Cref{app:parameters} describes the artifact scope.

\clearpage
\subsection{Generalized domain and figures}
\Cref{fig:env} shows the environment and its SRC-$L$-$N$ generalization. Experiments E1--E6 (\Cref{fig:ramp,fig:bound,fig:iql,fig:pg,fig:exec}) test the single-instance predictions; the core-effect experiments (\Cref{fig:coreeffect}) test soundness; the $L\times N$ scaling (\Cref{fig:surface,fig:survival,fig:scalebound}) and core-computation cost (\Cref{fig:corecost}) test the scaling and complexity laws.

\begin{figure}[h]
\centering
\includegraphics[width=0.60\linewidth]{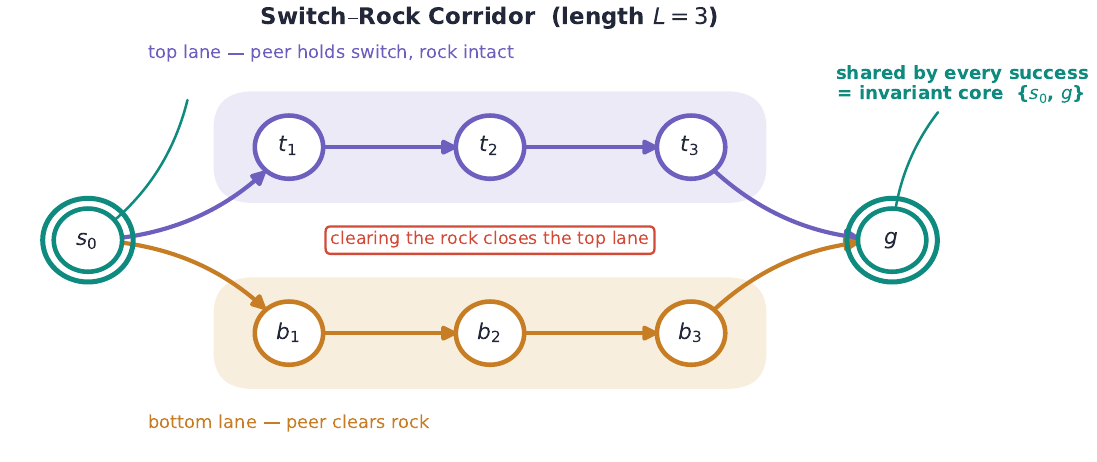}\hfill
\raisebox{0.2em}{\includegraphics[width=0.36\linewidth]{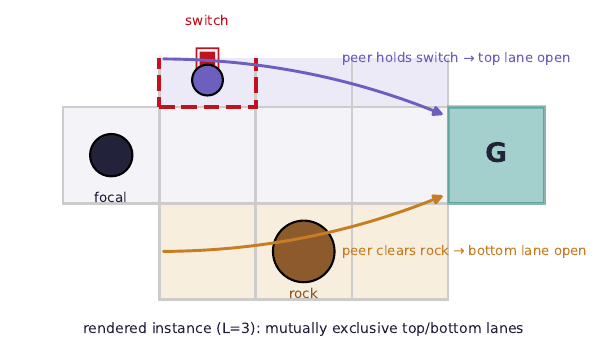}}
\caption{\textbf{The Switch--Rock Corridor.} \emph{Left:} transition-structure schematic---two mutually exclusive routes (top needs the switch held and rock intact; bottom needs the rock cleared) sharing only $s_0$ and $g$; clearing the rock disables the top gate. \emph{Right:} a rendered gridworld instance ($L{=}3$).}
\label{fig:env}
\end{figure}

\paragraph{E1: impossibility and repair.} Sweeping the peer's clear-probability $\alpha$ from $0$, the support mass of $t_1$ decays continuously, but coverage-1 membership of $t_1$ (equivalently, whether $t_1$ is generated by the $\delta{=}0$ core) is lost at the first $\alpha>0$ tested ($\alpha=2\times10^{-4}$), exactly as \Cref{prop:impossible} predicts, while $\delta$-families lose it only once the drift crosses $\delta$-ordered thresholds ($\alpha\approx2.3\times10^{-3},1.3\times10^{-2},6.0\times10^{-2}$ for $\delta=0.02,0.1,0.3$). The robust core $(s_0,g)$ is generated throughout.
\begin{figure}[h]
\centering
\includegraphics[width=0.92\linewidth]{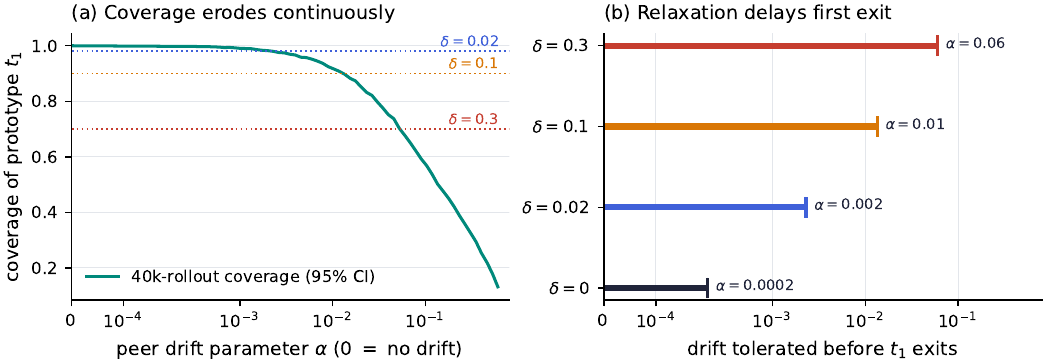}
\caption{\textbf{E1.} \captiona~Prototype mass decays continuously in $\alpha$; the band is an approximate 95\% conditional-binomial interval from $40{,}000$ rollouts. \captionb~Directly labeled first-exit drift: coverage-1 membership is lost at any $\alpha>0$, whereas $\delta$-families persist to $\delta$-ordered thresholds (\Cref{prop:impossible,thm:stab}).}
\label{fig:ramp}
\end{figure}

\paragraph{E2: the stability bound holds.} Across $61$ drift levels the empirical coverage of $t_1$ never falls below the \Cref{thm:stab} guarantee $1-\delta-\frac{\varepsilon_e}{p_0}$ (0 violations), with median slack $0.132$. The slack is smallest at $\alpha{=}0$, where guarantee and coverage coincide at $1$.
\begin{figure}[h]
\centering
\includegraphics[width=0.42\linewidth]{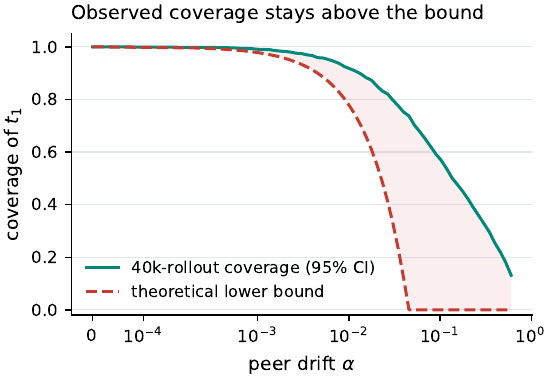}
\caption{\textbf{E2.} Empirical coverage (approximate 95\% conditional-binomial band, $40{,}000$ rollouts) stays above the \Cref{thm:stab} guarantee $1-\delta-\frac{\varepsilon_e}{p_0}$ at all $61$ drift levels (0 violations); the guarantee is clipped at $0$ where it goes vacuous.}
\label{fig:bound}
\end{figure}

\paragraph{E3: robust core is invariant while conventions churn.} Two independent $Q$-learners co-adapt; the peer has no private reward, so convention choice is pure symmetry breaking. Over $8{,}000$ episodes the preferred convention flips $\approx126$ times on average (10 seeds), the prototypes $t_1,b_1$ oscillate, yet the robust core $(s_0,g)$ is generated in $100\%$ of measurements across all seeds (\Cref{prop:overlap}). The single-agent control shows the estimated family equal to the truth with $0$ changes (\Cref{sec:single-stab}).
\begin{figure}[h]
\centering
\includegraphics[width=0.92\linewidth]{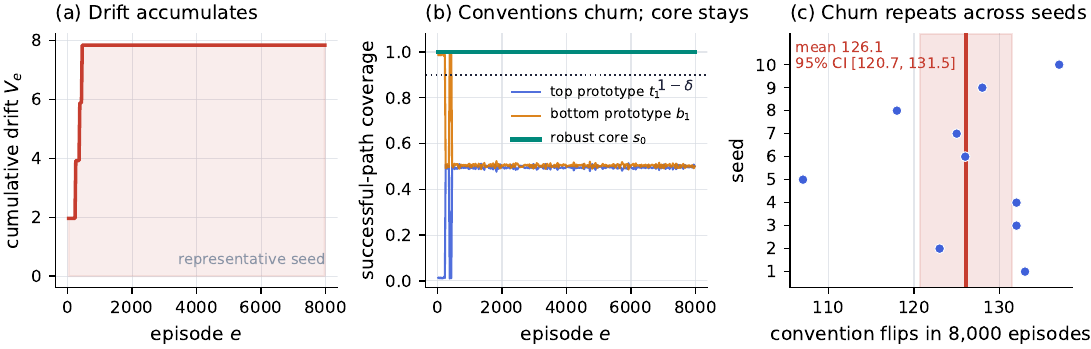}
\par\vspace{0.2em}
\includegraphics[width=0.48\linewidth]{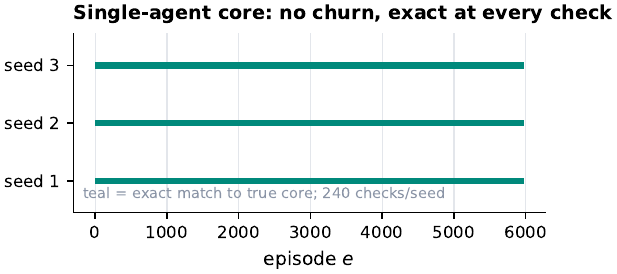}
\caption{\textbf{E3.} \emph{Top:} a representative seed shows cumulative kernel drift (a) and conventional-prototype churn while the robust core stays at mass $1$ (b); seed-level flip counts with a 95\% CI show that churn repeats across all 10 seeds (c). \emph{Bottom:} in the single-agent control, every one of 240 checks in each of three seeds exactly matches the true core.}
\label{fig:iql}
\end{figure}

\paragraph{E4: exact population survival tracks $\frac{1}{\eta}$.} A softmax policy-gradient peer with step size $\eta$ erodes $t_1$. Using the exact recursion above after every update, the mean population first-exit updates are $127.9,64.9,33.0,17.1,9.6$ for $\eta=0.25,0.5,1,2,4$ (10 seeds). They are linear in $\frac{1}{\eta}$ with $R^2=0.999986$; the products $\eta E_{\mathrm{exit}}$ range from $32.0$ to $38.4$, with the largest-step deviation attributable in part to integer-time resolution. This is an exact population statistic evaluated along each stochastic training path.

At the scheduled $15$-update evaluations, the $5{,}000$-rollout coverage estimate has RMSE $0.0068$ against exact coverage, and its first-exit update matches the exact grid first exit in all $50$ seed--step-size runs. Its mean absolute exit-time delay relative to the update-wise population exit is $8.9$ updates, exactly equal here to the delay from checking only on the $15$-update grid: there are no additional Monte-Carlo crossing errors. Thus E4 isolates inverse-step-size learning from finite-evaluation resolution and realizes the population lifetime law in the smooth SRC peer family.
\begin{figure}[h]
\centering
\includegraphics[width=0.92\linewidth]{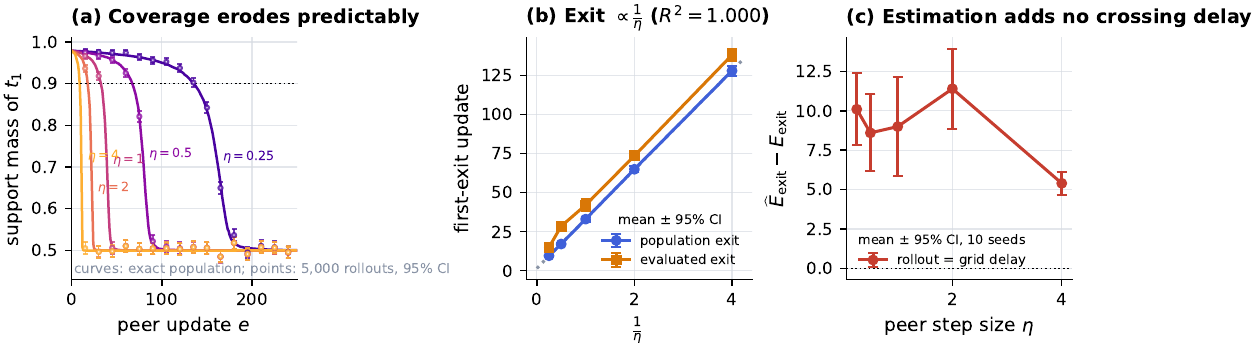}
\caption{\textbf{E4.} \captiona~Exact population coverage after every update (curves) and $5{,}000$-rollout estimates with approximate 95\% conditional-binomial intervals every $15$ updates (points), first seed. \captionb~Population and finite-evaluation first exits against $\frac{1}{\eta}$ (mean and 95\% CI across 10 training seeds); the population relation has $R^2=0.999986$. \captionc~The rollout-estimated and exact-grid delays coincide in all runs, so one curve displays their common mean and 95\% CI.}
\label{fig:pg}
\end{figure}

\paragraph{E5: membership vs.\ executability.} Lowering the peer's hold-probability $\beta$, the prototypes $t_1,t_2$ remain coverage-1 in the $\delta{=}0$ family for all $\beta>0$ (membership intact), yet a fixed option chain calibrated at $\beta{=}1$ drops from success probability $1.0$ to $0.10$ as $\beta{\to}0.05$ (\Cref{prop:exec-drift}).
\begin{figure}[h]
\centering
\includegraphics[width=0.42\linewidth]{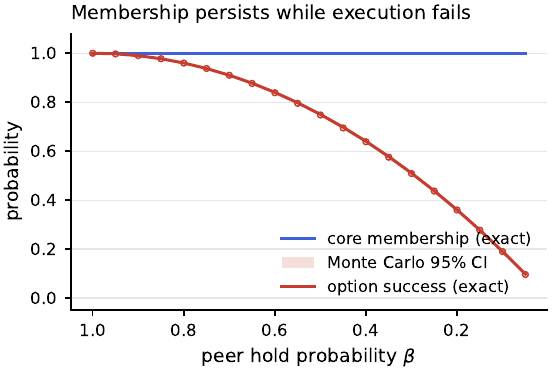}
\caption{\textbf{E5.} Exact membership of $t_1,t_2$ is retained for all $\beta>0$ while a fixed option chain's exact success collapses---executability drift without membership drift. Open points and the 95\% Monte-Carlo band use $40{,}000$ trials.}
\label{fig:exec}
\end{figure}

\paragraph{Effect of the core.} \Cref{fig:coreeffect}\captiona\ measures the fraction of the length-$H$ string space surviving the soundness constraint; since the frontier string has length $\Theta(L)$ it shrinks exponentially in $L$, never excluding an optimum under the full-support probe (\Cref{prop:sound}). \Cref{fig:coreeffect}\captionb\ plots that frontier-string length, and \Cref{fig:coreeffect}\captionc\ contrasts a core-guided option chain (100\% success) with a core-blind random-admissible controller ($\sim$17--28\%).
\begin{figure}[h]
\centering
\includegraphics[width=0.58\linewidth]{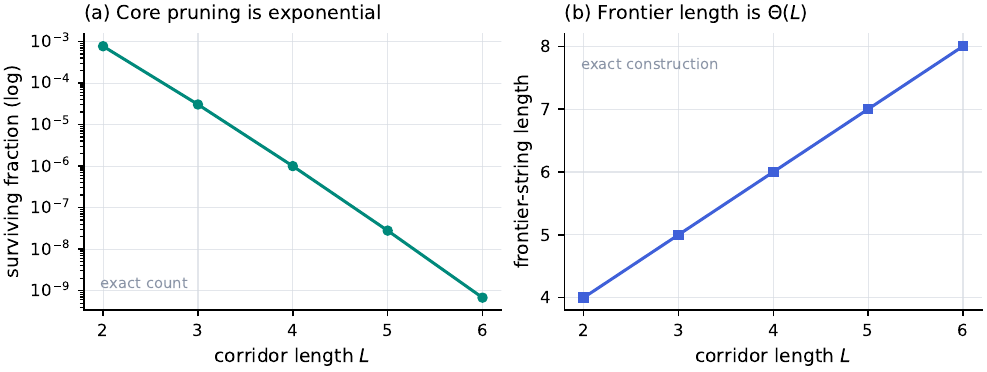}\hfill
\raisebox{0.2em}{\includegraphics[width=0.36\linewidth]{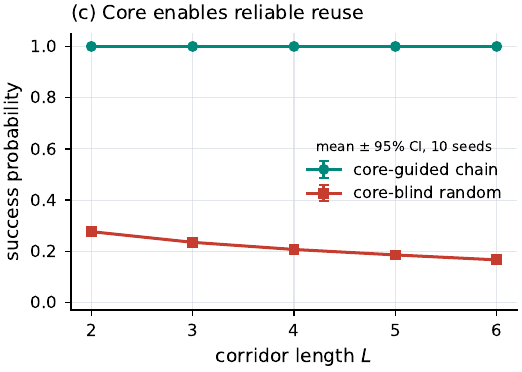}}
\caption{\textbf{Effect of the core.} \captiona~The exact soundness count leaves an exponentially shrinking fraction of trajectory strings as $L$ grows. \captionb~The frontier here is a \emph{single} exact string, so $|\Core^0_\phi(\mu)|=1$; its \emph{length} grows as $\Theta(L)$. \captionc~Core-guided ($100\%$) vs.\ core-blind ($\sim$17--28\%), with means and 95\% CIs across 10 seeds.}
\label{fig:coreeffect}
\end{figure}

\paragraph{Core-computation cost.} The exact multi-sequence LCS cost (DP cells) grows as $O(H^k)$ in the horizon for each number of successes $k$, with the fitted exponent tracking $k$ (\Cref{sec:complexity}).
\begin{figure}[h]
\centering
\includegraphics[width=0.92\linewidth]{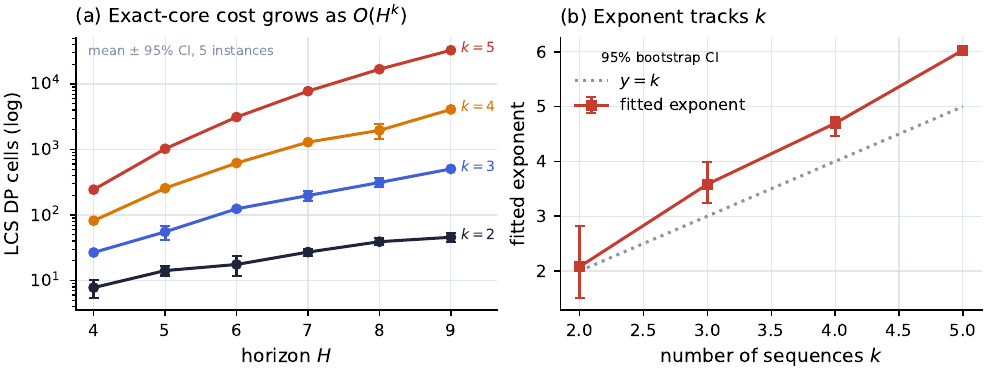}
\caption{\textbf{Core-computation cost.} \captiona~Exact multi-string LCS cost (DP cells) grows as $O(H^k)$ (log scale); bars are 95\% CIs over five random instances per cell. \captionb~The fitted exponent tracks $k$, with 95\% instance-bootstrap intervals, verifying the product-index DP profile of \Cref{sec:complexity} for the support core ($\delta{=}0$).}
\label{fig:corecost}
\end{figure}

\paragraph{Scaling: the effect grows with $N$ and $L$.} Define core fragility as $\frac{10^3}{E_{\mathrm{exit}}}$, where $E_{\mathrm{exit}}$ is the first update at which no length-1 top-route prototype remains in the estimated $\delta{=}0.05$ family. Both fragility and aggregate boundary drift increase monotonically with $L$ and $N$ (\Cref{fig:surface}). The slices in \Cref{fig:survival} show survival $\propto \frac{1}{N}$ ($R^2=0.999$), as predicted by the scalar coupling $q=1-(1-\alpha)^N$ of SRC-$L$-$N$. They also show that full-core survival decreases with $L$.

The bound is never violated at any tested $(L,N)$. \Cref{fig:scalebound} reports the minimum slack over drift levels where the guarantee is non-vacuous. It ranges from $0.009$ at $(L,N)=(2,1)$ to $0.289$ at $(8,8)$. The guarantee becomes more conservative as drift accumulates and becomes vacuous once $\varepsilon_e\ge(1-\delta)p_0$.
\begin{figure}[h]
\centering
\includegraphics[width=0.92\linewidth]{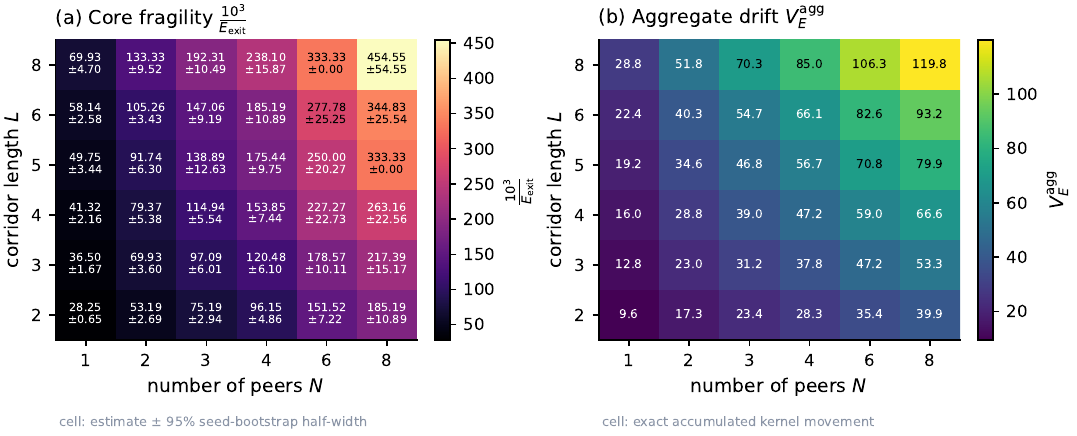}
\caption{\textbf{Scaling grid ($L\times N$).} \captiona~Core fragility $\frac{10^3}{E_{\mathrm{exit}}}$ grows in both $N$ and $L$; cells report the estimate and 95\% seed-bootstrap half-width over 10 seeds. \captionb~Exact aggregate boundary drift grows in both.}
\label{fig:surface}
\end{figure}
\begin{figure}[h]
\centering
\includegraphics[width=0.92\linewidth]{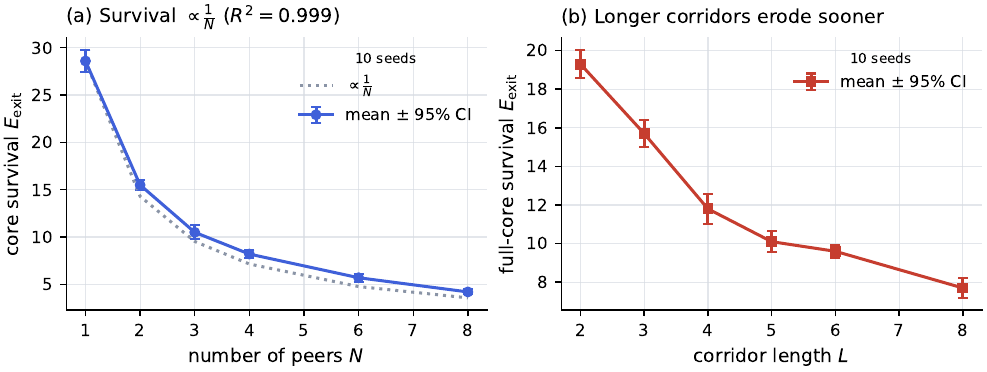}
\caption{\textbf{Scaling slices (10 seeds).} Points and bars are means and 95\% CIs. \captiona~$E_{\mathrm{exit}}\propto \frac{1}{N}$ ($R^2=0.999$). \captionb~Full-core survival decreases as the corridor lengthens.}
\label{fig:survival}
\end{figure}
\begin{figure}[h]
\centering
\includegraphics[width=0.46\linewidth]{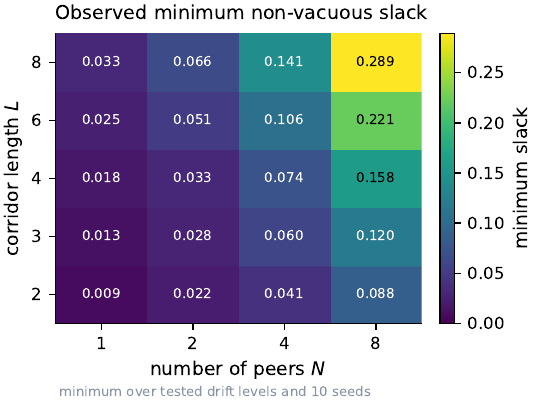}
\caption{\textbf{Bound validity across the grid.} Observed minimum slack (empirical coverage minus the \Cref{thm:stab} guarantee) over tested non-vacuous drift levels and 10 seeds. There are no violations; slack grows with $L$ and $N$. Every displayed cell has at least one tested non-vacuous drift level.}
\label{fig:scalebound}
\end{figure}

\subsection{Direct quantitative validations of theorem-level predictions (E7--E9)}\label{app:theorytests}
E7 tests the predicted sampling rate, while E8 attains the factor one at the trajectory-law level and E9 shows that the kernel-parameterized coefficient is unimprovable uniformly over horizons.

\paragraph{E7: the $\frac{1}{p_0\gamma^2}$ rate.} The plug-in estimator uses only two counts: the number of successes and the number of successes containing $u$. Under i.i.d.\ rollouts, $N_{\suc}\sim\mathrm{Bin}(N,p_0)$, and the occurrence count conditional on $N_{\suc}$ follows $\mathrm{Bin}(N_{\suc},c_\mu(u))$. SRC rollouts verify this reduction (means $0.7128$ vs $0.7132$, standard deviations $0.0092$ vs $0.0094$ at $N{=}4000$).

We then sweep $p_0\in[0.05,0.8]$ and $\gamma\in[0.03,0.08]$. Varying the probe alone moves $p_0$ by under $10\%$, so the direct sweep is needed to exercise the $\frac{1}{p_0}$ factor. For each cell, $N^\star$ is the smallest sample size at which the plug-in rule misclassifies with probability at most $5\%$ over $4000$ repetitions. Across $15$ cells, $N^\star$ spans $126$ to $13{,}934$, while $N^\star p_0\gamma^2$ remains in $[0.574,0.692]$, a spread of $1.21\times$ (\Cref{fig:samplecomplexity}). Direct simulation of the exact sampling law therefore confirms the predicted rate and its stable constant over the tested grid.
\begin{figure}[h]
\centering
\includegraphics[width=0.92\linewidth]{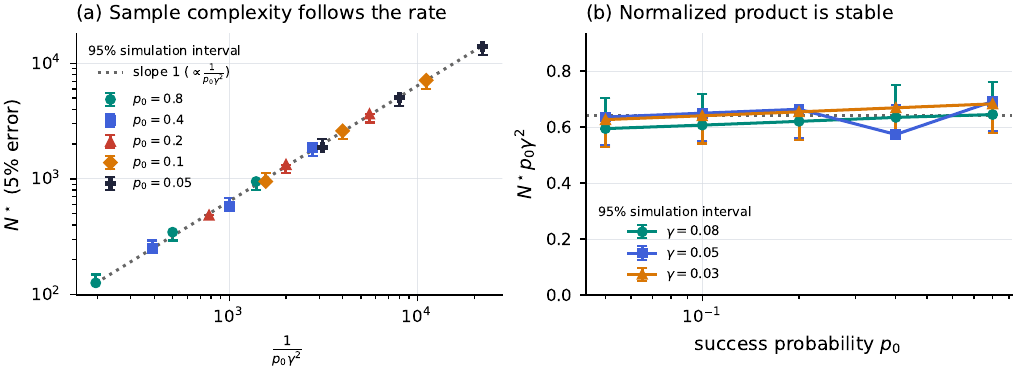}
\caption{\textbf{E7.} \captiona~$N^\star$ against $\frac{1}{p_0\gamma^2}$ on log axes, with a slope-one reference. \captionb~$N^\star p_0\gamma^2$ across all $15$ cells, constant to within $1.21\times$. Bars bracket the 95\% Wilson uncertainty in the simulated 5\% error crossing using $4{,}000$ repetitions per tested $N$.}
\label{fig:samplecomplexity}
\end{figure}

\paragraph{E8: worst-case tightness of \Cref{thm:stab}.} Tightness is measured at the \emph{trajectory-law} level using the actual $\TV(\mu_\alpha,\mu_0)$ over the abstract outcome classes rather than the kernel bound of \Cref{lem:sim}. SRC moves success mass precisely from ``$u$ occurs'' to ``$u$ absent'' while leaving total success mass nearly fixed, which is the equality case. The realized coverage drop never exceeds the guaranteed maximum $\frac{\TV}{p_0}$ at any of the $14$ drift levels, and the ratio reaches $0.996$ at the smallest drift, decaying to $\approx0.66$ as the drift grows (\Cref{fig:tightness}). The factor one is attained in the vanishing-drift limit.
\begin{figure}[h]
\centering
\includegraphics[width=0.92\linewidth]{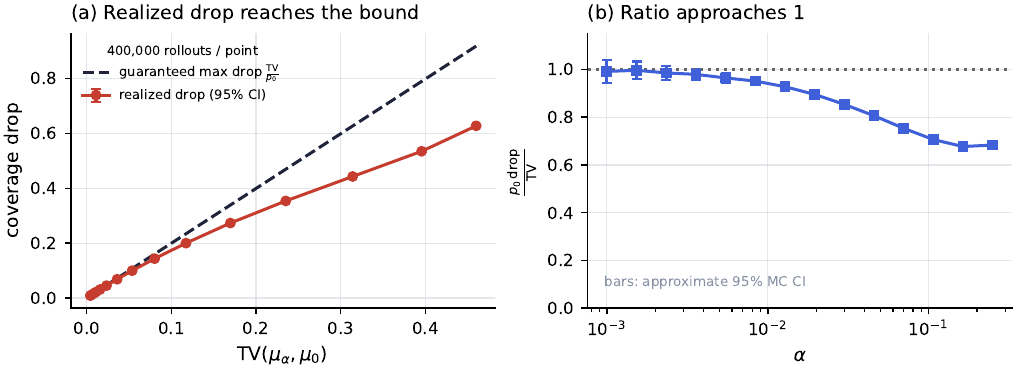}
\caption{\textbf{E8.} \captiona~Realized coverage drop against the guaranteed maximum $\frac{\TV}{p_0}$. \captionb~Their ratio approaches $1$ as drift vanishes and never exceeds it. Bars are approximate 95\% Monte-Carlo intervals from $400{,}000$ rollouts per point.}
\label{fig:tightness}
\end{figure}

\paragraph{E9: tightness in the kernel parameterization.} E8 establishes tightness at the trajectory-law level, where $\varepsilon_e$ is the actual $\TV(\mu_e,\mu_0)$. The kernel-parameterized guarantee, including the telescoping in \Cref{lem:sim}, is tested by an explicit depth-indexed family. In the \emph{cascade} $\mathrm{Casc}(n)$ the focal agent has a single action and positions $t_0,\dots,t_n$ (top), $b_1,\dots,b_n$ (bypass), and $g$. At $t_0,\dots,t_{n-1}$ the peer either holds (continue on top) or, with probability $\alpha$, diverts to the bypass; $t_n$ then enters $g$. Both routes reach $g$, so $p_0=1$, and diverted mass lands on successes lacking $u=(t_1,\dots,t_n)$. A top success has labels $(t_0,\dots,t_n,g_\Sigma)$, hence $H=n+2$, $\norm{P_\alpha-P_0}_{1,\infty}=2\alpha$, and the theorem gives $\varepsilon_\alpha=(n+1)\alpha$. The realized drop is exactly $1-(1-\alpha)^n$, so Bernoulli's inequality keeps it below the guarantee and
\[
\lim_{\alpha\downarrow0}\frac{1-(1-\alpha)^n}{(n+1)\alpha}=\frac{n}{n+1}.
\]
For fixed $n=11$ this limit is $\frac{11}{12}$; across the family it tends to one as $n\to\infty$, reaching $\frac{255}{256}=0.9961$ at $n=255$ (\Cref{fig:kerneltight}). Thus no horizon-uniform coefficient smaller than one can replace the kernel factor in \Cref{thm:stab}.
\begin{figure}[h]
\centering
\includegraphics[width=0.92\linewidth]{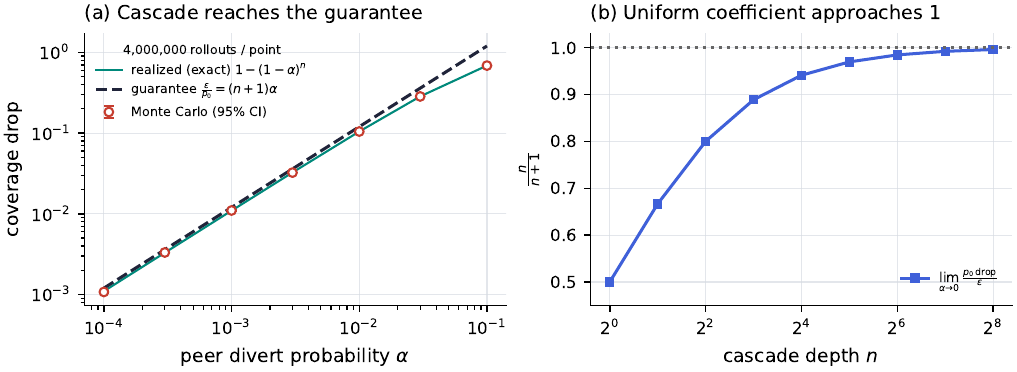}
\caption{\textbf{E9.} \captiona~For $n=11$, realized coverage drop (exact and Monte Carlo with 95\% binomial intervals from $4{,}000{,}000$ rollouts) against the corrected kernel guarantee $\frac{\varepsilon}{p_0}=(n+1)\alpha$. \captionb~The infinitesimal-drift ratio is $\frac{n}{n+1}$ and approaches one with cascade depth, proving that the horizon-uniform coefficient is sharp.}
\label{fig:kerneltight}
\end{figure}

\clearpage
\subsection{Recorded configurations and comparison inputs}\label{app:parameters}
The tables collect the frozen configuration values used for E10--E12. Each independent unit is a complete stream or peer/focal pairing; checkpoint trajectories remain intact under resampling.

\begingroup
\small
\renewcommand{\arraystretch}{1.12}
\begin{longtable}{@{}p{0.27\linewidth}@{\hspace{8pt}}p{0.67\linewidth}@{}}
\toprule
\multicolumn{2}{@{}l}{\textbf{E10: continual control}}\\
\midrule
Plant and reference set & $a=1.08$, $R=0.75$, state limit $2.5$, action limit $0.45$, process-noise standard deviation $0.006$.\\
Controllers and probe & Nominal gain $0.50$, backup gain $2.00$, probing amplitude $0.035$, robust disturbance radius $w_{\max}=0.080$.\\
Costs & Action-cost coefficient $0.015$; backup cost $0.010$ per step.\\
Gain schedule & High gain $1.0$; low gain in $[0.045,0.075]$; ramp lengths $300$--$500$, low plateaus $650$--$950$, stable phases $650$--$1050$ steps; initial stable phase $700$--$1100$.\\
RLS & Initial gain estimate $1.0$, initial covariance $8.0$, forgetting factor $0.985$, estimate clipped to $[0,1.25]$.\\
Monitoring & Every $20$ steps; reward window $100$; healthy/failure reward thresholds $0.90/0.70$; prospective horizon $240$ steps.\\
Switching hysteresis & Coverage enter/exit $0.95/0.985$; return enter/exit $0.82/0.94$; minimum backup duration $80$ steps.\\
Inference & $64$ confirmation streams; $10{,}000$ paired bootstrap resamples, seed $884219$.\\
\midrule
\multicolumn{2}{@{}l}{\textbf{E11: LBF coordination fork}}\\
\midrule
Peer parameterization & Feed-forward logistic route policy; initial route bias $-4.6$, cue weight $1.5$, style logit $-1.0$; initialization standard deviation $0.04$.\\
Peer updates & Sampled REINFORCE with batch size $512$, route learning rate $0.30$, style learning rate $0.35$, $32$ updates per checkpoint; initial anchoring for $16$ updates.\\
Coordination mechanism & Neutral wait-then-follow probe; probe and robust-policy success parameters $0.995$ and $0.970$; mismatch-recovery probability $0.03$; style latency cost $0.015$.\\
Discovery & Four separate pairings, $2000$ episodes each; minimum candidate coverage $0.95$, maximum length $4$, peer-action symbols.\\
Evaluation & Eight development pairings, $72$ checkpoints; separate batches of $500$ probe, $500$ focal-monitoring, and $500$ performance episodes per checkpoint.\\
Thresholds & Nominal confidence $0.95$; coverage membership/recovery $0.90/0.94$; performance healthy/failure $0.90/0.70$; prediction horizon $3$ checkpoints.\\
Inference & $2000$ complete-pairing bootstrap resamples, seed $20260831$; post-hoc switch from absolute risk to erosion on these eight pairings; window matched to the already-frozen three-checkpoint failure horizon.\\
\midrule
\multicolumn{2}{@{}l}{\textbf{E12: continual cue-MNIST}}\\
\midrule
Stream and splits & $300$ checkpoints, four events; $55{,}000$ training examples; disjoint core/shift/performance probes of $2000/500/1000$ examples.\\
Network and optimizer & $784\to64$ tanh$\to10$ logits; $50{,}890$ parameters. Adam $(0.9,0.999)$, $\epsilon=10^{-8}$; weight-matrix L2 regularization.\\
Pretraining & $3500$ supervised updates; batches of $64$ base images with two cue views each; learning rate $0.0015$, L2 coefficient $10^{-5}$.\\
Continual updates & $299$ intervals of $12$ updates, batch size $128$; learning rate $0.001$, L2 coefficient $0.001$; checkpoint zero is pretrained.\\
Cue and image parameters & $10$ spatial $3\times3$ cues, intensity $1.0$; $q_{\mathrm{ref}}(y)=(y+1)\bmod10$; reference/harmful contrast $0.30$, nuisance contrast $0.27$. Harmful cue-label mixture probability $0.98$; benign cues uniform.\\
Nested cue sets & $U_1=\{0,1\}$, $U_2=\{0,\ldots,4\}$, $U_3=\{0,\ldots,9\}$; the primary coverage requires correctness under all ten cues.\\
Monitoring & Three-checkpoint erosion; healthy/failure/recovery accuracy $0.92/0.88/0.92$; future horizon $7$ checkpoints.\\
Replay & Coverage enter/exit $0.90/0.94$; replay fraction $0.50$; intervention cost $0.002$ per checkpoint.\\
Inference & $64$ confirmation streams; $10{,}000$ paired bootstrap resamples, seed $20261211$.\\
\bottomrule
\caption{Recorded E10--E12 configuration values.}\label{tab:recorded-parameters}
\end{longtable}
\endgroup

\paragraph{Information used by the comparisons.}
In E10, the structural monitor combines observed transitions and an RLS estimate of the unknown actuator gain with the specified plant coefficient $a$, nominal feedback gain, reference interval, and disturbance set. The return monitors use reward history, the state-magnitude baseline uses the current state, and transition-surprise CUSUM uses transition residuals. The reported comparison therefore concerns these task-specific monitors with their stated inputs; it does not isolate the contribution of structural coverage from model knowledge. The operational monitor receives neither the true actuator gain nor future drift. The oracle substitutes the true gain in the same viability formula.

In E12, the structural monitor queries counterfactual cues on the core probe, cue association uses the separate shift-monitor split, and the accuracy trigger uses reference-cue accuracy on the same core probe; held-out performance is evaluated separately. Replay-trigger comparisons share the registered replay procedure and differ in intervention timing. These comparisons evaluate trigger choice within a fixed learning setup. Replay-based continual-learning algorithms such as CLEAR \citep{rolnick2019clear}, Bayesian change-point inference \citep{adams2007changepoint}, and partner-population training such as fictitious co-play \citep{strouse2021fcp} address complementary algorithmic questions. They are literature context, not evaluated baselines in these studies.

\paragraph{Which theoretical quantities are tested.}
SRC evaluates conditional coverage, trajectory drift, and first exits directly, including rare-success amplification. E10 instantiates a length-one coverage functional with success mass one; E12 uses a length-one cue-robust event conditional on reference correctness, with success mass $\frac{|B_t|}{2000}$. Their stream experiments test prediction and intervention under exogenous drift. E11 traces peer-induced coverage erosion in a controlled coordination fork. Monitoring these fixed candidates uses event evaluation rather than recomputation of the maximal subsequence frontier.

\paragraph{Artifact scope.}
The supplied source runs the SRC studies and regenerates E10--E12 statistics figures from frozen aggregate and per-unit records. The cue-MNIST training and analysis source is also supplied, together with runnable configurations and source-verification checks. The compact E10 and E11 bundles contain protocols, configurations, and records; their original training drivers and large per-step arrays are not included. E12 implementation details are specified in \Cref{app:cue-mnist}.

\subsection{E10: registered single-agent continual-control confirmation}\label{app:continual-control}
E10 tests a downstream consequence of the focal continual-RL setting represented by \Cref{prop:reduction}: degradation of reusable structure should warn before performance fails, and acting on that warning should preserve performance (\Cref{fig:main-experiments}, top; \Cref{fig:task-schematics}, left). Each seed runs uninterrupted for $18{,}000$ steps:
\[
x_{t+1}=a x_t+b_tu_t+w_t,
\]
where actuator gain $b_t$ undergoes five seed-randomized continuous high-to-low-to-high drift--recovery events. Physical state, recursive-least-squares (RLS) gain estimate, rolling monitors, controller state, and intervention state are never reset. The action is clipped; a nominal controller is inexpensive but loses stability at low gain, while a costed backup remains stable throughout the registered range. Each treatment receives the same drift schedule, process noise, and probing sequence within a seed but evolves its own unbroken state and RLS estimate.

On the monitored reference interval $X=[-R,R]$, clipping is inactive for the nominal controller even at maximal probing amplitude: the registered constants satisfy $k_{\mathrm{nom}}R+a_{\mathrm{probe}}=0.410<0.45=u_{\max}$.  The disturbance radius also satisfies $0\le w_{\max}=0.080<R=0.75$. Action saturation therefore does not alter the nominal feedback law on $X$, and the viability set below is nonempty whenever its displayed radius is positive; it is the one-step viability core of the monitor's estimated linear model.

\paragraph{Structural quantity and prospective target.}
For the fixed reference interval $X=[-R,R]$ and disturbance set $W=[-w_{\max},w_{\max}]$, define the estimated one-step viability core of the nominal feedback controller by
\[
\widehat K_t:=\left\{x\in X:
(a-\widehat b_t k_{\mathrm{nom}})x+w\in X
\ \text{for every }w\in W\right\}.
\]
Its normalized coverage is
\[
\widehat c_t:=\frac{\operatorname{Leb}(\widehat K_t)}{\operatorname{Leb}(X)}
=
\begin{cases}
1,&a-\widehat b_t k_{\mathrm{nom}}=0,\\
\min\!\left\{1,\frac{R-w_{\max}}{R\,|a-\widehat b_t k_{\mathrm{nom}}|}\right\},&\text{otherwise}.
\end{cases}
\]
Thus $\widehat c_t$ is exactly the fraction of $X$ that remains viable under every registered disturbance; operational core risk is $1-\widehat c_t$. The construction below represents this quantity exactly as the coverage of a length-one candidate. The latent $b_t$ appears only in an oracle diagnostic. At a checkpoint with healthy current rolling reward, the positive prospective label is a new downward crossing of the registered failure threshold within the next $240$ steps. Existing failure plateaus and incomplete tail windows are censored, and recovery re-enters the risk set.

\paragraph{Exact correspondence to conditional-event coverage.}
Set $q_t=a-\widehat b_tk_{\mathrm{nom}}$, draw $X_0\sim\operatorname{Unif}(X)$, and let $\mu_t$ be the law of
\[
Z_t=(X_0,q_tX_0-w_{\max},q_tX_0+w_{\max}).
\]
All $\mu_t$ live on the same three-coordinate space.  Take the success event to be that whole space, so $p_t=1$, and define the fixed structural event
$A_{\mathrm{ctrl}}:=\{Z:Z_2,Z_3\in X\}$.  Because an affine image of the disturbance interval attains its extrema at $\pm w_{\max}$,
\[
X_0\in\widehat K_t
\iff |q_tX_0|+w_{\max}\le R
\iff Z_t\in A_{\mathrm{ctrl}}.
\]
Therefore
$\mu_t(A_{\mathrm{ctrl}}\mid\suc)=\Pr[X_0\in\widehat K_t]
=\frac{\operatorname{Leb}(\widehat K_t)}{\operatorname{Leb}(X)}=\widehat c_t$.
Equivalently, form a one-step probe MDP with initial-state distribution $\mu_t$. Its only action is a dummy action, its abstraction emits \textsf{SAFE} exactly on $A_{\mathrm{ctrl}}$ (and \textsf{OTHER} otherwise), and every terminal state is successful. Then $\widehat c_t$ is exactly the \Cref{def:core} coverage of the length-one candidate $(\textsf{SAFE})$. E10 uses this exact coverage as a monitor and tests its downstream predictive and control consequence; it does not estimate a peer-policy drift budget.

\paragraph{Separation, inference, and interventions.}
Thirty-two full development streams (seeds $5000$--$5031$) selected return slope as the strongest of six simple baselines. The plant, drift distribution, estimator, structural quantity, prediction target, thresholds, controllers, and intervention costs were then frozen before running 64 confirmation streams (seeds $0$--$63$). The two registered co-primary contrasts are structural-risk minus return-slope average precision and certificate-trigger minus return-trigger whole-stream mean reward. Their 95\% percentile-bootstrap intervals use $10{,}000$ resamples of the 64 complete streams. Checkpoints, the 320 drift incidents, and time steps are not treated as independent replicates. Besides the return trigger, controls are never switching, an intervention-count-matched random cyclic shift of the certificate schedule, and a latent-gain oracle.

\paragraph{Confirmation results.}
Core risk has average precision $0.5922$, versus $0.3551$ for the frozen return-slope baseline; the paired advantage is $0.2370$ (95\% CI $[0.2233,0.2509]$). Mean warning lead is $83.24$ steps for core risk and $6.07$ for the return signal. Core-triggered control has mean reward $0.99444$, compared with $0.45778$ for the return trigger, $0.46035$ for never switching, $0.53558$ for random-matched intervention, and $0.99447$ for the oracle. The paired core-triggered advantages are respectively $0.53666$ ($[0.53058,0.54281]$), $0.53409$ ($[0.52790,0.54036]$), and $0.45886$ ($[0.43605,0.48086]$). Both registered co-primary lower endpoints exceed zero, so the pre-specified intersection--union rule passes. The core-triggered arm records no failure steps across the 64 streams.

\paragraph{Protocol and reproducibility.}
The experiment validates the downstream core mechanism in a genuinely continual stream: erosion of a monitored structural core anticipates failure, and acting on that erosion preserves performance. The imported compact artifact at \texttt{code/continual\_control\_toy\_v1\_confirmation\_stats/} contains both frozen protocols and configurations, aggregate summaries, every per-stream statistic, integrity decisions, and a SHA-256 manifest; all files pass that manifest. The large per-step arrays remain on the originating cluster. Figure regeneration uses \texttt{code/plot\_continual\_control\_confirmation.py}; the illustrative task image and its generation prompt are stored beside the figures.

\subsection{E11: exploratory peer-induced core erosion in Level-Based Foraging}\label{app:lbf-exploratory}
E11 is an exploratory reanalysis of eight development pairings from a controlled Level-Based Foraging (LBF) coordination-fork experiment (\Cref{fig:main-experiments}, bottom; \Cref{fig:task-schematics}, right). The joint task and rewards remain stationary while a feed-forward logistic route peer is updated by sampled REINFORCE through two right-learning and left-recovery cycles. A fixed neutral focal probe waits for the peer's route and follows it, whereas a separately frozen old-left focal emits its original cue. Their monitoring and held-out performance rollouts are disjoint.

The independently discovered peer sequence is \textsf{ENTER-LEFT-GATE}, \textsf{LEFT-READY}, \textsf{LOAD-LEFT}. Four discovery pairings use $2{,}000$ episodes each, and all discovery and baseline-coverage gates pass. Each learned-peer pairing contains 72 peer checkpoints, with 500 probe, 500 focal-monitoring, and 500 performance episodes per checkpoint. An additional style-learning block changes action and occupancy statistics without changing the route convention, providing a nuisance-drift control.

\paragraph{Registered result and exploratory reanalysis.}
Let $L_t$ denote the Wilson lower confidence bound on candidate coverage. The registered V3 primary predictor was absolute structural risk, $1-L_t$. It obtained mean AP $0.5260$, below the one-checkpoint monitoring-return slope baseline at $0.7428$; the registered paired contrast was $-0.2168$ with a 95\% pairing-bootstrap interval $[-0.2884,-0.1520]$, so the registered prediction gate failed.

Here the correspondence to \Cref{def:core} is direct: at peer checkpoint $t$, $\mu_t$ is the neutral focal's fixed-probe trajectory law, $\suc$ is successful coordination, and $u=(\textsf{ENTER-LEFT-GATE},\textsf{LEFT-READY},\textsf{LOAD-LEFT})$.  The monitored population target is $c_{\mu_t}(u)$ and $L_t$ is its finite-rollout lower confidence bound.  Peer learning changes $\mu_t$ while the focal probe, abstraction, success event, and candidate remain fixed.

After examining the same eight pairings, the reanalysis replaced the registered absolute-risk predictor with active erosion. Its three-checkpoint window matches the already-frozen prospective failure horizon:
\[
s_t^{(3)}=\max\!\left\{0,L_{t-3}-L_t\right\}.
\]
The prospective label is a new downward crossing of old-focal held-out success below $0.70$ within the next three checkpoints, restricted to currently eligible checkpoints. The erosion score measures a windowed coverage change; \Cref{prop:window-erosion} gives its directional-progress and estimation-error decomposition. It attains mean AP $0.9712$, versus $0.7428$ for return slope, and exceeds that baseline in all eight pairings. The paired advantage is $0.2284$ (pairing-bootstrap 95\% CI $[0.1707,0.2773]$). Windows of one through five checkpoints give mean AP $0.9095,0.9653,0.9712,0.9626,0.9252$, respectively, all above return slope. The neighboring-window results are sensitivity diagnostics. The pairing-bootstrap intervals condition on the erosion predictor and do not account for the post-hoc change of predictor. Because erosion was evaluated on the same development pairings that informed this change, these results are exploratory and motivate confirmation on new pairings.

\paragraph{Protocol and reproducibility.}
The exploratory pattern contains the full downstream chain in a multi-agent environment: peer learning changes the induced focal problem, a fixed-probe trajectory core erodes, and old-partner performance later fails. The imported artifact at \texttt{code/lbf\_v3\_core\_erosion\_exploratory\_stats/} contains the frozen V3 protocol and configuration, all eight raw trajectory summaries, the registered result, the post-hoc aggregate and per-pairing reanalysis statistics, and SHA-256 checksums. Figure regeneration uses \texttt{code/plot\_lbf\_core\_erosion\_exploratory.py}; the illustrative task image and its complete generation/edit prompt set are stored beside the figures. The reported APs and pairing-bootstrap intervals were independently recomputed from the supplied summaries.

\subsection{E12: registered continual cue-MNIST confirmation}\label{app:cue-mnist}
E12 trains a persistent classifier through four randomized shortcut-learning and recovery cycles. The reported confirmation comprises 64 supervised streams (seeds $8200$--$8263$); model and optimizer state persist across phases.

\paragraph{Images, cues, and network.}
Each normalized $28\times28$ MNIST image is multiplied by contrast $0.30$, then a single $3\times3$ patch of intensity one is overwritten at one of ten cue locations. Cue identities $0$--$4$ use zero-based top-left coordinates $(1,1),(1,7),(1,13),(1,19),(1,25)$; identities $5$--$9$ use the same columns in row $25$. Nuisance phases reduce contrast to $0.27$. The input remains a flattened $784$-pixel image. The classifier has architecture $784\to64\ (\tanh)\to10$ logits, with $50{,}890$ trainable parameters and argmax evaluation. Entries of $W_1$ and $W_2$ are initialized independently as $\mathcal N(0,\frac{2}{784})$ and $\mathcal N(0,\frac{2}{64})$, respectively; biases are zero and computations use float32. During harmful phases, the cue equals the digit label with mixture probability $0.98$ and is otherwise uniform; hence cue--label agreement has probability $0.982$. Benign phases use independent uniform cues. Digit labels never change.

\paragraph{Training and disjoint data.}
The first $55{,}000$ MNIST training images supply learning batches. A seed-specific permutation of the remaining $5000$ provides disjoint fixed core and shift probes of $2000$ and $500$ images; a separate fixed $1000$-image subset of the official test set measures performance. Supervised pretraining uses $3500$ updates with batches of $128$ rendered images: $64$ base images are drawn with replacement, each receiving two distinct cues $j$ and $(j+1)\bmod10$. Pretraining minimizes mean cross-entropy at learning rate $0.0015$ with weight-matrix L2 coefficient $10^{-5}$. Continual learning uses the same loss, batch size $128$, learning rate $0.001$, and L2 coefficient $0.001$; biases are unregularized. Adam uses $(\beta_1,\beta_2)=(0.9,0.999)$, $\epsilon=10^{-8}$, and bias correction. Each treatment copies both pretrained parameters and Adam state. Checkpoint zero records the pretrained model; the remaining $299$ intervals contain $12$ updates each ($3588$ continual updates).

\paragraph{Success-conditioned cue coverage.}
Write $f_t(x,j)$ for the predicted label after rendering image $x$ with cue $j$ at reference contrast, and set $q_{\mathrm{ref}}(y)=(y+1)\bmod10$. Reference-cue accuracy therefore uses a label-dependent cue, not an unmodified image. The nested probe sets are $U_1=\{0,1\}$, $U_2=\{0,\ldots,4\}$, and $U_3=\{0,\ldots,9\}$. At each checkpoint, recompute
\[
B_t:=\{i:f_t(x_i,q_{\mathrm{ref}}(y_i))=y_i\},\qquad
\widehat c_t:=\frac{1}{|B_t|}\sum_{i\in B_t}
\Ind\!\left[\forall j\in U_3,\ f_t(x_i,j)=y_i\right].
\]
Thus $\widehat c_t$ measures correctness under \emph{all ten} cues among currently reference-correct examples. For $n_t=|B_t|>0$, the operational score is the lower endpoint of a nominal two-sided $95\%$ Wilson interval:
\[
C_t=\frac{\widehat c_t+\frac{z^2}{2n_t}
-z\sqrt{\frac{\widehat c_t(1-\widehat c_t)}{n_t}+\frac{z^2}{4n_t^2}}}
{1+\frac{z^2}{n_t}},
\qquad z=1.959963984540054.
\]
If $B_t=\varnothing$, conditional coverage is undefined (NaN in the implementation) and the operational fallback is $C_t=0$. This case never occurs in the $19{,}200$ recorded untreated confirmation checkpoints: $|B_t|\ge1040$. Since $q_{\mathrm{ref}}(y)\in U_3$, unconditional compatible mass is $\frac{|B_t|}{2000}\widehat c_t$. Each checkpoint evaluates $2000$ reference renderings and $20{,}000$ cue variants, representing $2000$ base examples; repeated renderings are not independent observations. The fixed probe supplies repeated monitoring, while inference on prediction and intervention resamples complete streams.

\paragraph{Exact correspondence to conditional-event coverage.}
Draw an index $I$ uniformly from the fixed $2{,}000$-example probe and define
\[
Z_t=\left(y_I,f_t(x_I,q_{\mathrm{ref}}(y_I)),(f_t(x_I,c))_{c\in U_3}\right).
\]
Let $\mu_t$ be the empirical law of $Z_t$ on this common coordinate space.  Define the fixed events
\[
\suc_{\mathrm{cue}}:=\{Z_{\mathrm{ref}}=Z_y\},
\qquad
A_{\mathrm{cue}}:=\suc_{\mathrm{cue}}\cap\{Z_c=Z_y\ \text{for every }c\in U_3\}.
\]
Then $\frac{|B_t|}{2000}=\mu_t(\suc_{\mathrm{cue}})$ and the displayed estimator satisfies
\[
\widehat c_t=\mu_t(A_{\mathrm{cue}}\mid\suc_{\mathrm{cue}}).
\]
Form a one-step probe MDP with initial-state distribution $\mu_t$. Its only action is a dummy action, its goal set is $\suc_{\mathrm{cue}}$, and its abstraction emits \textsf{CUE-ROBUST} exactly on $A_{\mathrm{cue}}$ (and \textsf{OTHER} otherwise). Its \Cref{def:core} coverage for the length-one candidate $(\textsf{CUE-ROBUST})$ is exactly $\widehat c_t$. The network update changes $\mu_t$ while the probe indices, cues, coordinate events, success set, abstraction, and candidate remain fixed. E12 tests whether erosion of this coverage predicts failure and whether intervention triggered by its lower bound preserves performance.

\paragraph{Prospective prediction and intervention.}
The frozen prediction score is $s_t=\max\{0,C_{t-3}-C_t\}$, with $s_t=0$ for $t<3$, as analyzed in \Cref{prop:window-erosion}. Its target is a reference-cue accuracy downcrossing below $0.88$ within seven checkpoints, evaluated from healthy checkpoints at or above $0.92$; the frozen comparator is cue--label association. Replay instead uses the \emph{level} $C_t$: it activates below $0.90$ and deactivates above $0.94$. The accuracy comparator uses reference-cue accuracy on the same core probe, with enter/exit thresholds $0.88/0.92$. The decision at checkpoint $t$ controls the next training interval. Active replay replaces the first $64$ renderings of each $128$-image batch with independently random-cued renderings of those same base images at reference contrast. On the separate performance split, let $A_{\mathrm{ref},t}$ and $A_{\mathrm{current},t}$ denote argmax accuracies under reference and current cues. The reported balanced reward is the whole-stream mean of
\[
R_t=\frac{A_{\mathrm{ref},t}+A_{\mathrm{current},t}}{2}
-0.002\,\Ind\{\text{replay active in the interval ending at }t\}.
\]
The configuration, score, comparator, thresholds, seed list, treatment rules, and analysis were frozen before confirmation. Paired $95\%$ intervals use $10{,}000$ nonparametric bootstrap resamples of the 64 complete streams. Checkpoints, incidents, and images remain within-stream observations.

\paragraph{Confirmation results.}
Core erosion has mean AP $0.3842$, versus $0.3450$ for cue--label association, for a paired advantage of $0.0392$ (95\% CI $[0.0224,0.0555]$). Mean structural warning lead is $4.68$ checkpoints. Core-triggered replay has mean balanced reward $0.94995$, versus $0.93824$ for the accuracy trigger, $0.92836$ for matched-random replay, and $0.89602$ for never intervening; the oracle obtains $0.95019$. The registered core-over-accuracy advantage is $0.01171$ ($[0.01131,0.01212]$). Both co-primary lower endpoints exceed zero, the core raises no nuisance alarms, and every registered supervised integrity and premise gate passes (\Cref{fig:mnist-confirmation}).

\begin{figure}[h]
\centering
\includegraphics[width=0.92\linewidth]{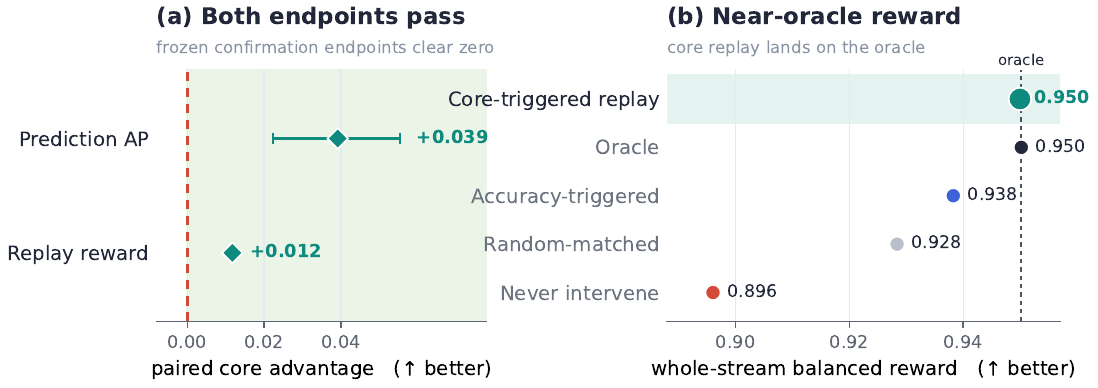}
\caption{\textbf{E12: continual cue-MNIST confirmation (64 complete streams).} \captiona~Registered paired advantages with 95\% intervals from $10{,}000$ complete-stream bootstrap resamples. \captionb~Core-triggered replay reaches near-oracle balanced reward and exceeds the accuracy-triggered, matched-random, and never-intervene controls.}
\label{fig:mnist-confirmation}
\end{figure}

\paragraph{Protocol and reproducibility.}
The compact artifact at \path{code/continual_cue_mnist_v4_confirmation_stats/} contains the original supervised protocol, configuration snapshot, aggregate summary, and 64 per-stream records with SHA-256 provenance. The implementation at \path{code/continual_mnist/}, runnable configurations at \path{code/continual_mnist_configs/}, and accompanying implementation note specify the network and rendering procedure. The note corrects the compact protocol's \emph{three-cue} wording to the third nested set, containing ten cues, and identifies the stored \texttt{clean\_accuracy} field as reference-cue accuracy. The original protocol and numerical records are retained. Source fingerprints match the freeze manifest and its documented numerical addendum; the latter repairs categorical sampling only in the nonprimary policy-gradient arm and leaves the 64 supervised streams unchanged. Figure regeneration uses \path{code/plot_cue_mnist_confirmation.py}; aggregate means were independently recomputed from the packaged records.

\begin{figure}[h]
\centering
\begin{minipage}[c]{0.48\linewidth}
  \centering
  \includegraphics[width=0.78\linewidth]{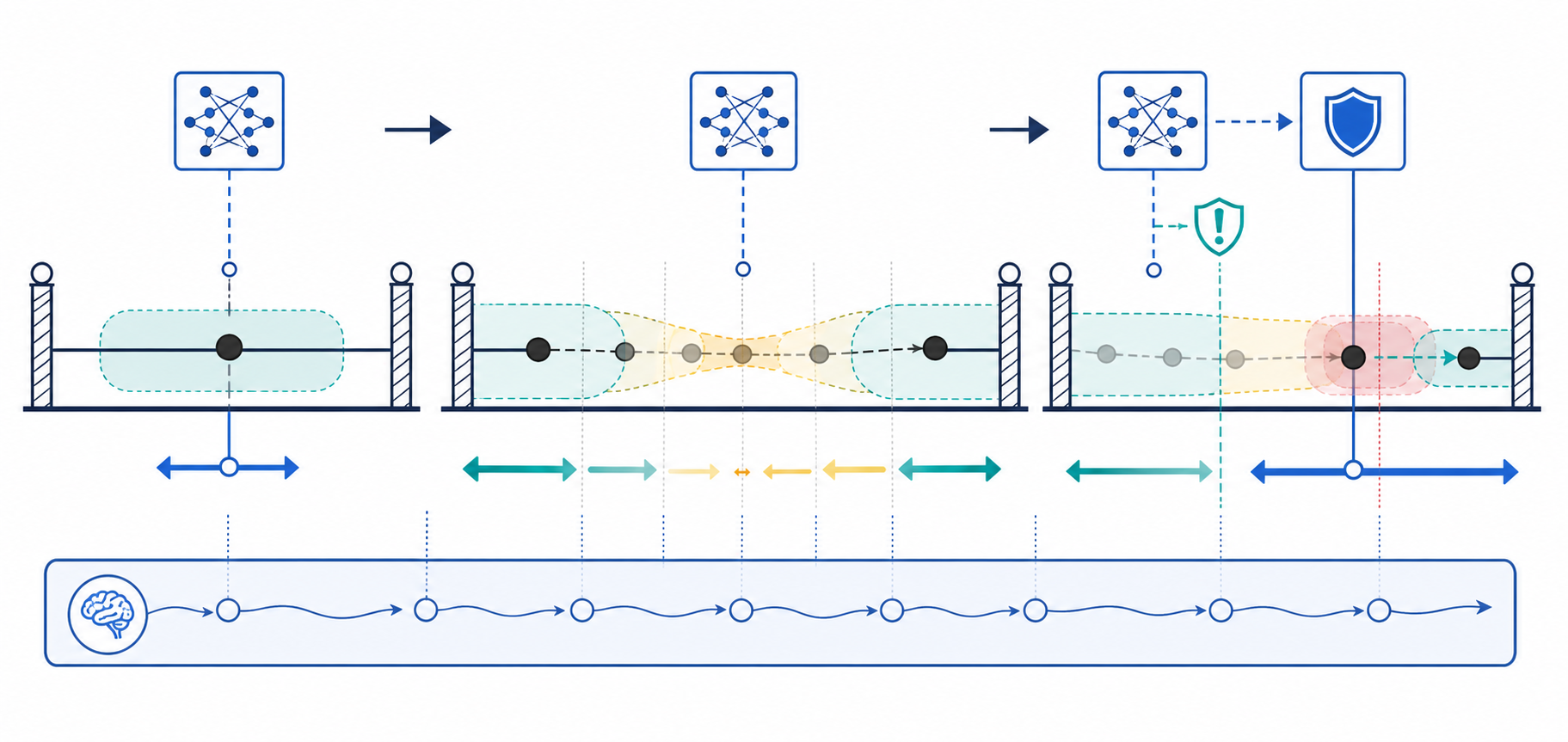}
\end{minipage}\hfill
\begin{minipage}[c]{0.48\linewidth}
  \centering
  \includegraphics[width=0.78\linewidth]{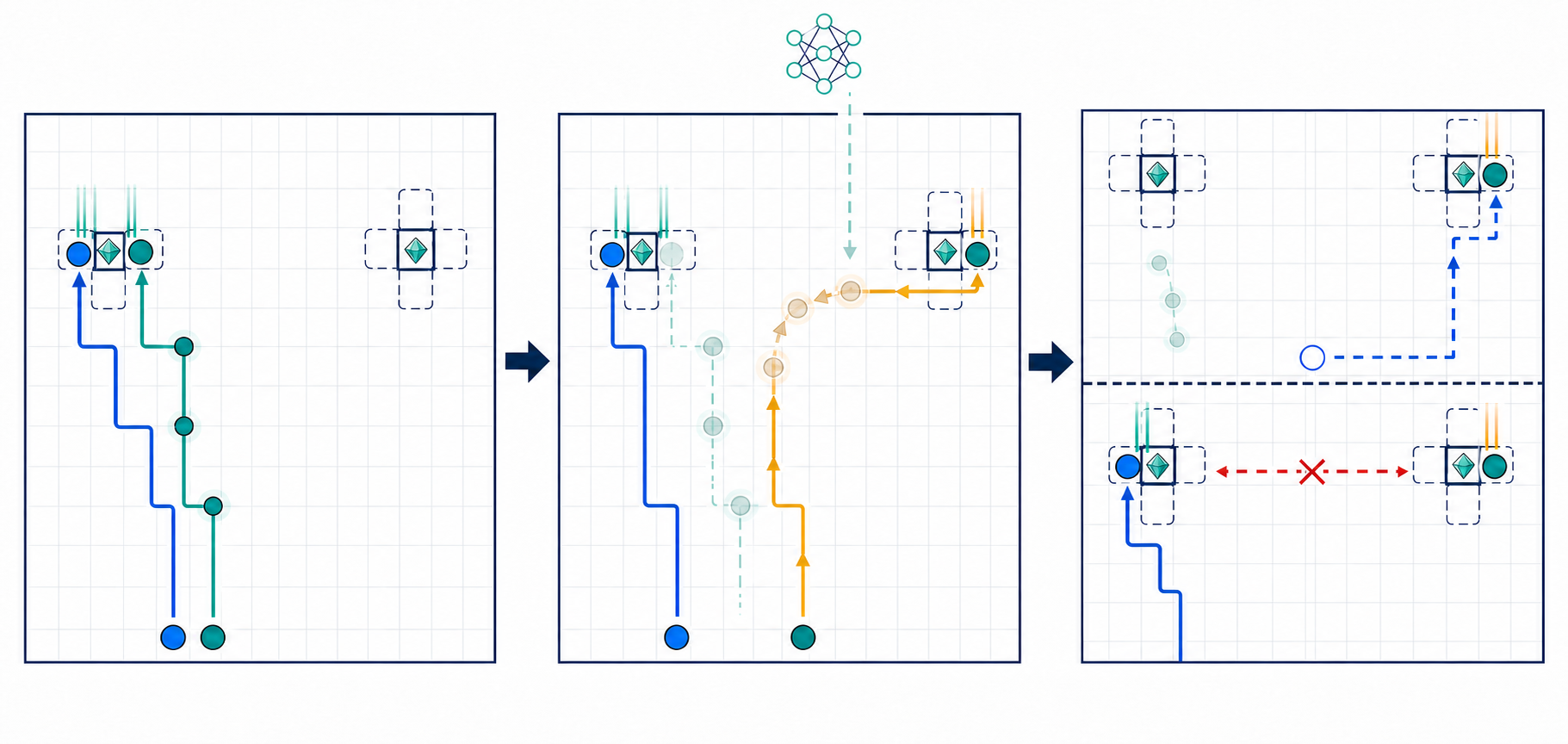}
\end{minipage}
\caption{\textbf{Task schematics.} \captiona~E10 persistent continual control: actuator drift erodes the nominal controller's viability core, which drives intervention. \captionb~E11 LBF coordination fork: peer learning moves behavior between the left and right conventions while the focal policy remains fixed. Illustrations are explanatory; quantitative results appear in \Cref{fig:main-experiments}.}
\label{fig:task-schematics}
\end{figure}

\FloatBarrier
\enlargethispage{2\baselineskip}
\section{LLM Usage Statement}
Language models assisted with developing and editing mathematical proofs by proposing proof outlines and intermediate lemmas, expanding implicit derivations into explicit steps, examining assumptions and edge cases, and providing adversarial review of theorem statements and arguments. They also assisted with copy-editing, \LaTeX{} formatting, figure scripting, terminology lookup, literature discovery across neighboring fields, and non-data task schematics. The author directed, challenged, and revised this assistance, steered the mathematical development, selected and refined the final arguments, and remains responsible for the manuscript's claims, citations, proofs, empirical results, and contents.

\end{document}